\newtheorem{theorem}{Theorem}
\newtheorem{corollary}{Corollary}
\newtheorem{lemma}{Lemma}
\newcommand{\ie}{\textit{i.e.}}
\newcommand{\eg}{\textit{e.g.}}
\newcommand{\focus}[1]{}
\title{Curriculum-Enhanced Residual Soft An-Isotropic Normalization for Over-smoothness in Deep GNNs}
\author{
    Jin Li\textsuperscript{\rm 1, \rm 2},
    Qirong Zhang\textsuperscript{\rm 1},
    Shuling Xu\textsuperscript{\rm 1},
    Xinlong Chen\textsuperscript{\rm 1},
    Longkun Guo\textsuperscript{\rm 1, \rm 3},
    Yang-Geng Fu\textsuperscript{\rm 1\thanks{Corresponding author}}
}
\begin{document}

\maketitle

\begin{abstract}
Despite Graph neural networks' significant performance gain over many classic techniques in various graph-related downstream tasks, their successes are restricted in shallow models due to over-smoothness and the difficulties of optimizations among many other issues. In this paper, to alleviate the over-smoothing issue, we propose a soft graph normalization method to preserve the diversities of node embeddings and prevent indiscrimination due to possible over-closeness. Combined with residual connections, we analyze the reason why the method can effectively capture the knowledge in both input graph structures and node features even with deep networks. Additionally, inspired by Curriculum Learning that learns easy examples before the hard ones, we propose a novel label-smoothing-based learning framework to enhance the optimization of deep GNNs, which iteratively smooths labels in an auxiliary graph and constructs many gradual non-smooth tasks for extracting increasingly complex knowledge and gradually discriminating nodes from coarse to fine. The method arguably reduces the risk of overfitting and generalizes better results. Finally, extensive experiments are carried out to demonstrate the effectiveness and potential of the proposed model and learning framework through comparison with twelve existing baselines including the state-of-the-art methods on twelve real-world node classification benchmarks.
\end{abstract}

\section{Introduction}
\label{Sec:Introduction}
\focus{Part 1: GNN: introduction, applications, short history}
\focus{Introduction to GNNs:}Graph neural networks (GNNs) \cite{p1_survey} are widely used state-of-the-art techniques to solve many tasks on graph (\eg, semi-supervised node classification \cite{p69_semi_supervised_node_classification}, link prediction \cite{p67_Neo-gnns_link_prediction}, graph classification \cite{p68_graph_classification}, and community detection \cite{p70_deep_community_detection}, etc).
\focus{Their applications:}Also, GNNs have achieved outstanding results recently in many domains including texts \cite{p95_GNN_for_nlp_question_generation}, images \cite{p96_GNN_for_image_classification}, traffic \cite{p97_GNN_for_traffic}, molecule \cite{p94_GNN_for_molecule}, and even electroencephalogram (\ie, EEG) \cite{p33_EEG} compared to classic methods (\eg, CNN and RNN).
\focus{Their short history:}They are previously derived in spectral domain based on the eigen-decomposition of graph Laplacian (\eg, Spectral-CNN~\cite{p2_spectralCNN} and ChebNet~\cite{p119_ChebNet}).
Graph Convolutional Network (GCN)~\cite{p4_GCN} is proposed to accelerate it via a linear approximation of universal filters on graph signals and also give an intuitive interpretation on spatial domain, \ie,  \textit{message passing}, which is further developed by SGC~\cite{p37_SGC}, GAT~\cite{p8_GAT}, GIN~\cite{p7_GIN}, and MPNN~\cite{p36_MPNN}.

\focus{Part 2: deep graph and limitation issues:}
\focus{why deep?}More recently, some deep GNNs \cite{p39_GCNII,p46_deep_deepgcns} are proposed to further improve the expressive power of GNNs in light of successes of other deep neural networks, \eg, CNN and RNN.
\focus{why cannot deep?}But unfortunately, GNNs are not easy to go deep and often suffer from severe performance degradation due to, \eg, over-smoothness \cite{p51_towards_deeper_gnn}, difficulty in optimization \cite{p55_meannorm}, memory limitation \cite{p64_training_grakph_neural_networks_1000}, time consumption \cite{p64_training_grakph_neural_networks_1000}, and over-squashing \cite{p35_over-squashing}.
\focus{What are our works?}In this paper, we mainly focus on the first two problems, \ie, improvements will be devised with respect to the following two aspects: 1) structures; and 2) the learning process.

\focus{Part 3: Existing approaches to solving over-smoothing:}
\focus{Summarization}In order to alleviate over-smoothness, various kinds of techniques \cite{p52_deeper_gnn_tricks_overview} are proposed to prevent over-closeness of node embeddings or reduce the extent of aggregations including graph normalization \cite{p54_nodenorm}, residual connections \cite{p39_GCNII}, and random dropping \cite{p57_dropnode}, etc.
\focus{Their strengths and concerns?}\focus{graph normalization:}Existing graph normalization techniques directly operate with norms, means, variances or distances of embeddings, and can effectively reduce the over-closeness.
However, it's still unclear what or how much knowledge they can preserve in deep layers via only these numeric-related operations.
\focus{residual connections:}Some residual connections methods are proven to keep useful feature semantics as GNNs go deep, but they may risk missing structural knowledge, \eg, GCNII \cite{p39_GCNII}.
\focus{random dropping:}Random dropping approaches are devised mainly for regularization without theoretical guarantees for alleviating information loss in over-smoothness and seldom perform competitively compared to other state-of-the-arts.

\focus{Part 4: How do we alleviate over-smoothing issue?}
\focus{The categorization of our solution in structure and its short introduction:}Thus for structures, we propose \textit{R-SoftGraphAIN}, a residual connections-based soft graph normalization layer, which can be viewed as a combination of a \textit{novel} soft graph normalization operation and two \textit{improved} residual connections.
\focus{effects compared to other methods:}Compared to Pairnorm \cite{p53_pairnorm}, it can normalize embeddings in an an-isotropic manner instead of equally treating all nodes.
Compared to GCNII, it can be shown to preserve relatively high frequent structural knowledge instead of over-emphasizing features and can be viewed as a generalization of GCNII.
\focus{theoretical justification:}In Sec. \ref{Sec:ResGraphAIN},  theoretical analysis is carried out to show its following characteristics as the depth approaches infinity: 1) The mean distance of pairwise node embeddings will be kept nearly constant similar to Pairnorm; 2) The diversities of these signals are maximized; 3) It tends to extract the most $d$ lowest frequent components of structural knowledge; 4) It never forgets the original feature's information.
\focus{Experimental evaluations:}Experimental evaluations prove its effectiveness due to significant performance gain compared to others.

\focus{Part 5: How we ease the training process of deep GNNs?}On the other hand, in order to ease the optimization of deep GNNs, we borrow ideas from Curriculum Learning (CL) \cite{p74_curriculum_learning_survey_1,p75_curriculum_learning_survey_2}.
\focus{The basic idea in Curriculum Learning:}In CL, models are encouraged to first learn from easy examples  and then examples with   gradually increased difficulty \cite{p76_original_curriculum_learning_2009_examples_from_easy_to_hard}.
\focus{More general ideas in Curriculum Learning and their applications in many domains:}The idea has been generalized to devise better curriculum applied to various scenarios in many domains including texts and images via, \eg, designing multifarious tasks with increasing difficulties \cite{p78_curriculum_increase_task_difficult}, gradually unleashing expressive powers of models \cite{p77_curriculum_by_gaussian_temperature_smooth_2020_increase_model_capacity_or_expressiveness}, or defining \textit{pacing} functions to decide to which extent to learn from a task \cite{p83_curriculum_with_scoring_and_pacing_functions_2019}.
\focus{Transition to our learning framework:}However, graphs contain specific structures and semi-supervised learning has a special setting, which may require a careful curriculum design, but few prior works focus on this (see Sec. \ref{Sec:Related_work_Curriculum_Learning}).
Therefore in this paper, we give a simple yet effective label-smooth-based example called \textit{SmoothCurriculum}.
\focus{Our learning framework:}More specifically, we first employ \textit{label propagation} to estimate unknown labels, and all labels are iteratively smoothed in an auxiliary graph built via a pre-trained teacher model in order to construct gradual non-smooth tasks.
\focus{Effects of our learning framework:}From the analysis in Sec. \ref{Sec:SmoothCurriculum}, this learning framework encourages graph encoders to extract increasingly complex knowledge and learn to gradually discriminate nodes from coarse to fine\footnote{E.g., for collecting residential information of a person,   first the information of the country and then  the city  he lives in will be collected.}, which intuitively emphasizes relatively global knowledge and alleviates possible label noise, thus reducing the risk of overfitting and generalizing better.
\focus{Experimental evaluation}Obvious performance improvements across various real-world datasets reveal the potential of this simple framework.

\focus{Part 6: summarizing contributions of this paper:}The contribution of this paper can be summarized as follows:
\begin{itemize}
    \item We propose a residual connections-favored soft graph normalization structure (called \textit{R-SoftGraphAIN}) for preserving knowledge from both input graph topology and features and  retaining the diversities of node embeddings in deep layers to consequently alleviate over-smoothness.
    \item We design a novel label-smoothing-based curriculum learning framework (called \textit{SmoothCurriculum}) to ease the difficulty of optimization of deep GNNs and better their generalization  via implicit coarse-to-fine node discrimination.
    \item   Extensive experiments were carried out to demonstrate the effectiveness and potential of our method compared to twelve existing baselines including state-of-the-arts.
\end{itemize}

\section{Preliminaries and Related work}
\label{Sec:Related_Work}

\paragraph{Notation}
\label{Notation}

Let $G=(V,E)$ be an undirected graph with node set $V$ and edge set $E$, where $n=|V|, m=|E|$ represent the numbers of its nodes and edges respectively.
We denote by $A \in \{0,1\}^{n\times n}$ and $X \in \mathbb{R}^{n \times d}$ its adjacency and feature matrix where node $i$ has feature $x_i = X_{i,:} \in \mathbb{R}^d$ and a ground-truth label $y_i=Y_i\in \mathbb{N}$.
Define $I_n \in \mathbb{R}^{n \times n}$ as an identity matrix, $\mathbf{0_n}, \mathbf{1_n}\in \mathbb{R}^{n \times 1}$ as all-zero/one vectors.

\paragraph{GCN and SGC}
\label{Sec:Notation_And_Preliminaries_GCN_and_SGC}

GCN can be formulated as follows:
\begin{equation}
\begin{split}
H^{(0)}  = X, \quad
H^{(l+1)}  = \sigma \left(\hat{A} H^{(l)} W^{(l)}\right) \in \mathbb{R}^{n \times d}.
\end{split}
\end{equation}
SGC simplifies GCN by dropping its non-linear activation functions and its forward pass can be described as follows:
\begin{equation}
    \begin{split}
        H^{(l)} = \hat{A}^l X W \in \mathbb{R}^{n \times d}, \quad \forall l \in [0,L),
    \end{split}
\end{equation}
where $L$ is the number of layers and $\sigma(\cdot)$ denotes a non-linear activation function (\eg, ReLU, or Softmax for the last layer).
$\hat{A} = \tilde{D}^{-\frac{1}{2}} \tilde{A} \tilde{D}^{-\frac{1}{2}}$ is the symmetrically normalized matrix of $\tilde{A} = A + I_n$ and diagonal matrix $\tilde{D}_{i,i} = \sum_{j=1}^n \tilde{A}_{i,j}$.
Sometimes, the probability transition matrix $\hat{A}_{rw} = \tilde{D}^{-1} \tilde{A}$ is employed for aggregating.
$H^{(l)} \in \mathbb{R}^{n \times d}$ and $W,W^{(l)} \in \mathbb{R}^{d \times d}$ represent the embeddings and the trainable parameters.

\paragraph{Deep GNNs and Over-smoothness}
\label{Sec:Related_work_GNNs_and_Over_smoothness}
\focus{Shortcomings of shallow GNNs:}Majority of GNN variants are shallow networks (\eg, no more than three layers), thus restricting their expressive power and limiting distant message passing.
\focus{introduction to prior works}Some prior works make efforts to deepen GNNs via modifications or tricks which can be categorized into three classes: residual connections, graph normalization (\eg, Pairnorm \cite{p53_pairnorm}, Nodenorm \cite{p54_nodenorm}, Meannorm \cite{p55_meannorm}), and random dropping (\eg, DropEdge \cite{p56_dropedge} and DropNode \cite{p57_dropnode}).
\focus{Specifically introducing residual connections:}Residual connections contain common residual connection (from last layer) \cite{p46_deep_deepgcns}, initial connection (from the first layer) \cite{p39_GCNII}, dense connection (from every layer) \cite{p51_towards_deeper_gnn,p49_break_the_ceiling}, and jump connection (from every layer to the last layer only) \cite{p50_JKNet}.
See supplementary materials or \cite{p52_deeper_gnn_tricks_overview} for some more related work or a more detailed survey.
\focus{Links to our work:}Our method appropriately combines \textit{improved} residual connections and a \textit{novel} soft graph normalization enabling effective feature and structural knowledge extraction and preservation even with a sufficiently large depth.

\paragraph{Curriculum Learning}
\label{Sec:Related_work_Curriculum_Learning}
\focus{short introduction and applications in curriculum learning:}CL has become a popular kind of training strategies for networks in many applications including texts
\cite{p81_curriculum_for_text_2020}, images \cite{p80_curriculum_for_image_classification_CV_2020}, speeches \cite{p82_curriculum_for_speech_2020}, reinforcement learning \cite{p79_curriculum_Reinforcement_Learning_survey}, etc.
\focus{basic ideas and improvements:}The basic idea is to give examples from easy to hard, and has been developed a lot \cite{p74_curriculum_learning_survey_1}, but few are designed specifically for graph-rated tasks \cite{p84_curriculum_for_graph_1_Graph_Classification,p85_curriculum_for_graph_2_Graph_Contrastive_Learning}.
\focus{Why curriculum learning?}But note that besides over-smoothness, another non-negligible cause hindering deep GNNs is just the difficult optimization.
Thus in this work, we hope to ease it via a novel curriculum learning framework based on iterative label smoothing on an auxiliary graph.
\focus{Notations for Curriculum Learning:}Here we define a \textit{curriculum} as a \textit{task sequence} $\mathcal{T}_1, \mathcal{T}_2, \cdots, \mathcal{T}_{n_T}$ with gradually increasing difficulties where $\mathcal{T}_i = \left( D^{(i)}, f_{\theta}^{(i)}, \ell^{(i)}(\cdot), p^{(i)} \right)$ denotes a task meaning that a model $f_{\theta}^{(i)}$ learns from the data $D^{(i)}$ with a loss function $\ell^{(i)}(\cdot)$ and \textit{pacing} strategy $p^{(i)}$ (\eg, the time it spends).
$f_{\theta}^{(i)}$ is some modified or restricted version of $f_{\theta}$.


\section{SmoothCurriculum-improved R-SoftGraphAIN}


In this section, we propose a novel model for alleviating the over-smoothness of graph neural networks by incorporating  two main ingredients (\ie, \textit{R-SoftGraphAIN} for GNN structures, and the adaptive curriculum design).
Although treated as  a whole for reporting  their performance in Sec. \ref{Sec:Experiments}, 
we individually  introduce each ingredient in the following  for better clarity and briefness.

\subsection{R-SoftGraphAIN}
\label{Sec:ResGraphAIN}
We will describe   our  normalization method and show how it can be improved via residual and initial connections in the following paragraphs.

\paragraph{Spectral Analysis on Over-smoothness}
\label{Sec:R-SoftGraphAIN_Spectral_analysis}

\focus{Part 1: Analysis of over-smoothness:}\focus{1): External manifestation and direct cause of over-smoothness:}Over-closeness (\ie, embeddings are too close) is the external manifestation of over-smoothness leading to indiscrimination via a classifier.
\focus{2): Essential reason of over-smoothness:}But it's just a direct cause instead of an essential reason analyzed by priors in the spectral domain on SGC as follows:
\begin{equation}
    \begin{split}
        &h^{(L)} = \hat{A}^L x =  U \Lambda^L U^T x =  \sum_{i=1}^n \lambda_i^L u_i u_i^T  x, \\
        &\lim_{L \to \infty} h^{(L)} = u_1 u_1^T x = \tilde{D}^{\frac{1}{2}} \mathbf{1}_n \mathbf{1}_n^T \tilde{D}^{\frac{1}{2}} x \propto \tilde{D}^{\frac{1}{2}} \mathbf{1}_n,
    \end{split}
\end{equation}
%
%
%
where $x$ is a signal, $\hat{A}=U \Lambda U^T$ contains decreasing eigenvalues $\{\lambda_i\}$ with respective eigenvectors $\{u_i\}$ and $\lambda_1 = 1 > \lambda_2$.
The conclusion on GCN is similar with a very different non-trivial analysis \cite{p40_GNN_exponentially_lose_expressive}.
The analysis tells us deep GNNs tend to: 1) hardly keep important structural knowledge, 2) gradually forget the semantics contained in features, thus essentially leading to over-smoothness.

\focus{Part 2: Pairnorm and its strengths/concerns:}\focus{1): the motivation of Pairnorm:}Pairnorm \cite{p53_pairnorm} attempts to numerically solve over-closeness \focus{2): the practice of pairnorm:}via direct manipulation on mean distance formulated as: $H^{{(l+1)}} = C \sqrt{n} \cdot H'/ \left\| H' \right\|_F, \  H'=(I_n-1/n\cdot \mathbf{1}_n \mathbf{1}_n^T)\hat{A}H^{(l)}$ with a constant $C>0$. \focus{3): the concerns of pairnorm:}We conjecture its performance is limited even getting rid of indiscrimination due to normalizing embeddings: 1) only element-wisely and isotropically without modeling the complex relationships between nodes and between signals; 2) only numerically without interpretable knowledge preservation.
\focus{Part 3: from Pairnorm to our works:} These shortcomings motivate our \textit{GraphAIN} considering normalization an-isotropically and in a distribution/knowledge-aware manner.
\focus{Part 4: theorems, visualizations and relations:}

\paragraph{Soft Graph An-Isotropic Normalization}
\label{Sec:R-SoftGraphAIN_SoftGraphAIN}
\focus{Part 1: the motivation of GraphAIN:}As mentioned above, we propose \textit{GraphAIN} to deal with the comprehensive distribution and keep diverse meaningful knowledge, which can be described as follows:
\focus{Part 2: the practice of GraphAIN:}
\begin{equation}
    \label{equ:GraphAIN}
    \begin{split}
        &H_t = B_{t-1} \left( B_{t-1}^T B_{t-1}  \right)^{-\frac{1}{2}}  \in \mathbb{R}^{n \times d} , \quad \forall \ t \ge 1, \\
        &B_{t-1} = T \cdot \hat{A} H_{t-1} \in \mathbb{R}^{n \times d} , \quad H_0 = X,
    \end{split}
\end{equation}
where $H_t = H^{(t)}$ and $X$ denote the embedding matrix of the $t$-th layer and the original features.
And $T = I_n-\frac{1}{n} \mathbf{1}_n \mathbf{1}_n^T \in \mathbb{R}^{n \times n}$ represents a centering operator and $P^{1/2}$ refers to the square root of a positive matrix $P$.
\focus{Part 3: the motivation and theoretical justification:}The following statement theoretically justifies our idea that we are normalizing the covariance matrix of $H$ instead of directly normalizing the embedding for an individual node or signal independently:

\focus{basic characteristics:}\begin{theorem}\label{theorem:hard_GraphAIN_basic_characteristics}
$\forall \ t \ge 1$, GraphAIN satisfies that:

1) $\mathbf{1}_n^T B_t = \mathbf{1}_n^T H_t = \mathbf{0}_n$;
\quad 2) $H_t^T H_t = I_d$;

3) $T H_t = H_t$;
\quad \quad \quad \quad \quad 4) $B_{t} = \Bar{A} H_{t}$;

where $\Bar{A} = T\hat{A}T$ denotes a doubly centered version of $\hat{A}$.
\end{theorem}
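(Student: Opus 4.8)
The plan is to derive all four items from three elementary facts about the centering operator $T=I_n-\frac{1}{n}\mathbf{1}_n\mathbf{1}_n^T$: it is symmetric ($T^T=T$), idempotent ($T^2=T$, since $\mathbf{1}_n^T\mathbf{1}_n=n$), and it annihilates the constant direction ($\mathbf{1}_n^T T=\mathbf{0}^T$ and $T\mathbf{1}_n=\mathbf{0}$). Before anything else I would record the standing well-definedness requirement: $P_{t-1}:=B_{t-1}^T B_{t-1}\in\mathbb{R}^{d\times d}$ is always symmetric positive semidefinite, and for the inverse square root $(P_{t-1})^{-1/2}$ in \eqref{equ:GraphAIN} to make sense one must assume $P_{t-1}$ is positive definite, \ie\ $B_{t-1}$ has full column rank $d$. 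This genericity assumption is the only delicate point and is the main obstacle to a completely rigorous statement, so I would flag it explicitly (it holds generically and can be enforced by a tiny diagonal ridge if needed); granting it, $(P_{t-1})^{-1/2}$ exists, is symmetric positive definite, and commutes with $P_{t-1}$ and with $P_{t-1}^{1/2}$ by the standard spectral properties of the matrix square root.

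For item 1, I would use that for every $t\ge 1$ one has $B_t=T\hat A H_t$ and also $B_0=T\hat A X$, so left-multiplying by $\mathbf{1}_n^T$ and invoking $\mathbf{1}_n^T T=\mathbf{0}^T$ yields $\mathbf{1}_n^T B_t=\mathbf{0}^T$ for all $t\ge 0$; since $H_t=B_{t-1}(P_{t-1})^{-1/2}$ is just $B_{t-1}$ post-multiplied by a matrix, $\mathbf{1}_n^T H_t=\mathbf{0}^T$ follows at once. For item 3, idempotency of $T$ gives $T B_{t-1}=T^2\hat A H_{t-1}=T\hat A H_{t-1}=B_{t-1}$, hence $T H_t=(T B_{t-1})(P_{t-1})^{-1/2}=B_{t-1}(P_{t-1})^{-1/2}=H_t$. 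Item 4 is then a one-liner: for $t\ge 1$, $\bar A H_t=T\hat A T H_t=T\hat A H_t=B_t$, the middle equality being item 3. Note that none of these three items requires induction on $t$ — each follows from the defining recursion at a single level together with the properties of $T$.

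Item 2 is the only genuinely algebraic step. Writing $P:=P_{t-1}$ and using that $P^{-1/2}$ is symmetric and that $P^{1/2},P^{-1/2}$ are functions of $P$ (hence mutually commuting), I would compute $H_t^T H_t=(P^{-1/2})^T B_{t-1}^T B_{t-1}\,P^{-1/2}=P^{-1/2}P\,P^{-1/2}=P^{-1/2}P^{1/2}P^{1/2}P^{-1/2}=I_d$. This says precisely that $H_t$ has orthonormal columns, \ie\ GraphAIN whitens the (centered) covariance of the embeddings rather than rescaling individual node or signal vectors, which is the interpretation advertised just before the theorem. Assembling items 1--4 completes the proof.
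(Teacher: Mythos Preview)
Your proof is correct and follows essentially the same route as the paper's: both use $\mathbf{1}_n^T T=\mathbf{0}^T$ for item~1, a direct $P^{-1/2}PP^{-1/2}=I_d$ computation for item~2, and item~3 to obtain item~4; the only cosmetic difference is that you derive item~3 via $T^2=T$ applied to $B_{t-1}$, whereas the paper derives it from item~1 by expanding $TH_t=H_t-\tfrac{1}{n}\mathbf{1}_n\mathbf{1}_n^T H_t$. Your explicit flag that $B_{t-1}$ must have full column rank for $(B_{t-1}^TB_{t-1})^{-1/2}$ to exist is a worthwhile addition the paper omits.
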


\focus{deep understandings:}\noindent Next theorem theoretically analyzes \textit{GraphAIN} from an optimization perspective and gives a deep understanding of combination of normalization and aggregations in GNNs:
\begin{theorem}
\label{theorem: hard_GraphAIN_optimization}
GraphAIN can be viewed as an iterative process in order to solve the following restricted optimization problem via \textbf{Projected Gradient Ascent} method:
\begin{equation}
    \label{equ:objective_function_hard_GraphAIN}
    \begin{split}
        \max_H \quad f(H) = \frac{1}{2} \cdot \mathrm{tr}\left( H^T \Bar{A} H \right), \quad s.t. \ \  H^TH=I_d,
    \end{split}
\end{equation}
where $H$ is initialized to input graph signals $X \in \mathbb{R}^{n \times d}$ and set the step size $
\eta = 1$.
\end{theorem}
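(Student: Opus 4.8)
The plan is to read ``Projected Gradient Ascent'' as the two-step scheme \emph{(i)} move along the Euclidean gradient of $f$, and \emph{(ii)} project the result back onto the feasible set, and then to verify that this scheme reproduces the recursion~\eqref{equ:GraphAIN} line for line.

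First I would compute the Euclidean gradient. Since $\bar{A} = T\hat{A}T$ is symmetric (both $T$ and $\hat{A}$ are), a first-order expansion gives $f(H+Z) = f(H) + \langle \bar{A}H, Z\rangle + O(\|Z\|_F^2)$, so $\nabla f(H) = \bar{A}H$; the factor $\frac{1}{2}$ in the objective is exactly what makes this gradient come out with no stray $2$. Taking the ascent direction $\bar{A}H_{t-1}$ with step size $\eta=1$ and using that the iterate is already centered, $T H_{t-1} = H_{t-1}$ by Theorem~\ref{theorem:hard_GraphAIN_basic_characteristics}(3), the matrix fed to the projection is $\bar{A}H_{t-1} = T\hat{A}TH_{t-1} = T\hat{A}H_{t-1} = B_{t-1}$, i.e. exactly the pre-projection quantity in~\eqref{equ:GraphAIN}. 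Then I would identify the projection onto the feasible set $\mathcal{M} = \{H\in\mathbb{R}^{n\times d}:H^TH=I_d\}$: for a full-column-rank $B$, minimizing $\|H-B\|_F^2 = d + \|B\|_F^2 - 2\langle H,B\rangle$ over $H\in\mathcal{M}$ amounts to maximizing $\langle H,B\rangle$, and with the thin SVD $B = P\Sigma Q^T$ the von Neumann trace inequality identifies the maximizer as the orthogonal polar factor $PQ^T = B(B^TB)^{-1/2}$. Applied to $B_{t-1}$ this yields $H_t = B_{t-1}(B_{t-1}^TB_{t-1})^{-1/2}$, matching~\eqref{equ:GraphAIN}; with the shared initialization $H_0 = X$, the two trajectories then coincide for every $t\ge1$, which is the claim.

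The step I expect to be the main obstacle is pinning down \emph{which} flavour of manifold projected-gradient is meant so that, at step size exactly $\eta=1$, the quantity entering the projection is $\bar{A}H_{t-1}$ rather than $(I_n+\bar{A})H_{t-1}$ — that is, showing that the additive $H_{t-1}$ (or equivalently a positive rescaling of the direction) is irrelevant once one projects onto the curved set, or instead phrasing the update as the linearized/conditional-gradient step $H_t = \arg\max_{H\in\mathcal{M}}\langle\nabla f(H_{t-1}),H\rangle$ and checking directly that its optimum is the polar factor. A secondary technical point is well-posedness of every projection: each $B_{t-1}=\bar{A}H_{t-1}$ must have full column rank so that $(B_{t-1}^TB_{t-1})^{-1/2}$ exists and the nearest-point projection is unique. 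Here Theorem~\ref{theorem:hard_GraphAIN_basic_characteristics} helps again — orthonormality of $H_{t-1}$ plus $\mathbf{1}_n^TH_{t-1}=\mathbf{0}$ keeps $\mathrm{colspace}(H_{t-1})$ off the kernel $\mathrm{span}(\mathbf{1}_n)$ of $\bar{A}$ — with a little extra care needed at $t=1$, where $X$ need not be centered or orthonormal. Everything past these two points is routine linear algebra resting on Theorem~\ref{theorem:hard_GraphAIN_basic_characteristics}.
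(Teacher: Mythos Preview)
Your projection step and your use of Theorem~\ref{theorem:hard_GraphAIN_basic_characteristics} match the paper's argument essentially verbatim (the paper also identifies the projection with the orthogonal polar factor, via an orthogonal-Procrustes lemma). The gap is exactly where you flagged it: with $f(H)=\tfrac12\,\mathrm{tr}(H^T\bar{A}H)$ as written, vanilla projected gradient ascent at step size $\eta=1$ produces $(I_n+\bar{A})H_{t-1}$, not $\bar{A}H_{t-1}$, and neither of your two proposed fixes closes it cleanly. The additive $H_{t-1}$ is \emph{not} irrelevant after projection onto the Stiefel manifold --- if the columns of $H_{t-1}$ are eigenvectors of $\bar{A}$ with some eigenvalue $\lambda\in(-1,0)$, the polar factors of $\bar{A}H_{t-1}$ and $(I_n+\bar{A})H_{t-1}$ differ by a sign flip in that column. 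And the conditional-gradient rephrasing, while it does produce the correct iterate, is a different algorithm from ``projected gradient ascent with step size $\eta=1$'' as the theorem is stated.

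The paper's resolution is a one-line trick you are missing: since $\mathrm{tr}(H^TH)=d$ is constant on the constraint set $H^TH=I_d$, one may replace $f$ by the equivalent objective $\tilde f(H)=\tfrac12\,\mathrm{tr}\!\bigl(H^T(\bar{A}-I_n)H\bigr)$ without changing the constrained problem. The Euclidean gradient of $\tilde f$ is $(\bar{A}-I_n)H$, so the ascent step at $\eta=1$ now gives $H_{t-1}+(\bar{A}-I_n)H_{t-1}=\bar{A}H_{t-1}=B_{t-1}$ directly, after which your polar-factor projection finishes the proof. Incidentally, the paper does not address your secondary full-rank concern at all; your remark there is more careful than the original.
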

\focus{additional interpretation:}\noindent All proofs can be found in supplementary materials.
This theorem reveals what \textit{GraphAIN} can learn.
In fact, from another point of view, the optimal solution to this problem can be obtained via \textbf{\textit{Lagrange multiplier}} method as follows:
\begin{equation}
\label{equ:Lagrange_Multiplier_Method_for_objective_function_hard_GraphAIN}
\begin{split}
    \mathcal{L}(H,\Lambda_L) = \mathrm{tr}\left( H^T \Bar{A} H \right) - \mathrm{tr}\left( \Lambda_L \left( H^TH-I_d \right) \right),
\end{split}
\end{equation}
\noindent where $\mathcal{L}$ and $\Lambda_L$ are the Lagrange function and multipliers.
Let $\partial \mathcal{L} / \partial H=0$, we get $\Bar{A}H=\Lambda_L H$, which means that $H$ tends to the eigenvectors corresponding to the top-$d$ eigenvalues of $\Bar{A}$ with sufficient steps.
Thus similar to Spectral Clustering, it can capture essential structural knowledge due to the similarity between $\Bar{A}$ and $\hat{A}$.
\focus{Part 4: more interpretations or intuitions in spatial domain}In addition to spectral interpretations, we give some intuitions in spatial domain: 1) it can easily solve over-closeness, due to the facts that $\left\| H \right\|_F^2=d$ and $\sum_i\sum_j \left\| H_{i,:} - H_{j,:} \right\|_2^2=2n\cdot \sum_i \left\|H_{i,:}\right\|_2^2 - 2 \cdot \left\| \sum_i H_{i,:} \right\|_2^2 = 2n \cdot d$ meaning the average pairwise distance is kept completely constant similar to Pairnorm; 2) the spatial variance in
any direction is normalized to $1$ for maximally preserving the diversity of knowledge in a circular distribution.

\focus{Part 5: its drawbacks:}However, it still suffers from performance degradation due to the following four potential drawbacks: 1) too absolute; 2) numerical instability; 3) high time complexity; 4) risk of forgetting original features during iterations.
The first three issues can be relieved via a soft version (\ie, \textit{SoftGraphAIN}):
\focus{Part 6: the practice of SoftGraphAIN:}\begin{equation}
    \label{equ:SoftGraphAIN}
    \begin{split}
        H_t \approx B_{t-1} \left[  a \cdot U_{d_0} \Lambda_{d_0}^{-\frac{1}{2} \cdot b} U_{d_0}^T + \left( 1-a \right) \cdot I_d \right],
    \end{split}
\end{equation}
where $B_{t-1}^T B_{t-1} \approx U_{d_0} \Lambda_{d_0} U_{d_0}^T$ is the $d_0$-truncated SVD calculating only the top-$d_0 \le d$ eigenvectors and eigenvalues contained in $U_{d_0},\Lambda_{d_0} \in \mathbb{R}^{d \times d}$, and $a,b \in [0,1]$ are another hyper-parameters controlling the extent of normalizing.
\focus{Part 7: interpretations of the soft operation and how they work}Formally, they transform the singular values $S \approx S_{d_0}=\Lambda_{d_0}^{1/2} \in \mathbb{R}^{d \times d}$ into $\left( 1 - a \right) \cdot S_{d_0} + a \cdot S_{d_0}^{1-b}$ thus flexibly reducing the absoluteness, possible noise in useless channels, risk of numerical zero-divisions, and empirical time-inefficiency.

\paragraph{Additional Residual Combination}
\label{sec:Additional_Residual_Combination}

\focus{Part 1: the practice of the R-SoftGraphAIN:}\textit{R-SoftGraphAIN} can effectively alleviate the last drawback mentioned above via some residual and initial connections formulated as follows:
\begin{equation}
    \label{equ:residual_connections}
    \begin{split}
        B_{t} = \alpha \cdot T \hat{A} H_{t} + \beta \cdot H_{t} + \gamma \cdot X \in \mathbb{R}^{n \times d}, \ \forall \ t \ge 1,
    \end{split}
\end{equation}
where the non-negative hyper-parameters meet $\alpha + \beta + \gamma=1$.
\focus{Part 2: the direct motivation of these connections:}Intuitively, the commonly used residual connections can alleviate gradient-vanishing and the initial ones are expected to constantly supplement some feature information during aggregations in case of oblivion.
\focus{Part 3: theoretical analysis from an optimization perspective:}Moreover, the motivation can be theoretically justified via the following similar theorem:
\begin{theorem}
\label{theorem: R-soft_GraphAIN_optimization}
Residual-favored GraphAIN can be viewed as an iterative process in order to solve the following restricted optimization problem via \textbf{Projected Gradient Ascent} method:
\begin{equation}
    \label{equ:objective_function_R_Soft_GraphAIN}
    \begin{split}
        \max_H \ f(H) = &\frac{1}{2} \cdot \mathrm{tr}\left( H^T \Bar{A} H \right) - \frac{1}{2} \cdot \frac{\gamma}{\alpha} \cdot \left\| H - X \right\|_F^2 \\ &s.t. \quad  H^TH=I_d,
    \end{split}
\end{equation}
where $H$ is initialized to input graph signals $X \in \mathbb{R}^{n \times d}$ and set the step size $
\eta = \alpha \in [0,1]$.
\end{theorem}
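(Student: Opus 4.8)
The plan is to replay the argument behind Theorem~\ref{theorem: hard_GraphAIN_optimization}, now carrying the extra quadratic term of \eqref{equ:objective_function_R_Soft_GraphAIN}. Work on the Stiefel manifold $\mathcal{M}=\{H\in\mathbb{R}^{n\times d}:H^{T}H=I_{d}\}$ and recall the standard fact that the Euclidean projection of a full-column-rank matrix $B$ onto $\mathcal{M}$ is its orthogonal polar factor, $\Pi_{\mathcal{M}}(B)=B\,(B^{T}B)^{-1/2}=\arg\min_{H^{T}H=I_{d}}\lVert H-B\rVert_{F}$, which is exactly the normalization map appearing inside \eqref{equ:GraphAIN}. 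Hence the residual recursion $H_{t+1}=\Pi_{\mathcal{M}}(B_{t})$, with $B_{t}$ as in \eqref{equ:residual_connections}, is literally a projected-gradient-ascent step for \eqref{equ:objective_function_R_Soft_GraphAIN} once I verify that the pre-projection point $B_{t}$ coincides with $H_{t}+\eta\,\nabla f(H_{t})$ for the stated $\eta=\alpha$.

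First I would differentiate. Since $\bar{A}=T\hat{A}T$ is symmetric, $\nabla_{H}\tfrac12\,\mathrm{tr}(H^{T}\bar{A}H)=\bar{A}H$ and $\nabla_{H}\!\big(-\tfrac{\gamma}{2\alpha}\lVert H-X\rVert_{F}^{2}\big)=-\tfrac{\gamma}{\alpha}(H-X)$, so $\nabla f(H)=\bar{A}H-\tfrac{\gamma}{\alpha}(H-X)$. Here is the single delicate point: plugging this verbatim into the $\eta=\alpha$ step gives $H_{t}+\alpha\nabla f(H_{t})=(1-\gamma)H_{t}+\alpha\bar{A}H_{t}+\gamma X$, whose $H_{t}$-coefficient is $1-\gamma$, not the $\beta$ of \eqref{equ:residual_connections}. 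The resolution is that on the feasible set $\mathrm{tr}(H^{T}H)\equiv d$, so the objective equals $\tilde f(H)=\tfrac12\,\mathrm{tr}\!\big(H^{T}(\bar{A}-I_{n})H\big)-\tfrac{\gamma}{2\alpha}\lVert H-X\rVert_{F}^{2}$ up to the additive constant $d/2$; the two define the \emph{same} constrained problem, but it is the ascent direction of $\tilde f$, namely $(\bar{A}-I_{n})H-\tfrac{\gamma}{\alpha}(H-X)$ — equivalently $\nabla f(H)$ with the normal component $H\in\{HS:S=S^{T}\}$ removed — that reproduces GraphAIN. This is precisely the bookkeeping already implicit in Theorem~\ref{theorem: hard_GraphAIN_optimization}, where with $\eta=1$ one has $\Pi_{\mathcal{M}}\big(H_{t-1}+(\bar{A}-I_{n})H_{t-1}\big)=\Pi_{\mathcal{M}}(\bar{A}H_{t-1})$.

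With that in hand the computation closes quickly. Using $\alpha+\beta+\gamma=1$,
\[
H_{t}+\alpha\Big((\bar{A}-I_{n})H_{t}-\tfrac{\gamma}{\alpha}(H_{t}-X)\Big)=(1-\alpha-\gamma)H_{t}+\alpha\bar{A}H_{t}+\gamma X=\beta H_{t}+\alpha\bar{A}H_{t}+\gamma X .
\]
By Theorem~\ref{theorem:hard_GraphAIN_basic_characteristics}(3) — established there for the plain recursion and carried to the residual recursion by the same induction once the input is centered, $\mathbf{1}_{n}^{T}X=\mathbf{0}_{d}$, which may be assumed without loss of generality — one has $TH_{t}=H_{t}$, hence $\bar{A}H_{t}=T\hat{A}TH_{t}=T\hat{A}H_{t}$, so the right-hand side is exactly $B_{t}=\alpha T\hat{A}H_{t}+\beta H_{t}+\gamma X$ of \eqref{equ:residual_connections}; applying $\Pi_{\mathcal{M}}$ returns $H_{t+1}=B_{t}(B_{t}^{T}B_{t})^{-1/2}$, i.e.\ one R-SoftGraphAIN layer. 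The centering also settles the base case $t=0$ (where $H_{0}=X$ and $TX=X$) and keeps $B_{t}$ in the generic full-column-rank regime in which $(B_{t}^{T}B_{t})^{-1/2}$ is defined — the only regularity hypothesis used.

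The main obstacle is thus conceptual rather than computational: noticing that a literal gradient step leaves coefficient $1-\gamma$ on $H_{t}$, and that one must pass to the $(\bar{A}-I_{n})$ representation of the objective — legitimate because $\mathrm{tr}(H^{T}H)$ is constant on $\mathcal{M}$, so this changes neither the feasible set nor the maximizer — in order to recover the coefficient $\beta$ of \eqref{equ:residual_connections}. The rest (the gradient computation, the use of $\alpha+\beta+\gamma=1$, and propagating the centering identities $\mathbf{1}_{n}^{T}H_{t}=\mathbf{0}_{d}$ and $TH_{t}=H_{t}$ of Theorem~\ref{theorem:hard_GraphAIN_basic_characteristics} to the residual recursion and to $t=0$) is routine induction.
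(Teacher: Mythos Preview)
Your argument is correct and follows essentially the same route as the paper: both hinge on replacing the objective $\tfrac12\,\mathrm{tr}(H^{T}\bar{A}H)$ by the equivalent $\tfrac12\,\mathrm{tr}(H^{T}(\bar{A}-I_{n})H)$ (constant on $H^{T}H=I_{d}$) so that the $\eta=\alpha$ gradient step produces the coefficient $\beta=1-\alpha-\gamma$ on $H_{t}$, after which matching to \eqref{equ:residual_connections} and invoking the polar-factor projection are immediate. The paper arrives at the $-I_{n}$ shift by introducing a free parameter $p$ in the quadratic and solving backward to find $p=1$, whereas you identify the shift directly; you are also more explicit than the paper about propagating the centering identity $TH_{t}=H_{t}$ to the residual recursion so that $\bar{A}H_{t}=T\hat{A}H_{t}$ actually matches \eqref{equ:residual_connections}.
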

\focus{Part 4: intuitive interpretation of this theorem:}\noindent This theoretically reveals that \textit{R-SoftGraphAIN} never forgets the original features as GNNs go deep, simultaneously relieving two essential reasons in Sec. \ref{Sec:R-SoftGraphAIN_Spectral_analysis} and thus alleviating over-smoothness.
Furthermore, from this we can get some intuitions on the roles of $\alpha, \beta, \gamma$: $\alpha$ allows a sufficiently small step size ensuring better convergence, $\gamma$ estimates the contribution of features.
$\beta$ can give some freedom to $\alpha$ and $\gamma$.
\focus{Part 5: further improvement: fuzzy residual}In order to further improve its performance, we generalize these connections as \textit{fuzzy} connections.
We use Eq.~\ref{equ:SoftGraphAIN} and Eq.~\ref{equ:residual_connections} to substitute Eq.~\ref{equ:residual_connections}, and replace $X$ in Eq.~\ref{equ:residual_connections} by $H_1$ due to possible misalignment in dimensions.
A GCN-based implementation of the whole structure is summarized in Algorithm~1 in supplementary materials with a line-by-line description therein.

\paragraph{Relations to Others}
\label{Sec:Relation_to_others}
\focus{Part 1: summarization:}In this paragraph, we detailedly compare ours with other related methods.
\focus{Part 2: perspective of knowledge preservation:}\focus{1):SGC}SGC suffers from over-smoothness due to both structural and feature knowledge loss.
\focus{2):Pairnorm}Pairnorm numerically solves over-closeness without interpretable knowledge preservation.
\focus{3):Meannorm; 4):Spectral clustering}Meannorm and Spectral Clustering (SC) can keep the $2$-th and lowest $d$ frequent components in structures respectively while keeping little feature information.
\focus{5):GCNII}GCNII proves to be a universal approximator of any function on features, but it ignores structural semantics.
\focus{6):Ours:}Compared to them, ours can keep both features and structural knowledge inheriting both advantages.
\focus{Part 3: perspective of relationship modelling:}\focus{Pairnorm, meannorm, nodenorm:}From another perspective, Pairnorm, Meannorm, and Nodenorm \cite{p54_nodenorm} are only element-wise, signal-independent, and node-independent, respectively.
\focus{ours:}However, our method considers the comprehensive distributions and effectively models the relationships between nodes and between signals, thus uttermost preserving the diversity of the embeddings.
\focus{Visualization:}
Fig.~3 in supplementary materials shows our superiority, where ours is similar to and even outperform SC while others suffer from over-smoothness to different extents.

\subsection{SmoothCurriculum}
\label{Sec:SmoothCurriculum}

\focus{introduction to our curriculum learning framework, and the organization of the following paragraphs:}In this section, we propose a \textit{simple} yet \textit{effective} curriculum learning framework based on \textit{label-smoothing} on an auxiliary graph to ease the  hardness of optimizing the proposed \textit{R-SoftGraphAIN}.
Inspired by Curriculum Learning, the key idea of our framework is first to learn the low-frequent knowledge contained in labels before the high-frequent ones, and then to employ  an easy-to-hard learning process that favors a better generalization.
The framework will be described in detail regarding several of its important modules (\eg, label estimation and smoothing, graph construction, and curriculum designs), intuitions, and interpretations. 


\paragraph{Label Estimation and Auxiliary Graph}
\label{Sec:Label_Estimation_and_Auxiliary_Graph}

\focus{Part 0: motivation: why need label estimation:}Deep GNNs are powerful yet risk overfitting due to limited labeled data, especially in the semi-supervised setting.
Thus we hope to enlarge the training set via label estimation.
\focus{Part 1: label propagation and its shortcomings:}One of the most commonly used classic techniques is \textit{Label Propagation}, whose iterative process is: $f \leftarrow P \cdot f, f_L \leftarrow Y_L$ initializing $f_L = Y_L, f_U = \mathbf{0}$.
Furthermore, its limit can be formulated as:
$Y_U^{(e)} = \lim f_U = \left( I - P_{UU} \right)^{-1} P_{UL}Y_L$, where $U,L$ represent unlabeled and labeled node sets, $P=D^{-1}A$, $P_{UL}$ is a sub-matrix of $P$ respect to lines $U$ and columns $L$, and $Y_L,Y_U^{(e)}$ are the known and estimated labels.
But it suffers from two shortcomings: 1) impossible propagation due to possible disconnectivity; 2) impractical matrix inversion with a large $|U|$.
\focus{Part 2: implicit label propagation}Thus, we estimate $Y_U$ \textit{implicitly} via a teacher model $f_t(\cdot)$ pre-trained on the labeled data $D_o = (X_L,Y_L)$, which can distill shared knowledge from distant nodes or disconnected components to favor more accurate estimation.

\focus{Part 3: auxiliary graph building:}Moreover, we expect to capture and encode the similarities of nodes' ground-truth labels into the structure or communities of an auxiliary graph $G_{aux}$.
It can be built as follows:
1) $G_{aux}=G$, input graph for graphs with noisy or missing features;
2) $G_{aux} = G_f$, a KNN-graph built according to node features for graphs with heterophily;
3) $G_{aux} = G_e$, a KNN-graph built according to node embeddings output by the teacher $f_t(\cdot)$ for others.
More specifically, a KNN-graph $G(\mathbf{h})$ of a set of vectors $h_1, \cdots, h_n$ is built as follows: link every $h_i$ to its top-$k$ nearest vectors via a KNN algorithm with \textit{Gaussian} distance, drop the edge directions, and then calculate the weights via similarity scores $W^{(aux)}_{i,j} = \mathrm{ReLU}\left( h_i^Th_j \right)^{\gamma'}$ with a distribution-controlling hyper-parameter $\gamma'>0$ for edge weights $W^{(aux)} \in \mathbb{R}^{n \times n}$.

\paragraph{Label Smoothing and Curriculum Design}
\label{Sec:label_smoothing_and_curriculum_design}
\focus{the practice of iterative label smoothing:}To get multi-scale label signals, we iteratively smooth labels on $G_{aux}$ with initial signal $Y^{[0]}=Y$ from $f_t\left(\cdot \right)$ as: $Y^{[i+1]}=P_{aux} \cdot Y^{[i]}, \ \forall \ i \in [0,n_T)$, where $P_{aux}=D_{aux}^{-1}W^{(aux)}$ and $D_{aux}$ are the probability and degree matrix on $G_{aux}$, respectively.
Note that this simplified Label Propagation without fixing $f_L$ will definitely encounter over-smoothness similar to SGC, but it's just what we desire (see next paragraph).
\focus{Curriculum design}After that, a curriculum $\mathcal{C}$ can be defined as $\mathcal{T}_0, \mathcal{T}_1, \cdots, \mathcal{T}_{n_T}$, where task $\mathcal{T}_i = \left( D^{(i)}, f_{\theta}, \ell(\cdot), p^{(i)} \right)$, \ie, the graph encoder $f_{\theta}$ and the loss $\ell(\cdot)$ are shared in all tasks but the training data $D^{(i)}=\left( X, Y^{(i)} \right)$ and pacing strategies $p^{(i)}$ vary.
Here, we prepare $Y^{(i)} = Y^{[n_T-i]}, \ \forall \ i \in [0,n_T]$.
In other words, the encoder $f_\theta$ will be encouraged to learn tasks from $\mathcal{T}_0$ to $\mathcal{T}_{n_T}$ where easy tasks containing easy data $D^{(i)}$ are solved before the relatively harder ones with \textit{even paces}.
And finally, it will be fine-tuned to solve the original task with $D_o=(X_L,Y_L)$.

\paragraph{Analysis and Interpretations}
\label{Sec:Analysis_and_Interpretations}
\focus{Part 0: intuitions:}Next we give some analysis and intuitions on what \textit{SmoothCurriculum} exactly does and how it guides the training in the following aspects:
\focus{1): from optimization:}1) optimizing from convex to non-convex: as claimed in \cite{p76_original_curriculum_learning_2009_examples_from_easy_to_hard,p74_curriculum_learning_survey_1}, curriculum learning with priority for easy tasks can be equivalently understood as landscape smoothing for empirical loss contributing to a more convex optimization problem, which guides models to find a local minima with less vibration and better generalizability.
\focus{2): from spectral domain:}2) learning spectral knowledge from low- to high-frequency: $Y^{[i]} = \left( P_{aux} \right)^{i} Y^{[0]} =  U \Lambda^i U^T Y^{[0]}$ and $\Lambda^i = \mathrm{diag}\left( \lambda_1^i, \lambda_2^i, \cdots, \lambda_n^i \right)$ with always decreasing eigenvalues.
Let $i$ vary \textit{decreasingly}, and consider important values $\{i_j, j \in [1,n]\}$ where at time $i_j$, $Y^{[i_j]} \approx  \sum_{t=1}^j \lambda_t^{i_j} u_t u_t^T Y^{[0]}$ with the dominating top-$j$ eigenvalues $\{ \lambda^i_k, k\in [1,j] \}$.
Then from $Y^{[i_j]}$ to $Y^{[i_{j+1}]}$, old knowledge will be reviewed due to $\lambda_{t}^{i_j} \le \lambda_{t}^{i_{j+1}}$ and some relative high-frequent component $u_{j+1}u_{j+1}^TY^{[0]}$ as new information will be injected to label signals.
Additionally, if we independently consider a single signal $y^{[0]}$, then $u_{j+1}u_{j+1}^Ty^{[0]}=\left(u_{j+1}^Ty^{[0]}\right)u_{j+1} \propto u_{j+1}$ introducing a new channel for spectral embedding encoding some new details leading to more complex clustering structures.
Thus models can learn to discriminate nodes from coarse to fine.
\focus{3): from spatial domain: }3) learning spatial knowledge from global to local: Intuitively, the shared commonsense is illustrated first due to $\lim_{i \to \infty} \left( P_{aux} \right)^{i} Y^{[0]} = \mathbf{1}_n \mathbf{1}_n^T Y^{[0]}$, which encodes the global label frequency.
Then the pieces of information in big communities, small communities, and local environments are presented in order because over-smoothness happens quickly in high-density regions but slowly otherwise.
In other words, it also spatially favors coarse-to-fine node discrimination via perceiving the multi-scale community structure or density varieties of $G_{aux}$.
\focus{Part 1: why combine with R-SoftGraphAIN}
\begin{table*}[t]
\small
\centering
\begin{tabular}{c|cc|cc|cc|cc|cc|cc|c}
\toprule
Method & \multicolumn{2}{c|}{Cora} & \multicolumn{2}{c|}{Citeseer} & \multicolumn{2}{c|}{Pubmed}&\multicolumn{2}{c|}{CS}&\multicolumn{2}{c|}{Physics}&\multicolumn{2}{c|}{Computers}& \multirow{2}{*}{Avg.Rank}\\ \cmidrule{1-13} 
   \#Layes& 32 & 64  & 32 & 64 & 32 & 64 & 32 & 64& 32 & 64& 32 & 64\\ \midrule
GCN & 31.90 & 27.56 & 36.66 & 25.40  & 44.22 & 32.65  & 41.29 & 34.23 & 79.87 & 75.34 & 58.30 & 37.58 & 12.75 \\

SGC & 63.84 & 55.39 & 67.50 & 63.08 & 70.70 & 65.33  & 70.52 & 72.51 & 91.46 & 90.77 & 37.44 & 37.50 & 10.33  \\

ChebNet & 31.90 &20.63 & 33.43 & 24.90 & 48.67 & 45.37 & 29.28 & 23.24 & 70.35 & 50.74 & 58.58 & 50.12 & 12.67 \\ 
GAT & 72.28  & 31.92 & 59.08 & 22.90 & 78.72 & 41.88  & 85.85 & 12.83 & 91.87 & 17.75 & 76.05 & 37.18 & 10.92 \\ 

GIN & 60.92 & 31.90 & 47.32 & 23.10 & 72.94 & 40.23  & 52.90 & 20.79 & 83.02 & 24.98 & 39.18 & 37.50 & 12.42 \\ 

Pairnorm       & 65.00 & 66.24 & 44.20 & 41.48 & 72.12 & 71.72  & 72.71 & 68.62 & 88.51 & 89.11 & 74.96 & 74.35 & 9.67 \\

GCNII          &  \textbf{85.29}    & \textbf{85.34}    & 73.24 & 73.00 & 79.81 & 79.88  & 71.67 & 72.11 & 93.15 & 92.79 & 37.56 & 37.50 & 6.67 \\

JKNet          & 73.23 & 72.54 & 50.68 & 52.22 & 63.77 & 69.10  & 81.82 & 82.84 & 90.92 & 89.88 & 67.99 & 67.78 & 9.17 \\

GPRGNN         & 83.13 & 82.48 & 71.01 & 70.96 & 78.46 & 78.92  & 89.56 & 89.33 & 93.49 & 93.26 & 41.94 & 78.30 & 6.83 \\

DAGNN          & 83.39 & 82.16 & 72.59 & 71.00 &  \underline{80.58} &  80.44  & 89.60 & 89.47 & 93.31 & 93.52 & 79.73 & 79.23 & 4.92 \\

APPNP          & 83.68 & 83.66 & 72.13 & 72.02 & 80.24 & 80.08  & 91.61 & 91.58 & 93.75 & 91.61 & 43.02 & 41.42 &5.67  \\

BernNet        & 81.38 & 17.72 & 70.82 & 39.10 & 70.24 & 32.86 & 91.54 & 9.20 & 92.27 & 19.38 & 81.06 & 12.81 & 10.67\\ 
\midrule

Ours(GCN) &  \underline{85.12} &  \underline{84.87} & \underline{74.42} & \textbf{74.50} &  \textbf{81.28}    &  \textbf{81.58}  & \textbf{92.11} & \textbf{92.05} & \underline{94.22} & \underline{94.20} & \textbf{85.21} & \textbf{85.13} & 1.42 \\
   
Ours(GAT) & 84.60 &84.68 & 74.26 & 74.04 & 80.46 & 80.20 & 91.30 & 91.26 & 94.02 & 94.02 & 84.82 & \underline{85.04} & 3.33 \\ 
  
Ours(GIN) &84.18 & 83.80 & \textbf{74.88} & \underline{74.16} & 80.32 & \underline{80.94} & \underline{91.79} & \underline{91.71} & \textbf{94.56} & \textbf{94.53} & \underline{85.16} & \textbf{85.13} & 2.17 \\   
 \bottomrule
\end{tabular}
\caption{Results of node classification tasks on Cora, Citeseer, Pubmed, CS, Physics, and Computers}
\label{tab:comparison_1}
\end{table*}

\begin{table*}[t!]
\small
\centering
\begin{tabular}{c|cc|cc|cc|cc|cc|cc|c}
\toprule
Method & \multicolumn{2}{c|}{Photo}& \multicolumn{2}{c|}{Texas} & \multicolumn{2}{c|}{Wisconsin} & \multicolumn{2}{c|}{Cornell} & \multicolumn{2}{c|}{Actor} & \multicolumn{2}{c|}{OGBN-ArXiv} & \multirow{2}{*}{Avg.Rank}\\ \cmidrule{1-13} 
 \#Layes& 32 & 64  & 32 & 64  & 32 & 64 & 32 & 64 & 32 & 64 & 32 & 64\\ \midrule
GCN & 58.47 & 50.21 & 62.16 & 62.16  & 57.84 & 57.84  & 56.76 & 56.76  & 25.16 & 25.16 & 46.38 &42.95 & 10.00\\
SGC  & 26.08 & 24.57 & 56.41 & 56.96  & 51.29 & 52.16  & 58.57 & 55.41  & 26.17 & 25.88 & 34.22 & 23.14 & 11.67\\
ChebNet & 65.28 & 64,83 & 64.86 & 64.86 & 52.94 & 52.94  & 55.86 & 52.25 & 25.46 & 25.46 & 41.00 & 35.16 & 10.33 \\ 
GAT & 83.73 & 25.36 & 65.41 & 64.86 & 53.73 & 53.33  & 54.05 & 55.14 & 25.54 & 25.62 & 59.78 & 36.63 & 9.33 \\ 
GIN & 65.98 & 25.27 & 62.17 & 60.00 & 47.06 & 50.98  & 54.59 & 55.14 & 24.36 & 23.45 & 65.17 & 60.56 & 11.33 \\ 
PairNorm  & 82.66 & 79.55 & 41.08 & 40.68  & 52.84 & 52.94 & 36.89 & 40.68  & 24.33 & 23.23 & 63.32 & 43.57 & 11.91 \\
GCNII & 62.95 & 65.12 & 69.19 & 65.41  & 70.31 & 59.02  & 74.16 & 56.92  & 34.28 & 34.64 & 72.60 & 70.07 & 5.33\\
JKNet  & 78.42 & 79.73 & 61.08 & 66.49  & 52.76 & 56.08  & 57.30 & 51.49  & 28.80 & 28.26 & 66.31 & 65.80 & 8.25 \\
GPRGNN  & 91.74 & 91.28 & 62.27 & 61.08  & 71.35 & 64.90  & 58.27 & 52.16 & 29.88 & 32.43 & 70.18 & 69.98 & 6.25\\
DAGNN  & 89.96 & 87.86 & 57.68 & 60.27  & 50.84 & 51.76 & 58.43 & 52.43 & 27.73 & 25.45 & 71.46 & 70.58 & 8.92\\
APPNP & 59.62 & 63.63 & 60.68 & 64.32 & 54.24 & 59.90 & 58.43 & 54.69 & 28.65 & 28.19 & 66.94 & 66.90 & 8.25\\

BernNet & 91.59 & 16.53 & 61.08 & 16.76 & 63.53 & 24.71 & 52.43 & 11.89 & 28.19 & 20.66 & 45.16 & 37.18 & 11.92 \\ \midrule

Ours(GCN) & \textbf{92.06} & \textbf{92.03} & \textbf{85.41} & \textbf{84.86} &  \textbf{83.14} & \textbf{84.71} &  \textbf{82.70} & \textbf{82.62} & \textbf{38.42} & \textbf{38.49} & 74.07 & 73.95 &1.33 \\ 
Ours(GAT) & 91.98    & \underline{92.00} & 78.92 & 78.38 & 73.73 & 74.90  & 77.84 & \underline{81.62} & \underline{34.97} & \underline{35.54} & \underline{74.37} & \underline{74.41} & 2.50 \\ 
Ours(GIN) & \underline{92.03}    & 91.98 & \underline{82.57} & \underline{83.12} & \underline{81.78} & \underline{81.20}  & \underline{81.08} & 81.08 & 34.51 & 34.50 & \textbf{75.02} & \textbf{74.85} & 2.25 \\ 
\bottomrule
\end{tabular}
\caption{Results of node classification tasks on Photo, heterophilous graphs (\eg, Texas), and a large-scale graph OGBN-ArXiv}
\label{tab:comparison_2}
\end{table*}

\begin{figure*}[t]
    \centering
    \footnotesize
	\begin{tabular}{c@{}c@{}c}
	Cora & Citeseer & Pubmed\\
        \includegraphics[width=0.325\textwidth]{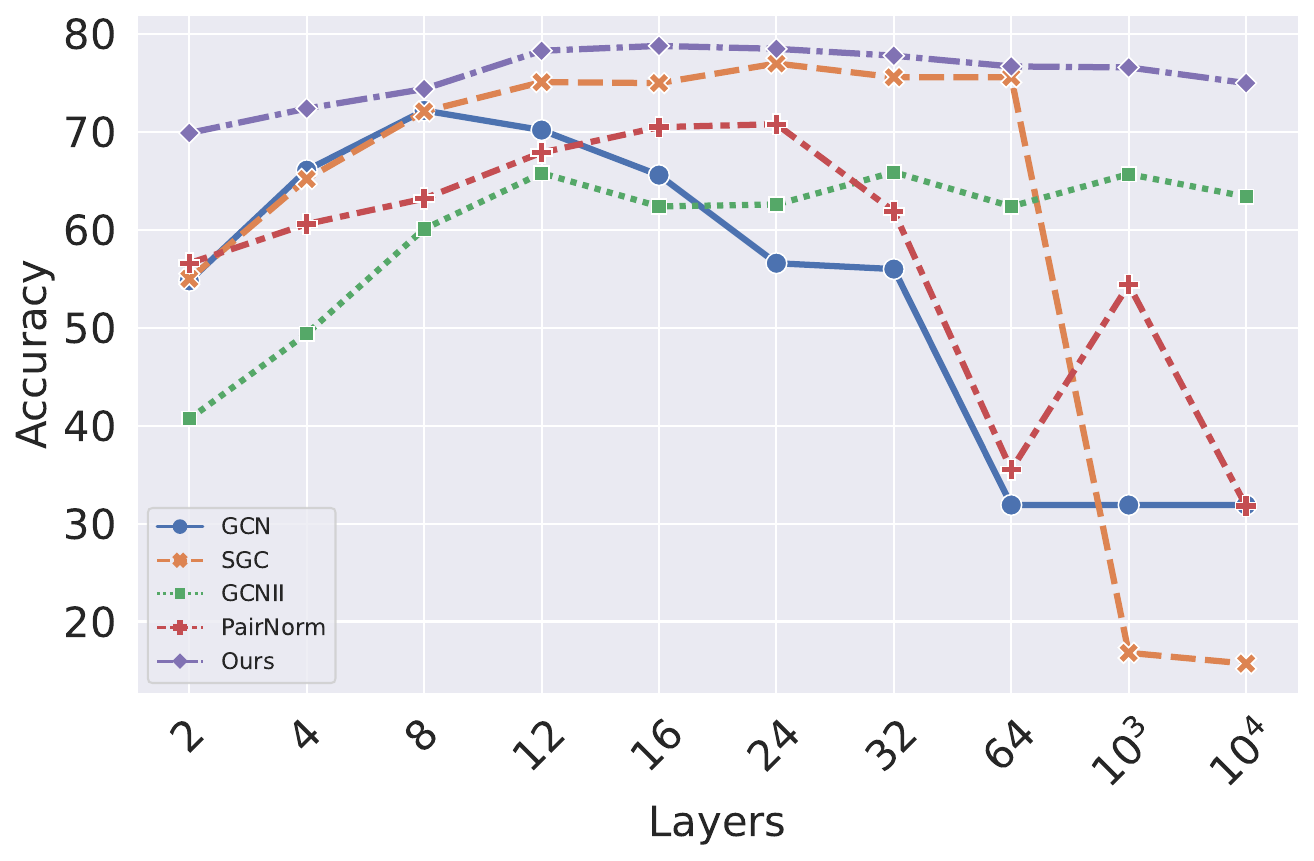}&
		\includegraphics[width=0.325\textwidth]{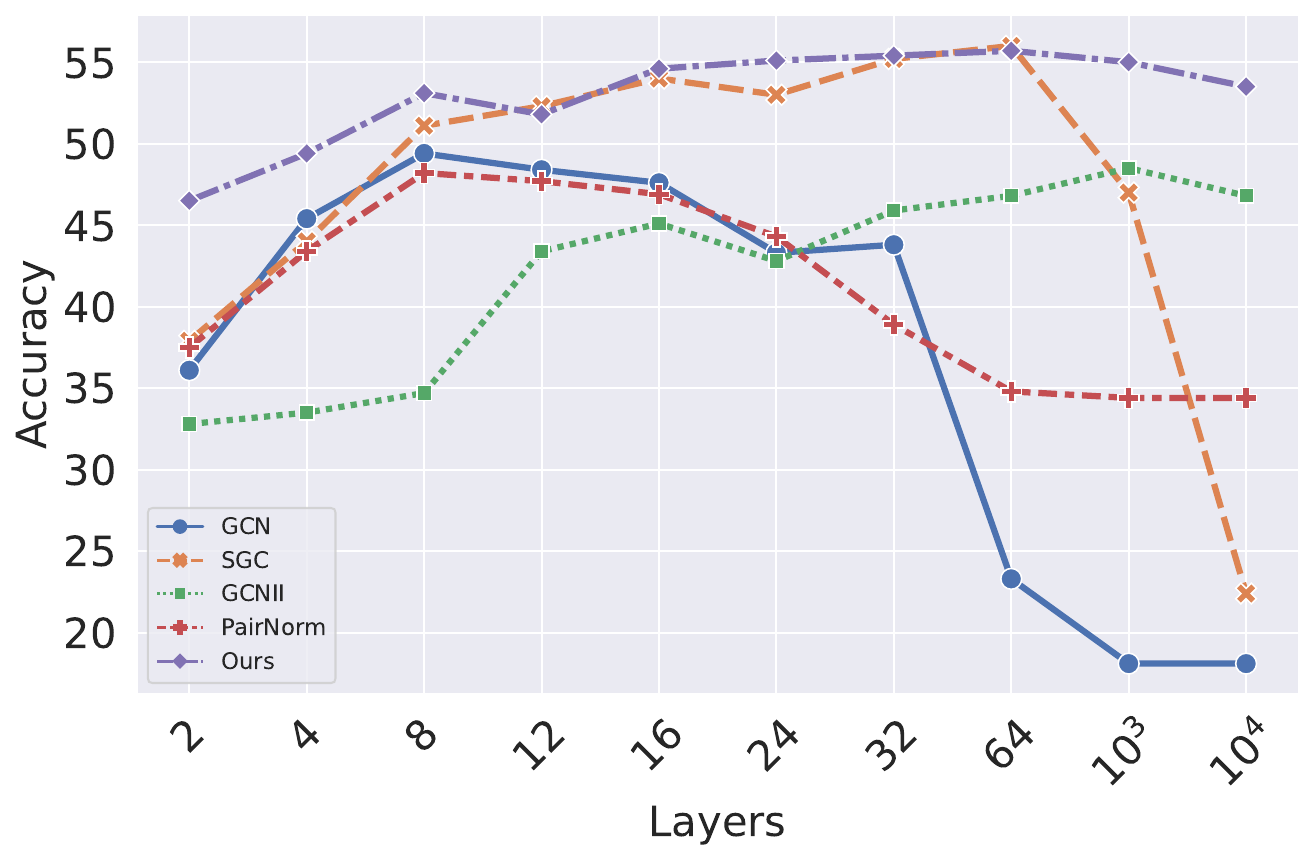}&
        \includegraphics[width=0.325\textwidth]{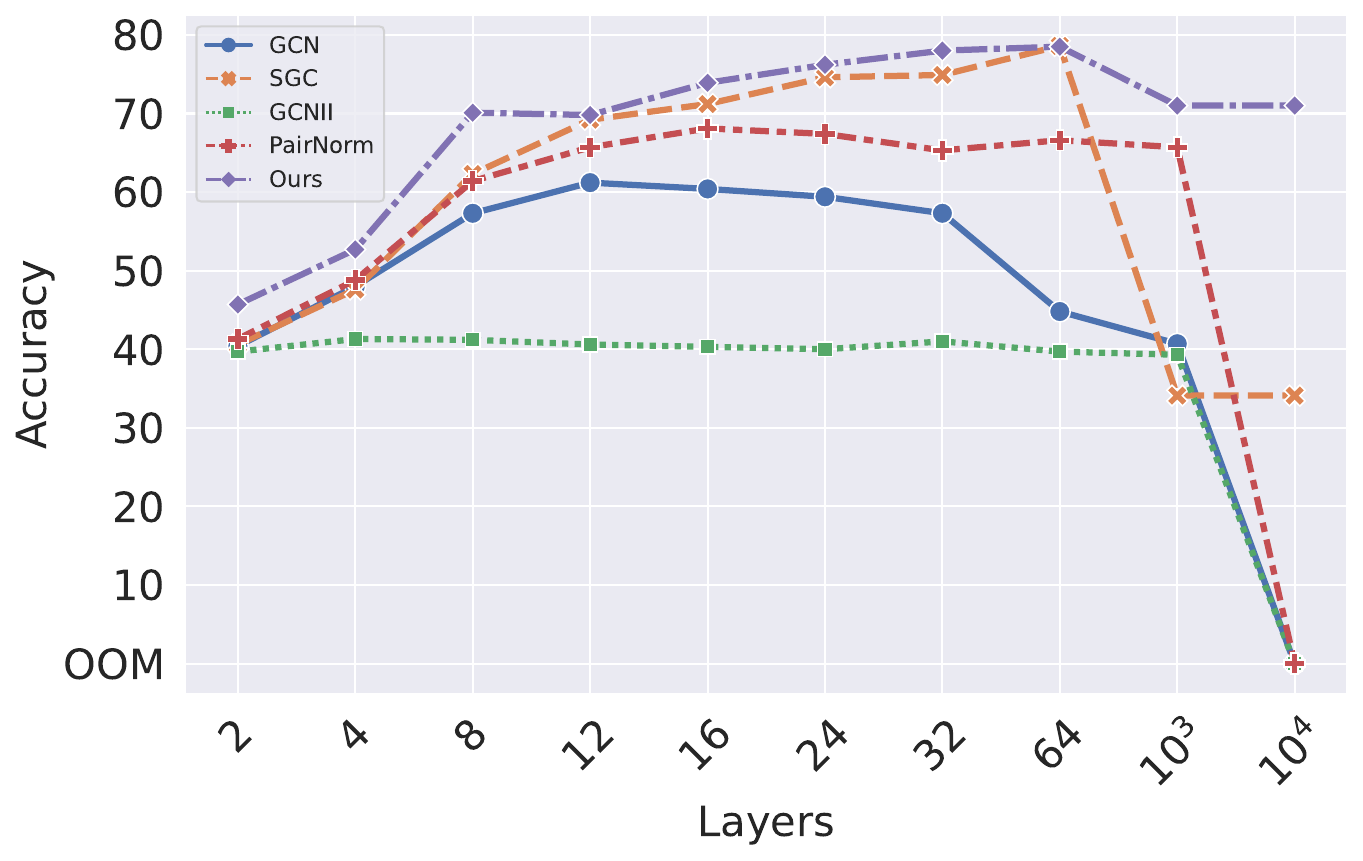}    
	\end{tabular}
	\caption{Results of different models with varying layers in node classification tasks with noisy features}
	\label{fig:noisy_features}
\end{figure*}

\section{Experimental Results}\label{Sec:Experiments}
In this section, we conduct extensive experiments to evaluate the effectiveness of our method (applied with GCN, GAT, and GIN) by comparing it with twelve baselines on twelve real-world graph benchmarks on semi-supervised node classification tasks.
Note that our method can be applied to more sophisticated  spatial propagation-based GNN backbones to further improve its performance, but we prefer basic ones to keep it simple and evaluate its potential.
Due to space limitations, some experimental details are given in supplementary materials including dataset descriptions, implementations, omitted results (\eg, with other layers, with different splits, comparisons with more baselines on heterophilous graphs, as well as standard errors), hyper-parameters (searching spaces and specific configurations), and some more visualizations.

\paragraph{Experimental Settings}

\label{Experiments_experimental_settings}

They are performed on an Ubuntu system with a single GeForce RTX $2080$Ti GPU ($12$GB Memory) and $40$ Intel(R) Xeon(R) Silver $4210$ CPUs.
And the proposed model is implemented by Pytorch \cite{p90_pytorch} and optimized with Adam Optimizer.
For a fair comparison, twelve real-world public benchmarks are chosen, including two kinds: 1) eight graphs with homophily: four widely used scientific citation networks (\ie, Core, Citeseer, Pubmed \cite{p86_citation_networks_cora_citeseer_pubmed}, and a large-scale graph OGBN-ArXiv \cite{p98_OGB}), scientific co-authorship networks Physics and CS \cite{p88_WikiCS}, as well as Amazon purchasing system Computers and Photo \cite{p87_Amazon}; 2) four graphs with heterophily: webpage datasets Texas, Wisconsin, and Cornell \cite{p91_Geom_GCN_Heterophily_graph_Texas_Wisconsin_Cornell} as well as an actor co-occurrence network Actor \cite{p92_Actor}.
Their statistics and adopted splits are summarized in Tab.~4 in supplementary materials.
We adopt the standard semi-supervised training/validation/testing splits for them following prior works \cite{p4_GCN,p39_GCNII,p52_deeper_gnn_tricks_overview}.
Furthermore, twelve baselines or state-of-the-art GNN models are applied for comparison including four vanilla classic models (GCN, SGC, GAT, and GIN), two spectral-based methods (ChebNet~\cite{p119_ChebNet} and BernNet~\cite{p118_BernNet}), a normalization-based method Pairnorm \cite{p53_pairnorm}, some residual connections-based methods including GCNII \cite{p39_GCNII}, GPRGNN \cite{p65_GPRGNN}, APPNP \cite{p66_APPNP}, JKNet \cite{p50_JKNet}, and DAGNN \cite{p51_towards_deeper_gnn}.
For a fair comparison with spectral-based baselines, we view the orders of Laplacian used in filters as the depths.

\paragraph{Node Classification with Homophily and Heterophily}
\label{Experiments_performance}

We call the proposed method applied to GCN, GAT, and GIN \textit{Ours(GCN)}, \textit{Ours(GAT)}, and \textit{Ours(GIN)}, respectively, where Ours(GCN) is the default, \ie, \textit{Ours}.
We run each experiment five times with different initializations, and report the average accuracies in Tab.~\ref{tab:comparison_1} and Tab.~\ref{tab:comparison_2} with varied numbers of layers on these benchmarks.
The standard errors and results with some other layers are given in supplementary materials.
From Tab.~\ref{tab:comparison_1} and \ref{tab:comparison_2}, 
it is shown that our model consistently achieved the best results against these state-of-the-art counterparts in almost all listed layers.
Notably, for Amazon Computers Dataset, we get a performance improvement compared to DAGNN of more than $6.8\%$ and $7.4\%$ in $32$ and $64$ layers, respectively.
As observed from Tab.~\ref{tab:comparison_2}, our method outperforms any other listed model by very large margins on four heterophilous graphs and the large-scale graph OGBN-ArXiv.
In supplementary materials, we also provide results with fully supervised random splits compared to the listed counterparts (see Tab.~16) and more baselines on these heterophilous graphs (see Tab.~17).
These results demonstrate its potential to alleviate over-smoothness in deep layers.


\paragraph{Node Classification with Noisy Features}
\label{Experiments_nosiy_features}

Sometimes features can provide enough meaningful supervision signals for node prediction, which veils the ability of a GNN for \textit{structure understanding}.
In this subsection, we evaluate our model in a challenging task called \textit{Node Classification with Noisy Features} where all node features are substituted by noise sampled from Standard Normal Distribution $\mathcal{N}(0,1)$ while only the structure of input graph remains.
This task is more difficult than that in \cite{p53_pairnorm}, since:
1) The feature substitution is conducted for all nodes instead of the nodes out of the training set only.
2) We make the features noisy instead of replacing them with zeros.
Intuitively, this task tests \textit{how deeply GNNs can understand the input structure}, \ie, whether they can capture more useful structural knowledge for alleviating the adverse effect of noise in features.
We adopt our standard model \textit{Ours(GCN)} itself as the teacher and the original graph as the auxiliary graph.
As observed from Fig. \ref{fig:noisy_features}, our model outperforms most evaluated baselines in nearly all layers by a significant margin, showing its effective extraction and preservation of structural semantics.
While SGC with no more than $64$ layers can achieve better results in some layers, it suffers from over-smoothness severely with sufficient large layers (\eg, $10^3$ or $10^4$ layers).

\paragraph{Ablation, Hyper-parameter Studies, and Visualizations}
\label{Sec:Experiments_ablation_studies_visualizations}
In order to demonstrate the effects of each individual part of our model and learning framework, we conduct extensive ablation studies following \cite{p39_GCNII} and report the results on seven graph benchmarks in Tab.~\ref{tab:ablation_studies}.
In the following, we take \textit{Ours(GCN)} as our standard model and independently drop each of the five parts: SoftGraphAIN (SG), residual connections (RC), R-SoftGraphAIN (R-SG), label smoothing (LS), and the holistic curriculum learning framework (CL), where \textit{w.o. X} means that we drop the part \textit{X}.
From Tab.~\ref{tab:ablation_studies}, we observe that every part contributes a portion to the  performance gain, among which R-SG is the most significant since it reduces the risk of features and structure forgetting simultaneously.
To facilitate a better understanding, we plot the varying effects of softly normalizing extents (the hyper-parameter $a$ in Eq.~\ref{equ:SoftGraphAIN}) in Fig.~\ref{fig:hyper-parameter_a}, from which we can see a comprehensive ascending-and-then-descending trend, showing the benefits of this soft version compared to the hard one.
In supplementary materials, we study some other hyper-parameters (\eg, $\alpha$ and $k_{KNN}$), and detailedly visualize the embeddings produced by our method and some counterparts.




\begin{table*}[t!]
\footnotesize
\centering
\begin{tabular}{c|c|ccccccc}
\toprule
Method & \#Layers & Cora & Citeseer & Pubmed & CS & Physics & Computers & Photo \\ \midrule
\multirow{2}{*}{w.o. SG}   & 32   & \multicolumn{1}{c}{83.30±0.12} & \multicolumn{1}{c}{72.98±0.50} & 80.26±0.98  & \multicolumn{1}{c}{91.79±0.07} & \multicolumn{1}{c}{94.23±0.25} & \multicolumn{1}{c}{82.34} & 90.26\\
       & 64    & 84.60±0.29 & 72.62±0.98  & 80.30±0.23 & 91.70±0.07  &  94.25±0.13  & 84.50  & 91.69\\ \midrule
\multirow{2}{*}{w.o. RC}   & 32   & \multicolumn{1}{c}{82.88±0.79} & \multicolumn{1}{c}{72.82±1.01} & 78.84±0.59 &\multicolumn{1}{c}{91.40±0.26} & \multicolumn{1}{c}{92.99±0.19} & \multicolumn{1}{c}{83.78} & 91.42\\
       & 64    & 82.40±0.33  & 71.04±0.55  & 78.60±0.27 & 89.03±0.62  & 92.59±0.63  & 84.25  & 91.14\\\midrule
       
\multirow{2}{*}{w.o. R-SG} & 32   & \multicolumn{1}{c}{40.22±5.71} & \multicolumn{1}{c}{29.32±3.47} & 46.28±4.91 &\multicolumn{1}{c}{51.61±18.23}         & \multicolumn{1}{c}{77.20±11.95}         & \multicolumn{1}{c}{61.52} & 83.77\\
       & 64    & 36.74±4.36  & 27.70±2.94  & 28.61±3.64 & 28.51±4.70  & 60.94±8.23  & 49.94  & 70.09\\\midrule
\multirow{2}{*}{w.o. LS}   & 32   & \multicolumn{1}{c}{83.96±0.65} & \multicolumn{1}{c}{73.64±0.68} & 81.04±0.21 & \multicolumn{1}{c}{92.02±0.05} & \multicolumn{1}{c}{94.10±0.11} & \multicolumn{1}{c}{84.52} & 91.50\\
       & 64    & 84.00±0.22  & 74.34±0.79  & 80.86±0.79 &  91.95±0.08 & 94.06±0.08  &  84.91  & 91.64\\\midrule

\multirow{2}{*}{w.o. CL}   & 32   & \multicolumn{1}{c}{82.40±0.65} & \multicolumn{1}{c}{73.20±0.77} & 79.34±0.39 & \multicolumn{1}{c}{89.60±0.20} & \multicolumn{1}{c}{92.46±0.61} & \multicolumn{1}{c}{84.00} & 90.58\\
       & 64    & 82.58±0.87  & 72.54±0.42  & 79.00±0.54 & 89.23±0.12  & 92.22±0.56  & 84.44  & 90.41\\\midrule
\multirow{2}{*}{Ours(GCN)} & 32   & \multicolumn{1}{c}{85.12±0.15}    & \multicolumn{1}{c}{74.42±0.26}    & 81.28±0.18    & \multicolumn{1}{c}{ 92.50±0.09 }    & \multicolumn{1}{c}{ 94.45±0.06}    & \multicolumn{1}{c}{ 85.21 }    &  92.06 \\ 
       & 64    & 84.87±0.26     & 74.50±0.23     & 81.58±0.66    &  92.41±0.07     &  94.52±0.04     &  85.13     &  92.03 \\ \bottomrule
\end{tabular}
\caption{Ablation studies on seven benchmarks including Cora, Citeseer, Pubmed, CS, Physics, Computers, and Photo}
\label{tab:ablation_studies}
\end{table*}

\begin{figure*}[t!]
    \centering
    \footnotesize
	\begin{tabular}{c@{}c@{}c}
	Cora & Citeseer & Pubmed\\
        \includegraphics[width=0.325\textwidth]{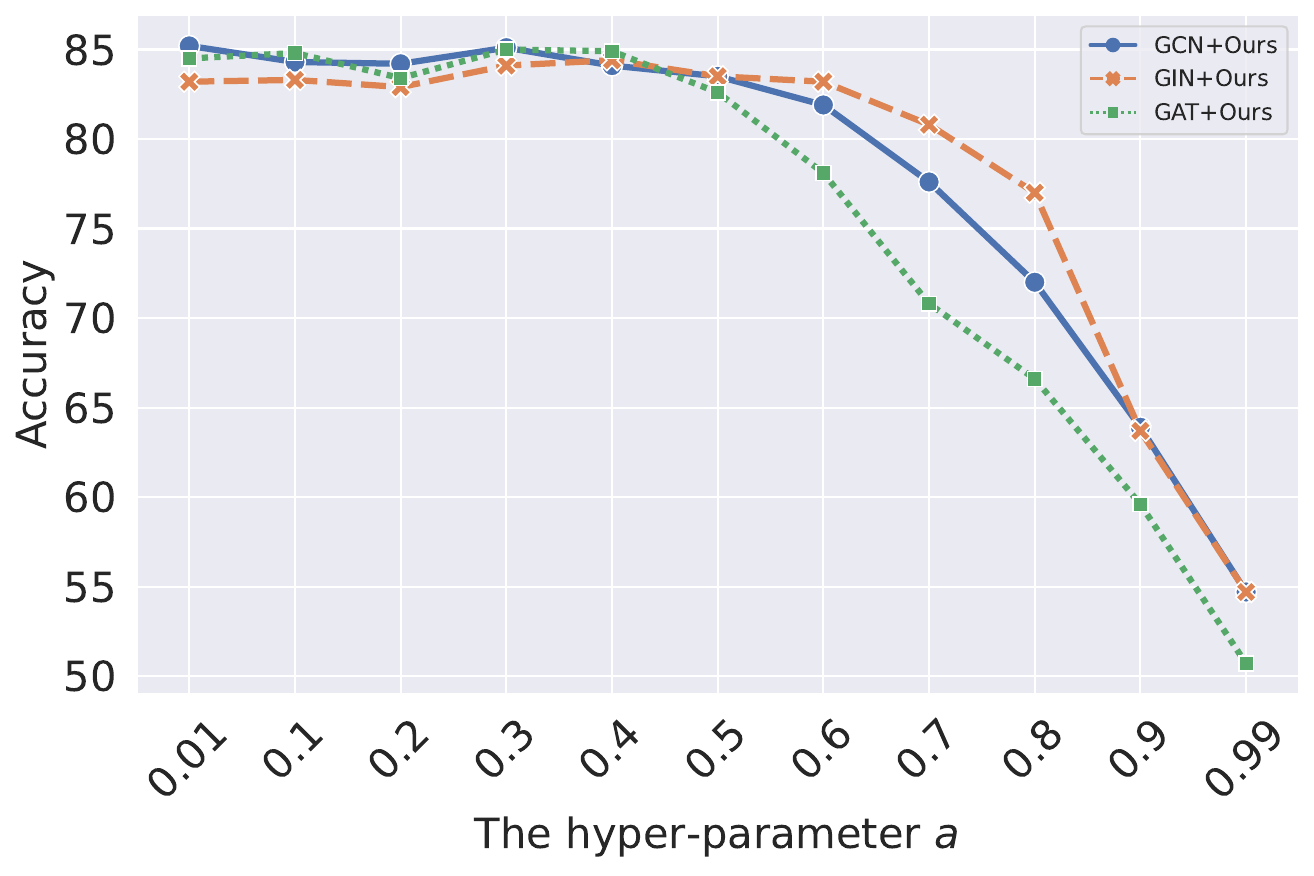}&
		\includegraphics[width=0.325\textwidth]{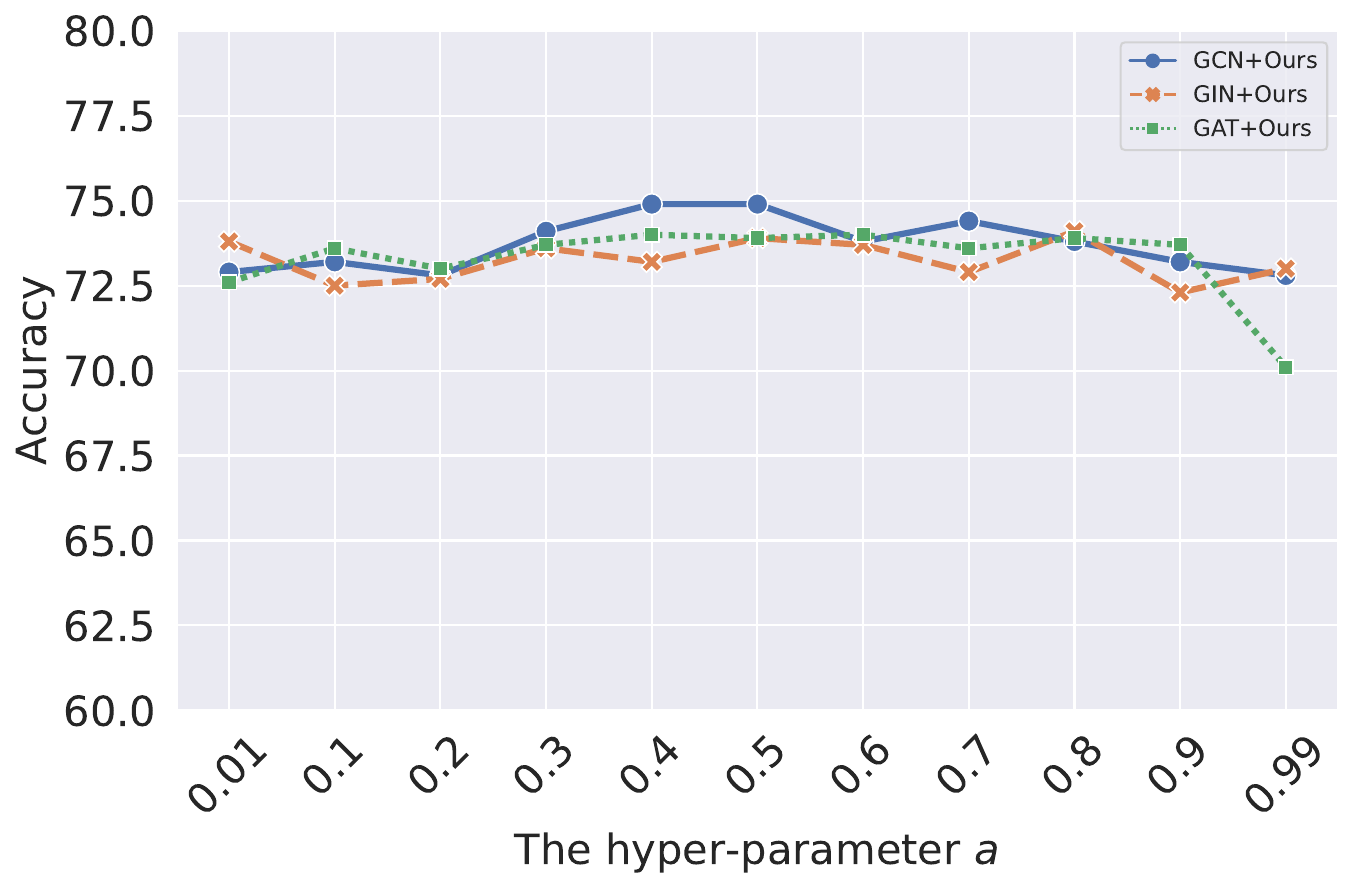}&
        \includegraphics[width=0.325\textwidth]{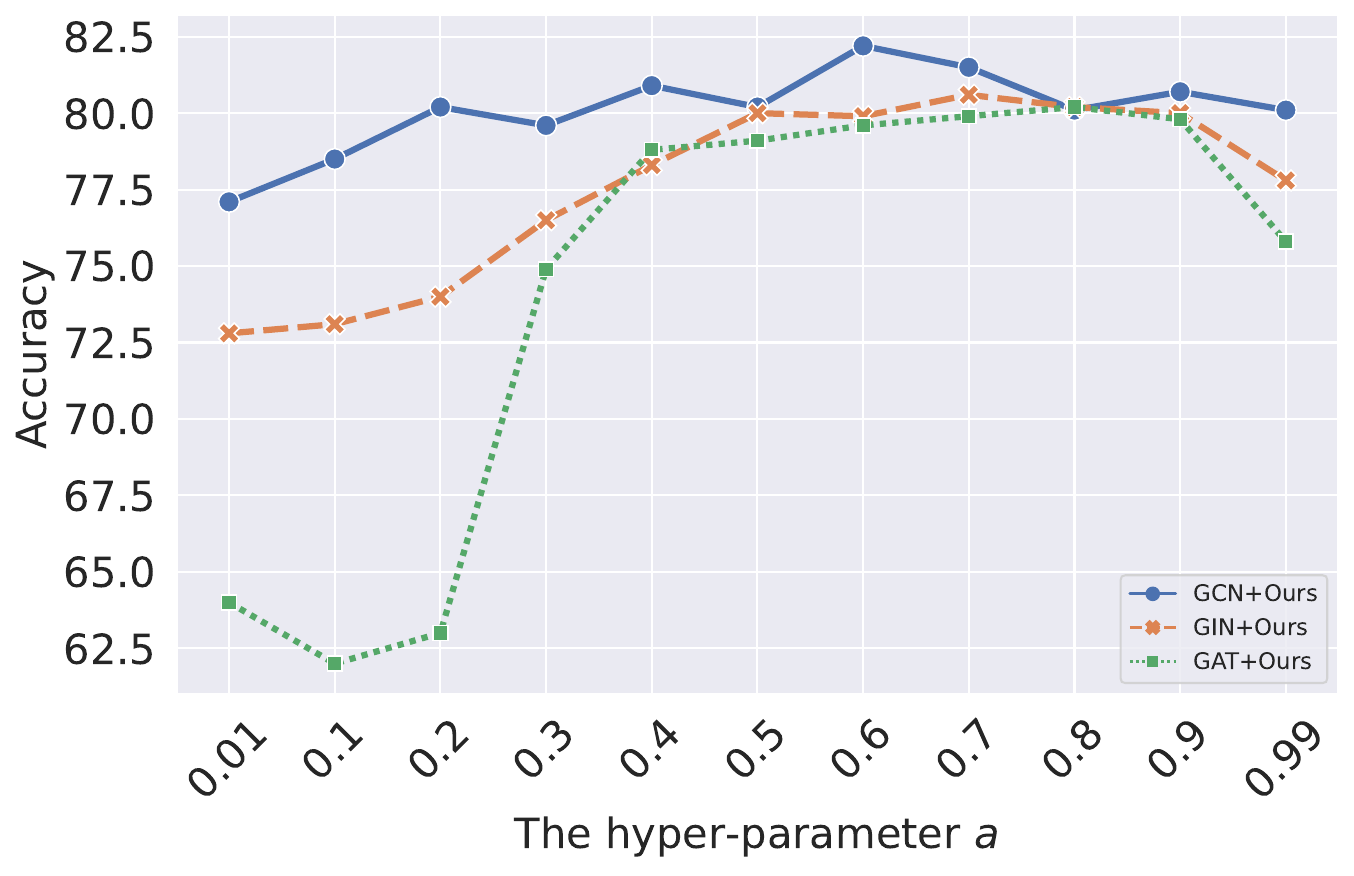}    
	\end{tabular}
	\caption{Results of Ours(GCN/GAT/GIN) against the varying hyper-parameter $a \in [0,1]$ controlling the normalizing extent}
	\label{fig:hyper-parameter_a}
\end{figure*}

\paragraph{Discussion On the Time and Space Complexities}
\label{sec:time_and_space_complexities}
The theoretical time complexity is analysed $O(L d^2 d_0)$ where $d,d_0 \ll n$ if partial-SVD or truncated-SVD~\cite{p117_TruncatedSVD} is utilized.
And it would become $O(L d^3)$ with a full SVD decomposition.
However, it is efficient under the high-parallelizability implemented via Pytorch.
The theoretical space complexity is $O\left(L\left(n+d\right)d\right)$.

\section{Conclusion}
\label{Sec:Conclusion_and_Limitations}
\focus{methods1:}In this paper, we propose \textit{R-SoftGraphAIN} to alleviate the over-smoothness of deep GNNs, by novelly employing soft normalization of the covariance matrix with appropriately incorporated residual connections.
\focus{details1:}We show in theory that the technique can maximally preserve the diversities of knowledge from both structures and features even at a sufficiently large depth against over-smoothness.
\focus{method2 and details:}Furthermore, in order to ease the difficulty of the optimization of deep GNNs, a label-smoothing-based curriculum learning framework (called \textit{SmoothCurriculum}) is proposed to intuitively encourage the encoder to digest knowledge from low- to high-frequency and to learn to discriminate nodes from coarse to fine.
\focus{experiments:}Extensive experiments were carried out against semi-supervised node classification tasks to show the effectiveness of our model by demonstrating its practical performance gain compared to twelve state-of-the-art baselines on twelve real-world graph benchmarks.
In future work, we will explore more applications of our method such as link prediction, graph classification, and community detection tasks.


\clearpage


\section*{Acknowledgements}
This work is supported by the National Science Foundation of China (Nos. 12271098) and Taishan Scholars Young Expert Project of Shandong Province (No. tsqn202211215) and the University-Industry Cooperation Project of Fujian Province, China (2023H6008).
The complete version of this paper can be found at \url{https://arxiv.org/abs/2312.08221} with all codes at \url{https://github.com/jslijin/Research-Paper-Codes/}.

\bibliography{aaai24}

\appendix

\clearpage

\appendix

\begin{algorithm*}[htbp!]
\caption{Fuzzy R-SoftGraphAIN (based on GCN)}
\label{algo:Fuzzy_R-SoftGraphAIN}

\vspace{0.2cm}

\textbf{Input}: adjacency matrix $A\in \{0,1\}^{n \times n}$, node features $H_0 = X \in \mathbb{R}^{n \times f}$ (with $n$ the node num and $f$ the feature dim)

\vspace{0.2cm}

\textbf{Output}: embeddings $H=H_L \in \mathbb{R}^{n\times d}$ (with $d$ the hidden dim)

\vspace{0.2cm}


\textbf{Hyper-Parameters}:

\vspace{0.1cm}

\quad \quad the coefficients in the soft graph normalization: $a, b \in [0,1]$ ($a+b\le 1$),

\vspace{0.1cm}

\quad \quad the coefficients in the skip connections: $\alpha, \gamma \in [0,1]$ ($\alpha + \gamma \le 1$), 

\vspace{0.1cm}

\quad \quad the coefficients in the fuzzy connections: $p, q \in [0,1]$,

\vspace{0.1cm}

\quad \quad the parameter in partial SVD algorithm: $d_0\le n$, 

\vspace{0.1cm}

\quad \quad the layer num: $L\in \mathbb{N_+}$, 

\vspace{0.1cm}

\quad \quad the non-linear activation function: $\sigma(\cdot)$ ($\operatorname{ReLU(\cdot)}$ default)

\vspace{0.2cm}

\textbf{Trainable-Parameters}: $\{W_{t}\}_{1\le t \le L}$ (in GCN)

\vspace{0.2cm}

\textbf{Note}: Here we consider general situations, where $W_1 \in \mathbb{R}^{f\times d}$ if $f \ne d$, otherwise $W_t \in \mathbb{R}^{d\times d}$ for $\forall t \in [1,L]$.

\vspace{0.2cm}

\begin{algorithmic}[1] 

\State $\tilde{A} \gets A + I_n$

\vspace{0.2cm}

\State $\tilde{D} \gets \mathrm{diag} \left( \tilde{A} \cdot \mathbf{1}_n \right)$  

\vspace{0.2cm}

\State $\hat{A} \gets \tilde{D}^{-\frac{1}{2}} \tilde{A} \tilde{D}^{-\frac{1}{2}}$

\vspace{0.2cm}

\State $T \gets I_n-\frac{1}{n} \mathbf{1}_n \mathbf{1}_n^T \in \mathbb{R}^{n \times n}$ 

\vspace{0.2cm}

\State $\beta \gets 1-\alpha - \gamma$, $\ q' \gets 1$

\Comment{calculate basic matrices $\hat{A}$ and $T$, and initialize $q'$}

\vspace{0.2cm}

\State $B_{0} \gets  T \cdot \hat{A} H_{0} W_1$

\Comment{obtain $B_0$ from $H_0$, for $t=1$}

\vspace{0.2cm}

\State $U_{d_0}, \Lambda_{d_0}^{\frac{1}{2}} \gets d_0 \mathrm{\text{-}SVD} \left(B_{0}^TB_{0}\right)$

\Comment{partial SVD with the parameter $d_0\le n$}

\Comment{$U_{d_0}, \Lambda_{d_0}^{\frac{1}{2}} \in \mathbb{R}^{n \times n}$ with $d-d_0 \  \ \mathbf{0}_n$s}

\vspace{0.2cm}

\State $ H_1 \gets \sigma \left( B_{0} \cdot \left[  a \cdot U_{d_0} \Lambda_{d_0}^{-\frac{1}{2} \cdot b} U_{d_0}^T + \left( 1-a \right) \cdot I_d \right] \right)$

\Comment{obtain $H_1$}

\vspace{0.2cm}


\State $S_{last} \gets H_1$, $\ \ S_{init} \gets H_1$

\Comment{initialize two fuzzy links}

\vspace{0.2cm}

\For{each $t \in [2,L]$}


\vspace{0.2cm}

\State $B_{t-1} = \alpha \cdot T \hat{A} H_{t-1} W_t + \beta \cdot S_{last} + \gamma \cdot S_{init}$
\label{appendix:algo_state1}

\Comment{obtain $B_{t-1}$ from $H_{t-1}$}

\vspace{0.2cm}

\State $U_{d_0}, \Lambda_{d_0}^{\frac{1}{2}} \gets d_0 \mathrm{\text{-}SVD} \left(B_{t-1}^TB_{t-1}\right)$

\Comment{partial SVD for $B_{t-1}^TB_{t-1}$ with $d_0\le n$}

\Comment{$U_{d_0}, \Lambda_{d_0}^{\frac{1}{2}} \in \mathbb{R}^{n \times n}$ with $d-d_0 \  \ \mathbf{0}_n$s}

\vspace{0.2cm}

\State $H_t \gets \sigma\left(B_{t-1} \cdot \left[  a \cdot U_{d_0} \Lambda_{d_0}^{-\frac{1}{2} \cdot b} U_{d_0}^T + \left( 1-a \right)  I_d \right]\right)$

\Comment{obtain $H_t$ from the SVD information related to $B_{t-1}$}

\vspace{0.2cm}

\State $q' \gets q' \cdot q$

\Comment{update $q'\in [0,1]$}

\vspace{0.2cm}

\State $S_{last} \gets S_{last} \cdot p + H_t$

\vspace{0.2cm}

\State $S_{init} \gets S_{init} + q' \cdot H_t$

\Comment{update two fuzzy links $S_{last}$ and $S_{init}$ (will be used in Line~\ref{appendix:algo_state1})}

\vspace{0.2cm}

\EndFor

\vspace{0.2cm}

\State \textbf{return} $H_L$

\vspace{0.2cm}

\end{algorithmic}
\end{algorithm*}

\section{The Organization of Supplementary Material}
The supplementary material is organized as follows:
\begin{itemize}
    \item some more related work in Sec.~\ref{sec_appendix:related_work};
    \item Algorithm~\ref{algo:Fuzzy_R-SoftGraphAIN} and its detailed description (including the details on \textit{Fuzzy Connections}) omitted in the main text due to space limit (see Sec.~\ref{sec_app:algorithm_1});
    \item an additional pre-processing node filtering operation in our implementation of the proposed curriculum learning framework (see Sec.~\ref{sec_app:node_filtering});
    \item the downstream linear classifier and loss function for node classification tasks in Sec.~\ref{sec_app:liner_classifier};
    \item omitted proofs in Sec.~\ref{sec_appendix:proofs};
    \item experimental details including dataset descriptions, more experimental results, implementation details, and hyper-parameter configurations in Sec.~\ref{sec_appendix:experimental_details};
    \item extensive visualizations in Sec.~\ref{sec_appendix:more_visualization};
    \item the time/space complexities in Sec.~\ref{sec_appendix:time_and_space_complexities};
    \item additional hyper-parameter studies on $\alpha$, $\gamma$, and $k_{KNN}$ in Sec.~\ref{sec_appendix:Additional_Hyper-parameter_Studies};
    \item discussion on the connections between R-SoftGraphAIN and SmoothCurriculum in Sec.~\ref{sec_appendix:connection_between_two_parts}.
\end{itemize}


%

\section{More Possibly Related Literature}
\label{sec_appendix:related_work}
In this section, we give a short review of more related work in order to enrich the background of this paper and improve its completeness.
However, we think the literature referenced in the main text is enough to understand the proposed method for readers.

Though some methods also aim to alleviate over-smoothness in deep GNNs, they have different design motivations or perspectives.
PDE-GCN~\cite{p101_Pde-gcn} and \cite{p102_Graph-coupled} are both ODE- or PDE-based methods.
\cite{p99_Grand} and \cite{p104_Neural_sheaf_diffusion} are designed from different diffusion processes.
\cite{p100_Beyond_homophily} is mainly designed by dealing with heterophily and does not focus on the over-smoothness.
\cite{p103_pathgcn} discards the traditional aggregation operator in GCN and proposes a novel path-based or random walk-based operator.
Its authors argue that the over-smoothness will not happen in this novel aggregation process and one can naturally avoid the necessity of alleviating this issue.

Besides, many works consider improving the training processes of models from different perspectives.
\cite{p106_Rethinking_graph_regularization} is based on regularization.
\cite{p105_GraphMix} and \cite{p107_Regularizing_Graph_Neural_Networks_via_Consistency_Diversity_Graph_Augmentations} are augmentation-based methods.
And compared to them, we propose a novel curriculum learning framework to find local minima with better generalizing characteristics.
\cite{p108_Sape} proposes an optimization algorithm sharing a similar concept of training from low to high frequency.
However, we are essentially different works from different domains with the aim of solving different problems.
\cite{p109_label_propagation_1} and \cite{p110_label_propagation_2} are also label propagation-related works.
However, they seem to have little connection with ours.

\section{Algorithm 1 (Fuzzy R-SoftGraphAIN)}
\label{sec_app:algorithm_1}
Recall Sec.~\ref{sec:Additional_Residual_Combination} in the main text.
For clearer understanding and better reproducibility, here we give an implementation of the comprehensive structure of the proposed \textit{Fuzzy R-SoftGraphAIN} in Algorithm~\ref{algo:Fuzzy_R-SoftGraphAIN}.

We give a brief description of Algorithm~\ref{algo:Fuzzy_R-SoftGraphAIN}:
\begin{itemize}
    \item Lines $1 \to 3$ compute the symmetrically normalized adjacency matrix $\hat{A}$ from $A$.
    \item Lines $4 \to 5$ compute the centering operator $T$, which can normalize the row mean vector of a node embedding matrix into $\mathbf{0}$, as well as the parameter $\beta$, and initializes $q'=q^0=1$.
    \item Lines $6 \to 8$ obtain $H_1$ from $H_0$ without any skip connection.
    \item Line $9$ initializes the matrices $S_{last}$ and $S_{init}$, which represent the items in fuzzy \textit{last} residual connection and fuzzy \textit{initial} connection, respectively (please kindly see the specific introduction in the following subsection).
    \item Line $10 \to 17$ describe the comprehensive updating process of node embedding matrix $H_t$.
    \item Line $11$ (corresponding to Eq.~$8$ in the main text) computes $B_{t-1}$ from $H_{t-1}$ with \textit{fuzzy connections} (please kindly see the specific introduction in the following subsection).
    \item Line $12$ does a partial singular value decomposition retaining the $d_0$ most dominating components of $B_{t-1}^TB_{t-1}$.
    \item Line $13$ computes $H_t$ from these components.
    \item Line $14$ updates $q'$ and keeps $q'=q^{t-1}$.
    \item Line $15 \to 16$ update the items $S_{last}$ and $S_{init}$ in \textit{fuzzy connections}.
    \item Line $18$ returns the final node embedding matrix $H_L$, which will be fed into a linear classifier to solve the subsequent downstream tasks such as node classification (see Sec.~\ref{sec_app:liner_classifier}).
\end{itemize}

\begin{figure*}[htbp]
\centering
\includegraphics[width=0.86\textwidth]{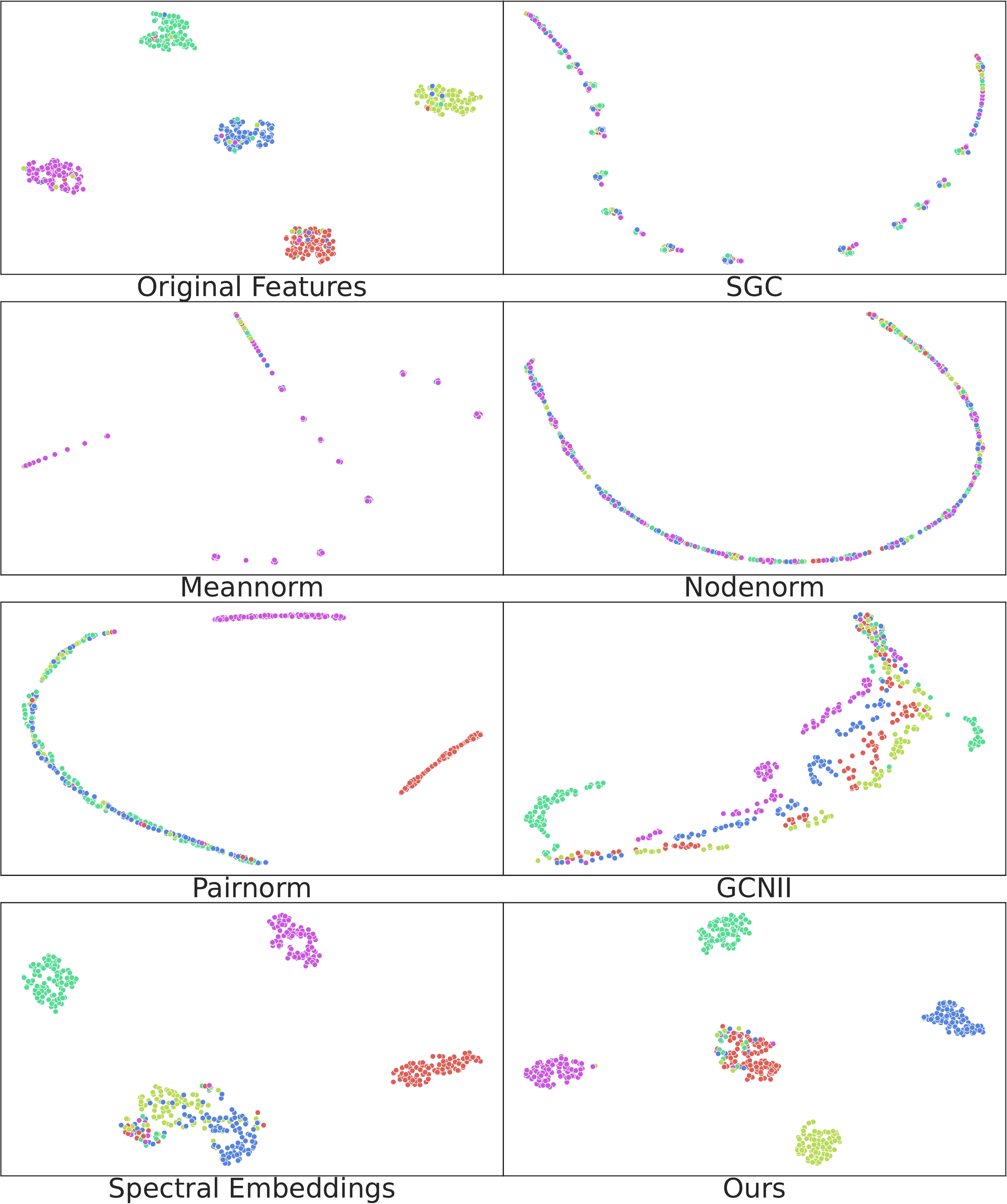}
\caption{Scatter plots for different deep models on synthetic data (considering only forward pass without trainable parameters). 
This vividly illustrates the phenomenon of the over-smoothness where node embeddings tend to collapse to a line or several line segments. We can clearly observe that ours can successfully keep the clusters away from blending, thus effectively alleviating the over-smoothness.
}
\label{fig_appendix:vis_comp1}
\end{figure*}

\subsection{Fuzzy Connections: Smoother Versions For Skip (Residual and Initial) Connections}
To counteract the negative impacts of possible noise in the node embeddings in an individual layer (\ie, $H_{t-1}$ for a residual connection and $H_{1}$ for an initial connection~\footnote{Note that we have substituted $X$ by $H_1$ due to possible mismatch between their dimensions (see the last two lines in Sec.~\ref{sec:Additional_Residual_Combination} in the main text), which makes our implementation more convenient.}), we design a smoother version of these two connections called fuzzy residual connection and fuzzy initial connection, respectively.
They can be specifically formulated by expanding Line $12$ in Algorithm~\ref{algo:Fuzzy_R-SoftGraphAIN} as follows.
\begin{equation}
\begin{split}
S_{init}^{(t)} &= \sum_{i=1}^{t-1} q^{i-1}\cdot H_{i} \in \mathbb{R}^{n \times d}, \\
S_{last}^{(t)} &= \sum_{i=1}^{t-1} q^{t-1-i}  \cdot H_{i} \in \mathbb{R}^{n \times d}.
\end{split}
\end{equation}
Then we can reformulate Line $8$ in Algorithm~\ref{algo:Fuzzy_R-SoftGraphAIN} as:
\begin{equation}
B_{t-1} = \alpha \cdot T \hat{A} H_{t-1} W_t + \beta \cdot S_{last}^{(t)} + \gamma \cdot S_{init}^{(t)}.
\end{equation}
One can easily check that $S_{init}^{(t)} \to H_1$ and $S_{last}^{(t)} \to H_{t-1}$ when $q \to 0$, which is a special degenerate case, \ie, vanilla initial and residual connections.

\begin{table*}[htbp]
\centering
\small
\begin{tabular}{c|ccccc|c}
\toprule
Dataset & Nodes & Edges & Ave.Degree & Features & Classes &  train/val/test \\ \midrule
Cora & 2,708 & 5,429 & 4.0 & 1,433 & 7 & 140/500/1000 \\
Citeseer & 3,327 & 4,732 & 2.84 & 3,703 & 6 & 120/500/1000 \\
PubMed & 19,717 & 44,338 & 4.5 & 500 & 3 & 60/500/1000 \\ \midrule
Coauthor CS & 18,333 & 81,894 & 8.93 & 6,805 & 15 & 300/450/17583 \\
Coauthor Physics & 34,493 & 247,962 & 14.38 & 8,415 & 5 & 100/150/34243 \\
Amazon Computers & 13,381 & 245,778 & 36.74 & 767 & 10 & 200/300/12881 \\
Amazon Photo & 7,487 & 119,043 & 31.8 & 745 & 8 & 160/240/7087 \\ \midrule
Texas & 183 & 309 & 3.38 & 1,703 & 5 & 87/59/37 \\
Wisconsin & 251 & 499 & 5.45 & 1,703 & 5 & 120/80/51 \\ 
Cornell & 183 & 295 & 3.22 & 1,703 & 5 & 87/59/37 \\ 
Actor & 7600 & 33544 & 8.83 & 931 & 5 & 3648/2432/1520 \\ \midrule
OGBN-ArXiv & 169,343 & 1,166,243 & 13.77 & 128 & 40 & 91446/30482/47415 \\ 
\bottomrule
\end{tabular}
\caption{Dataset statistics of twelve real-world benchmarks with their splits}
\label{tab_appendix:dataset_statistics}
\end{table*}

\subsection{How To Apply It On Other GNN Encoders}
Since our method does not essentially depend on or alter the aggregating operations employed in GCN, it can be naturally viewed as a plug-and-play module for nearly all spatial aggregation-based GNN encoders.
In our experiments, in addition to GCN~\cite{p4_GCN}, we also evaluate our method based on GAT~\cite{p8_GAT} and GIN~\cite{p7_GIN} (\ie, \textit{Ours(GAT)} and \textit{Ours(GIN)}), and we found that on some graph benchmarks (\eg, OGBN-ArXiv) they perform quite competitive or better than \textit{Ours(GCN)} even adopting the exactly same hyper-parameters as \textit{Ours(GCN)}'s without any further tuning.
Here, we give a short description of how to achieve that. 

From Line $6$ and Line $11$ in Algorithm~\ref{algo:Fuzzy_R-SoftGraphAIN}, one can easily find that the formula $\hat{A}H_{t-1}W_{t}$ keeps intact as that in the $t$-th layer of GCN, which might imply that it can be substituted by some other aggregating formulas in another GNN encoders, \ie, a universal form of $H_t = \operatorname{Agg}(H_{t-1};W_{t})$.

More specifically, in GAT, it is:
\begin{equation}
    H_t^{(i)} = {\bigg\|}_{k=1}^K \sigma \left(\sum_{j \in \mathcal{N}_i} \alpha_{ij}^k {W_t}^k H_{t-1}^{(j)} \right),
\end{equation}
where $H_t^{(i)}$ denotes the embeddings of node $i\in V$ at layer $t\in [1,L]$, $K$ is the number of attention heads, $\mathcal{N}_i$ is the first-order neighborhood of node $i$, and $\alpha_{ij}^k$ denoting the attention coefficient from node $i$ to node $j$ via head $k$ can be represented as follows:
\begin{equation}
    \alpha_{ij} = \frac{\operatorname{exp}\left(\operatorname{LeakyReLU}\left(\left[H^{(i)} \| H^{(j)}\right]W'\right)\right)}{\sum_{j\in \mathcal{N}_i} \operatorname{exp}\left(\operatorname{LeakyReLU}\left(\left[H^{(i)} \| H^{(j)}\right]W'\right)\right)},
\end{equation}
where $W'$ is another trainable matrix.

As for GIN, $H_t = \operatorname{Agg}(H_{t-1};W_{t})$ has a simple formula:
\begin{equation}
H_t^{(i)} = \operatorname{MLP} \left( \left( 1 + \epsilon \right) \cdot H_{t-1}^{(j)} + \sum_{j \in \mathcal{N}_i} H_{t-1}^{(j)} \right).
\end{equation}

Note that although we can always apply it to more sophisticated GNN encoders in addition to the selected basic ones for the sake of further performance improvement, we want to keep it simple and evaluate its potential.
Some expanding work might be left as future work.

\section{Additional Node Filtering Pre-Processing Operation in Curriculum Learning Framework}
\label{sec_app:node_filtering}
We know that there would exist possible noises in the label estimation process (Sec.~\ref{Sec:Label_Estimation_and_Auxiliary_Graph}).
Furthermore, we found in experiments that the negative impacts introduced by these nettlesome noises could be accumulated during the subsequent smoothing process (Sec.~\ref{Sec:label_smoothing_and_curriculum_design}), which would potentially damage the final performance to some non-negligible extent.
To alleviate this issue, we apply an additional node filtering operation after label estimation based on the unconfidence scores estimated by the following normalized entropy function of these pseudo labels (\ie, lots of probability distributions):
\begin{equation}
H(p) = \frac{- \sum_{i=1}^{C} p_i \cdot \log\left( p_i \right)}{\log C} \in [0,1],
\end{equation}
where we consider $p \in \mathbb{R}^{C}$ as a probability distribution or a pseudo label of some node $v\in V$ and $C$ is the number of classes in the node classification tasks.
We filter those nodes with too large normalized entropy values out by substituting their pseudo labels via $C$-dimensional all-zero vectors because we think that they are unconfident predictions with little meaningful knowledge.

And in this process, we introduce a hyper-parameter $mask\_ratio \in [0,1]$ to control how many nodes we hope to filter out (see the hyper-parameter Tab.~\ref{tab_appendix:Hyper_parameters_configurations} and Tab.~\ref{tab_appendix:hyper-parameters_search_spaces}).

Besides, after this filtration, some pseudo-labels would become invalid probability distributions during the smoothing process.
To fix this issue, we re-normalize all the pseudo-labels~\footnote{to make the sum of elements in every pseudo-label equal to $1$}.

Please kindly notice that there would be some other alternative methods to alleviate the negative impacts due to noise.
Here we just give an example, which is employed in our experiments.

\begin{figure*}[htbp]
    \centering
    \footnotesize
	\begin{tabular}{c@{}c@{}c}
	Cora & Citeseer & Pubmed\\
        \includegraphics[width=0.325\textwidth]{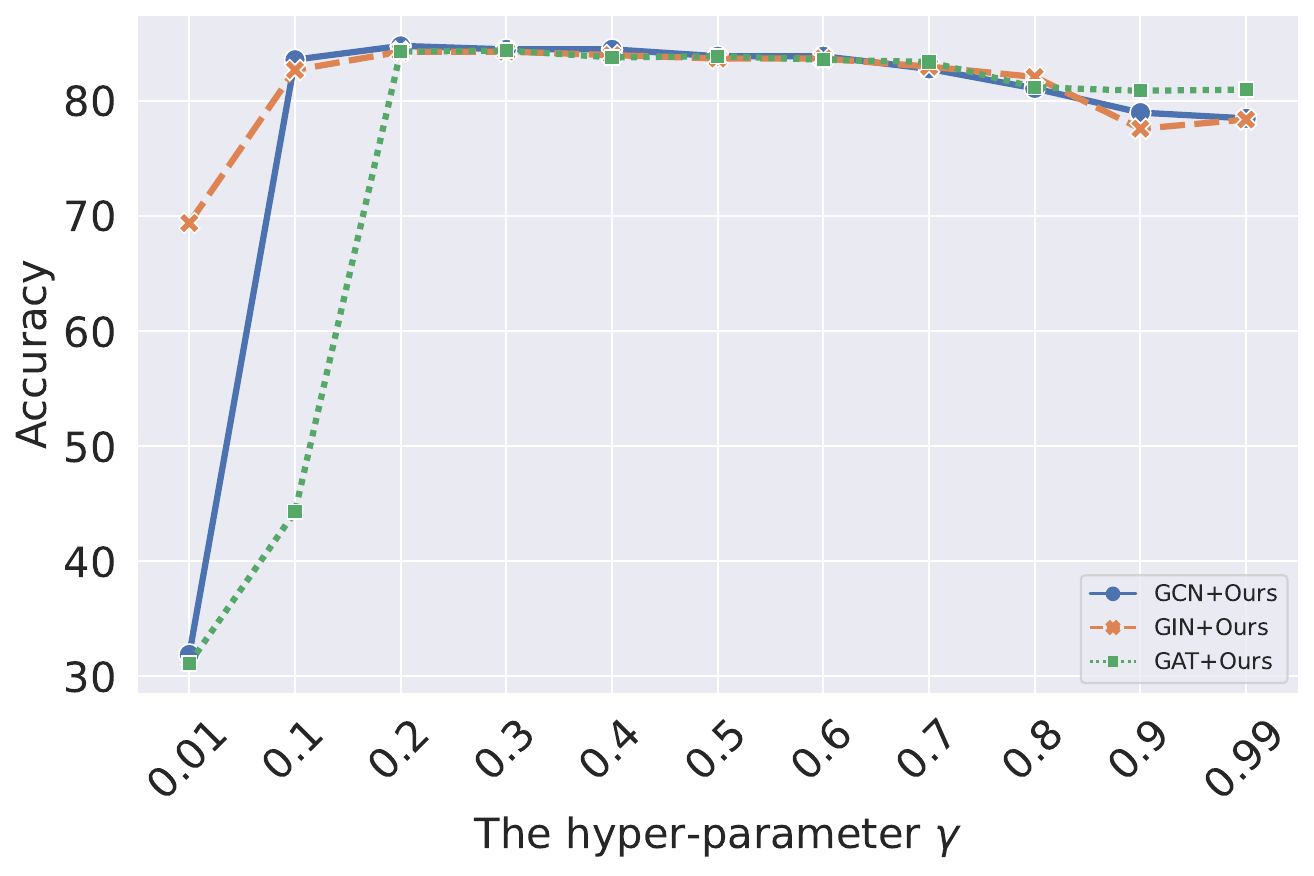}&
		\includegraphics[width=0.325\textwidth]{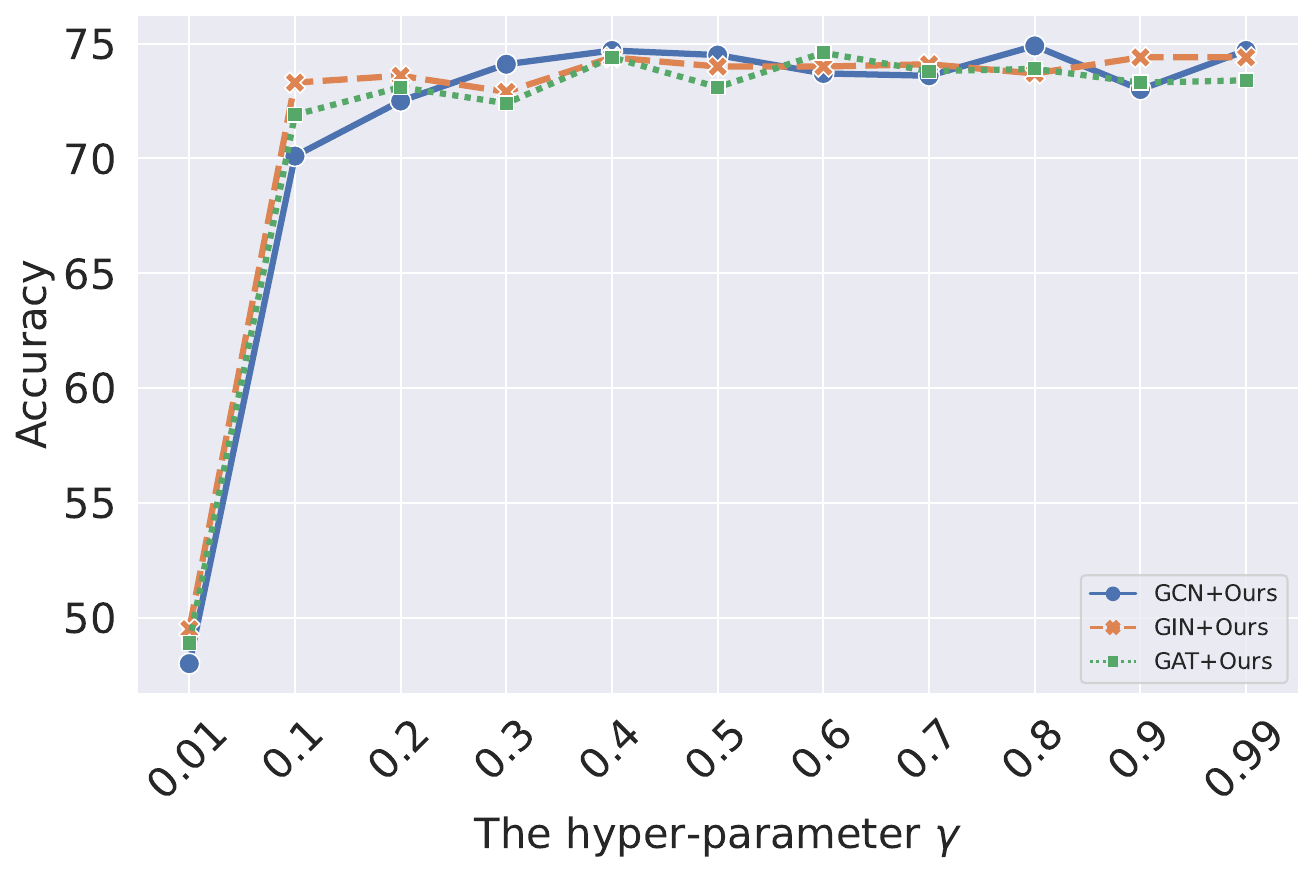}&
        \includegraphics[width=0.325\textwidth]{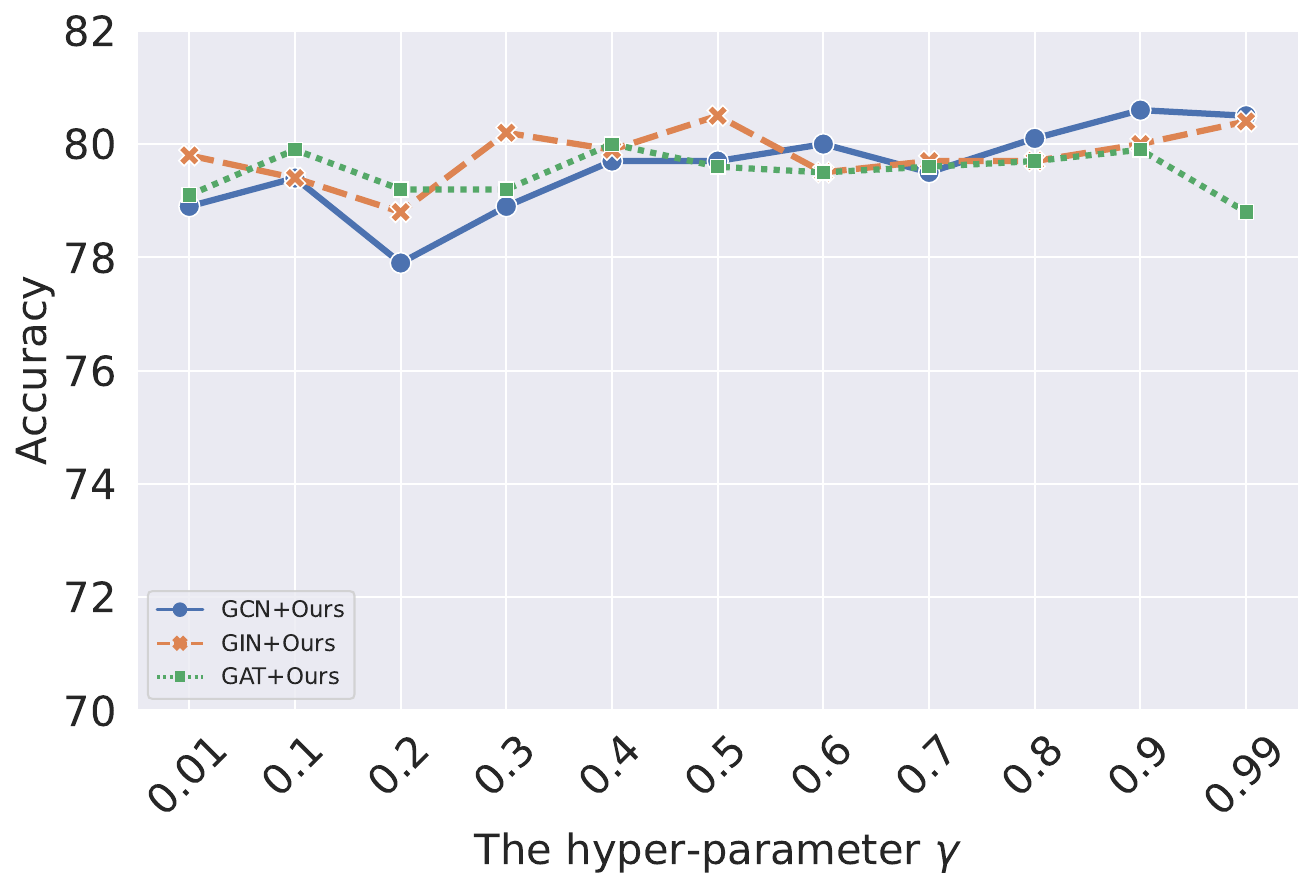}    
	\end{tabular}
	\caption{Results of Ours(GCN/GAT/GIN) against the varying hyper-parameter $\gamma \in [0,1]$ controlling the coefficient of the trade-off in Skip Connections, \ie, between initial connections and residual connections}
	\label{app_fig:hyper-parameter_gamma}
\end{figure*}

\begin{figure*}[htbp]
    \centering
    \footnotesize
	\begin{tabular}{c@{}c@{}c}
	Cora & Citeseer & Pubmed\\
        \includegraphics[width=0.325\textwidth]{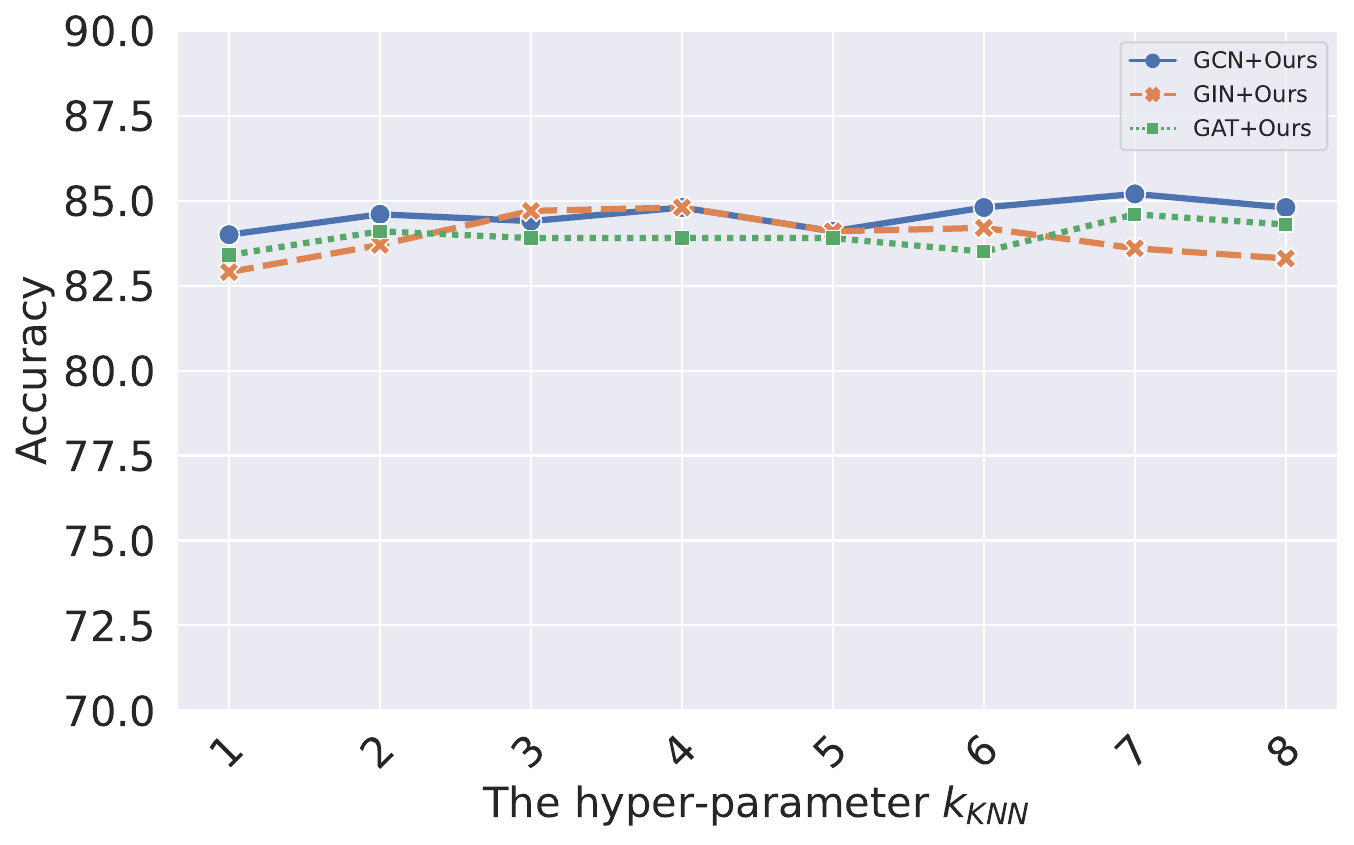}&
		\includegraphics[width=0.325\textwidth]{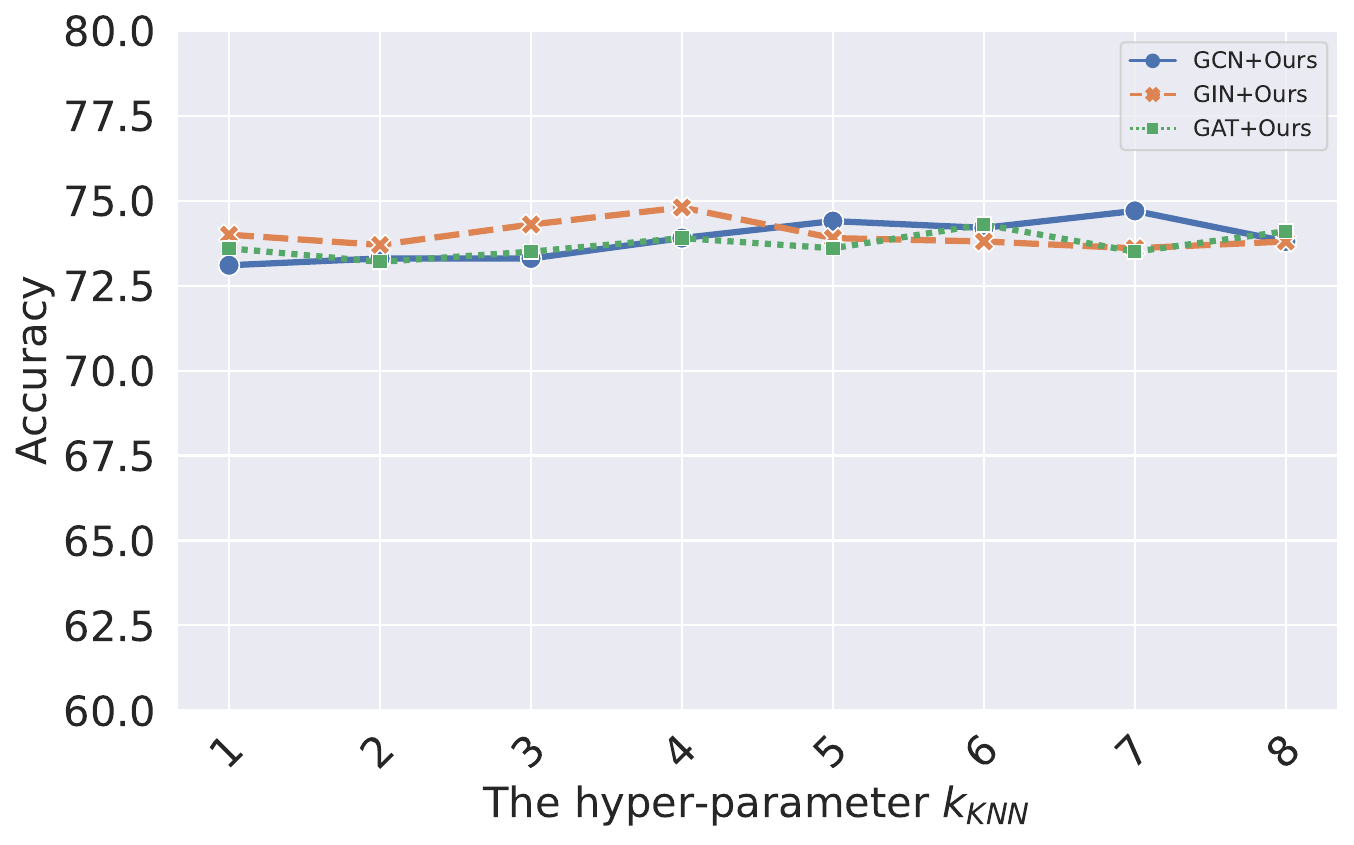}&
        \includegraphics[width=0.325\textwidth]{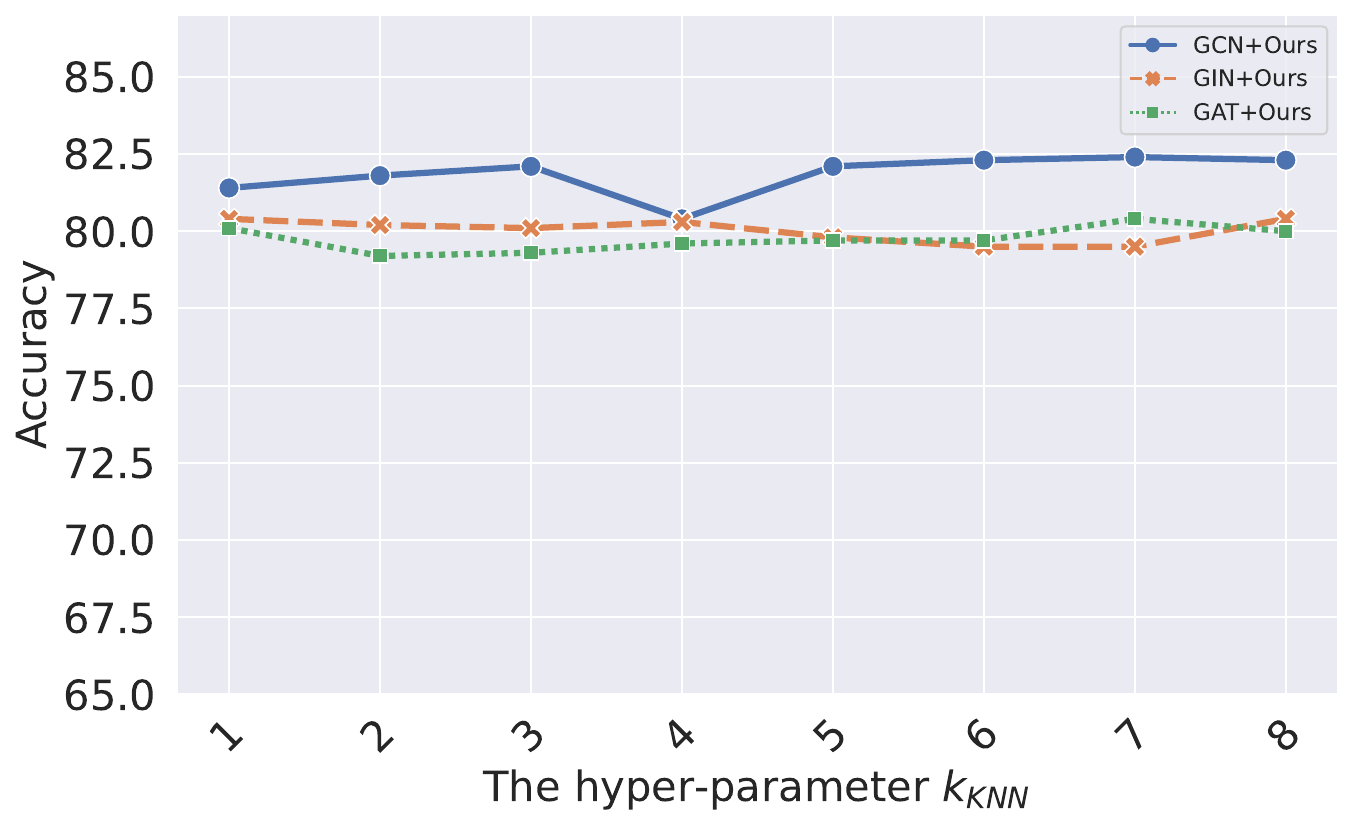}    
	\end{tabular}
	\caption{Results of Ours(GCN/GAT/GIN) against the varying hyper-parameter $k_{KNN} \in \mathbb{N}_+$ controlling the number of the closest neighbors chosen by the vanilla KNN algorithm for every node}
	\label{app_fig:hyper-parameter_k}
\end{figure*}

\section{Downstream Linear Classifier and Loss Function for Node Classification Tasks}
\label{sec_app:liner_classifier}
We employ the following downstream linear classifier and the cross-entropy loss function:
\begin{equation}
\begin{split}
L_{CE}(H^{(L)},y) &= - \frac{1}{|V|} \sum_{v\in V} \sum_{c\in C} y_v(c) \cdot \log \hat{y}_v(c), \\
\hat{y}_v &= \operatorname{Softmax} \left(H^{(L)}_v W_{cls}\right) \in \mathbb{R}^{1 \times C},
\end{split}
\end{equation}
where $H^{(L)}$ is the final node embedding matrix, $y$ is the ground-truth one-hot node labels, and $W_{cls} \in \mathbb{R}^{d \times C}$ is a trainable projection matrix.
We train our model end-to-end by minimizing this loss function.

Note that we should substitute the $y$ vector with the corresponding node pseudo-labels used in different stages of the proposed curriculum learning framework.

\section{Omitted Proofs}
\label{sec_appendix:proofs}

In this section, we give the proofs of the theorems and corollaries omitted in the main text.

\begin{theorem}
\label{theorem_app:hard_GraphAIN_basic_characteristics}
(Corresponding to Theorem $1$ in the main text)

$\forall \ t \ge 1$, GraphAIN satisfies that:

1) $\mathbf{1}_n^T B_t = \mathbf{1}_n^T H_t = \mathbf{0}_n$;
\quad 2) $H_t^T H_t = I_d$;

3) $T H_t = H_t$;
\quad \quad \quad \quad \quad 4) $B_{t} = \Bar{A} H_{t}$;

where $\Bar{A} = T\hat{A}T$ denotes a doubly centered version of $\hat{A}$.
\end{theorem}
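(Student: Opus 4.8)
The plan is to reduce all four items to two elementary properties of the centering operator $T = I_n - \tfrac{1}{n}\mathbf{1}_n\mathbf{1}_n^T$: it is idempotent, $T^2 = T$, and it annihilates the all-ones vector, $T\mathbf{1}_n = \mathbf{0}$ (equivalently $\mathbf{1}_n^T T = \mathbf{0}$). Both follow in one line by expanding the products and using $\mathbf{1}_n^T\mathbf{1}_n = n$. Throughout I would assume, as is implicit in the definition of GraphAIN, that each $B_{t-1}$ has full column rank, so that the symmetric positive definite matrix $B_{t-1}^T B_{t-1}$ admits a unique symmetric positive definite inverse square root $\left(B_{t-1}^T B_{t-1}\right)^{-1/2}$; this full-rank condition is the only genuine hypothesis, and I would flag it explicitly. (In the degenerate case one replaces the inverse by the Moore--Penrose pseudoinverse and all four conclusions persist, with $I_d$ replaced by the appropriate orthogonal projector.)

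No induction is needed; each item follows by direct substitution. For (1), since $B_t = T\hat{A}H_t$ for every $t \ge 1$, left-multiplying by $\mathbf{1}_n^T$ and using $\mathbf{1}_n^T T = \mathbf{0}$ gives $\mathbf{1}_n^T B_t = \mathbf{0}$; then $\mathbf{1}_n^T H_t = \left(\mathbf{1}_n^T B_{t-1}\right)\left(B_{t-1}^T B_{t-1}\right)^{-1/2} = \mathbf{0}$. For (2), writing $P = B_{t-1}^T B_{t-1}$ and using that $P^{-1/2}$ is symmetric, $H_t^T H_t = P^{-1/2}\,B_{t-1}^T B_{t-1}\,P^{-1/2} = P^{-1/2} P\, P^{-1/2} = I_d$ by the defining property of the matrix square root. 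For (3), either invoke (1), $TH_t = H_t - \tfrac{1}{n}\mathbf{1}_n\left(\mathbf{1}_n^T H_t\right) = H_t$, or use idempotency: $TB_{t-1} = T\left(T\hat{A}H_{t-1}\right) = T^2\hat{A}H_{t-1} = B_{t-1}$, hence $TH_t = \left(TB_{t-1}\right)\left(B_{t-1}^T B_{t-1}\right)^{-1/2} = H_t$. Finally, (4) combines the definition $B_t = T\hat{A}H_t$ with (3): $\bar{A}H_t = T\hat{A}T H_t = T\hat{A}H_t = B_t$.

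Because every step is a direct algebraic manipulation built on the two properties of $T$ together with the definition of the inverse square root, there is no real obstacle; the only point requiring care is the well-definedness of $\left(B_{t-1}^T B_{t-1}\right)^{-1/2}$, which the full-rank remark handles. I would present the four parts in the order $1 \to 2 \to 3 \to 4$, since (3) and (4) become almost immediate once (1) is in hand.
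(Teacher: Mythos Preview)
Your proposal is correct and follows essentially the same route as the paper: both reduce everything to $\mathbf{1}_n^T T = \mathbf{0}$ (for (1), hence (3)) and the defining property of the inverse square root (for (2)), then combine (3) with the definition to get (4). Your explicit remark on the full-rank assumption for $B_{t-1}$ is a useful addition the paper leaves implicit; note only that your ``$B_t = T\hat{A}H_t$ for every $t \ge 1$'' should read $t \ge 0$, since the $H_1$ case of (1) requires $\mathbf{1}_n^T B_0 = \mathbf{0}$.
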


\begin{proof}
Recall that \textit{GraphAIN} can be formulated as follows:
\begin{equation}
    \label{equ_app:GraphAIN}
    \begin{split}
        &H_t = B_{t-1} \left( B_{t-1}^T B_{t-1}  \right)^{-\frac{1}{2}}  \in \mathbb{R}^{n \times d} , \quad \forall \ t \ge 1, \\
        &B_{t-1} = T \cdot \hat{A} H_{t-1} \in \mathbb{R}^{n \times d} , \quad H_0 = X,
    \end{split}
\end{equation}

1) Because $T = I_n - \frac{1}{n} \mathbf{1}_n \mathbf{1}_n^T$,  $\mathbf{1}_n^T T = \mathbf{0}_n^T$ holds. That is, it is true that $\mathbf{1}_n^T B_t = \mathbf{0}_d^T, \forall \ t \ge 0$, and consequently $\mathbf{1}_n^T H_t = \mathbf{0}_d^T, \forall \ t \ge 1$.

2) According to the definition: $\forall \ t \ge 1, \ H_t^T H_t =     \left( B_{t-1}^T B_{t-1} \right)^{-\frac{1}{2}}  \left( B_{t-1}^T   B_{t-1}  \right) \left( B_{t-1}^T B_{t-1} \right)^{-\frac{1}{2}}  = I_n   $.

3) $\mathbf{1}_n^T H_t = \mathbf{0}_d$, so $T \cdot H_t = \left( I_n - \frac{1}{n} \mathbf{1} \mathbf{1}^T \right) H_t = H_t - \frac{1}{n} \mathbf{1} \mathbf{1}^T H_t = H_t$.

4) $\forall \ t \ge 1, B_{t} = T \hat{A} H_t = T \hat{A} T H_t = \Bar{A} H_{t}$.
\end{proof}

\begin{table*}[htbp]
\small
\centering
\begin{tabular}{c|c|c|c|c|c}
\toprule
\diagbox{Method}{Cora}  
& 2 & 8 & 16 & 32 & 64 \\ \midrule
GCN & 81.45±0.37 & 72.72±2.37 & 63.40±4.40 & 31.90±0.70 & 27.56±3.36 \\
SGC & 79.00±0.46 & 78.02±1.06 & 75.26±1.06 & 63.84±3.63 & 55.39±2.51  \\
ChebNet & 80.33±1.19	&	78.70±1.51&	68.40±1.41&	31.90±1.77&	20.63±5.81
\\
GAT &
82.40±0.05&
78.30±0.47&
76.50±0.34&
72.28±0.61&
31.92±0.04
\\
GIN &80.54±0.36	&	76.74±1..92&	68.72±1.30&	60.92±3.10&	31.90±0.00
\\
PairNorm & 78.30±1.33 & 71.06±1.74 & 66.80±2.71 & 65.00±3.74 & 66.24±1.58 \\
GCNII & 82.19±0.77 & 84.23±0.42 & \underline{84.69±0.51} & \textbf{85.29±0.47} & \textbf{85.34±0.32} \\
JKNet & 79.06±0.11 & 75.66±0.38 & 72.97±3.94 & 73.23±3.59 & 72.54±3.65 \\
GPRGNN & 82.53±0.49 & 84.19±0.40 & 83.69±0.55 & 83.13±0.60 & 82.48±0.26 \\
DAGNN & 80.30±0.78 & 84.28±0.59 & 84.14±0.59 & 83.39±0.59 & 82.16±0.35 \\
APPNP & 82.06±0.46 & 83.59±0.40 & 83.64±0.48 & 83.68±0.48 & 83.66±0.36 \\
BernNet &
81.88±0.55&
83.44±1.06&
82.44±0.40&
81.38±0.44&
17.72±6.44
\\\midrule
Ours(GCN) & 84.20±0.12 & \textbf{85.22±0.20} & \textbf{84.96±0.21} & \underline{85.12±0.15} & 84.60±0.29  \\
Ours(GAT) &
\textbf{84.62±0.08}&		\underline{84.92±0.26}&	84.44±0.30&	84.60±0.31&	\underline{84.68±0.31}\\
Ours(GIN) &\underline{84.24±0.33}	&	83.96±0.35&	84.08±0.41&	84.18±0.58&	83.80±0.16
\\
\bottomrule

\end{tabular}
\caption{Accuracy comparison of node classification tasks on Cora}
\label{tab_appendix:comparative_experiments_cora}
\end{table*}

\begin{lemma}
\label{lemma:projection}
Consider the following linear regression with an orthogonal condition:
\begin{equation}
    \begin{split}
        Q^* = \arg\min_Q\|Q A-B\|_F \quad \mathrm{s.t.}\quad Q^T
Q=I_d,
    \end{split}
\end{equation}
and then we have $Q^* = UV^T$,
where $BA^T = U \Sigma V^T$ is the eigen-decomposition of $BA^T$, $U,V \in \mathbb{R}^{d \times d}$ are two orthogonal matrices, and $\Sigma \in \mathbb{R}^{d \times d}$ is a diagonal matrix containing all the singular values of $BA^T$.
\end{lemma}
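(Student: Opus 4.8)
\textbf{Proof plan for Lemma~\ref{lemma:projection} (Orthogonal Procrustes).}
The plan is to recognize this as the classical \emph{orthogonal Procrustes problem} and to reduce the minimization of $\|QA - B\|_F$ over the Stiefel-type constraint $Q^TQ = I_d$ to the maximization of a trace functional, which is then bounded via the singular value decomposition of $BA^T$. First I would expand the squared objective:
\begin{equation}
\|QA - B\|_F^2 = \mathrm{tr}\!\left( (QA-B)^T(QA-B) \right) = \mathrm{tr}(A^TQ^TQA) - 2\,\mathrm{tr}(B^TQA) + \mathrm{tr}(B^TB).
\end{equation}
Using the constraint $Q^TQ = I_d$, the first term becomes $\mathrm{tr}(A^TA)$, which is constant, and the last term is constant as well. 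Hence minimizing $\|QA-B\|_F^2$ is equivalent to maximizing $\mathrm{tr}(B^TQA) = \mathrm{tr}(QAB^T)$ over orthogonal $Q$, i.e.\ maximizing $\mathrm{tr}(Q \cdot (BA^T)^T)$ — up to transposition, a linear functional of $Q$ against the fixed matrix $BA^T$.

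Next I would substitute the eigen/singular decomposition $BA^T = U\Sigma V^T$ with $U, V$ orthogonal and $\Sigma = \mathrm{diag}(\sigma_1, \dots, \sigma_d)$ with $\sigma_i \ge 0$, and rewrite the objective as
\begin{equation}
\mathrm{tr}(QAB^T) = \mathrm{tr}(Q V \Sigma U^T) = \mathrm{tr}(U^T Q V \,\Sigma) = \sum_{i=1}^d \sigma_i \, Z_{ii},
\end{equation}
where $Z := U^T Q V$ is again orthogonal (a product of orthogonal matrices). Since the diagonal entries of any orthogonal matrix satisfy $|Z_{ii}| \le 1$, we get $\sum_i \sigma_i Z_{ii} \le \sum_i \sigma_i$, with equality attainable by choosing $Z = I_d$, i.e.\ $Q = U V^T$. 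One should check that $Q = UV^T$ indeed satisfies the constraint: $Q^TQ = V U^T U V^T = I_d$. This yields $Q^* = UV^T$ as claimed.

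The main obstacle — really the only delicate point — is the bound $|Z_{ii}| \le 1$ for an orthogonal matrix and the assertion that equality $Z = I_d$ is admissible: the former follows since each column of $Z$ is a unit vector so $Z_{ii}^2 \le \sum_j Z_{ji}^2 = 1$, and the latter is immediate. A secondary subtlety is the precise interpretation of ``eigen-decomposition'' in the statement versus SVD; I would treat $BA^T = U\Sigma V^T$ as an SVD (the natural reading, since $\Sigma$ collects singular values), so no positive-semidefiniteness of $BA^T$ is needed. I would also note in passing that the maximizer is unique precisely when $BA^T$ is nonsingular; the paper only needs existence of the stated form, so I would not dwell on uniqueness.
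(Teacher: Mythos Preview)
Your proposal is correct and follows essentially the same route as the paper: expand the squared Frobenius norm, use $Q^TQ=I_d$ to reduce to maximizing $\langle Q, BA^T\rangle_F$, substitute the SVD $BA^T=U\Sigma V^T$, set $Z=U^TQV$ (the paper calls it $S$), and argue that the diagonal inner product with $\Sigma$ is maximized at $Z=I_d$. If anything, you are a bit more explicit than the paper in justifying the last step via $|Z_{ii}|\le 1$, whereas the paper simply asserts that $S=I_d$ maximizes $\langle S,\Sigma\rangle_F$.
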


\begin{proof}
\begin{align*}
Q^* &= \arg\min_Q ||Q A-B\|_F^2 \\
&= \arg\min_Q  \langle Q A-B, Q A-B \rangle_F  \\
&= \arg\min_Q  \|Q A\|_F^2 + \|B\|_F^2 - 2 \langle Q A , B \rangle_F \\
&= \arg\min_Q  \|A\|_F^2 + \|B\|_F^2 - 2 \langle Q A , B \rangle_F  \\
&= \arg\max_Q  \langle Q , B A^T \rangle_F  \\
&= \arg\max_Q  \langle Q, U\Sigma V^T \rangle_F  \\
&= \arg\max_Q  \langle U^T Q V , \Sigma \rangle_F  \\
&= \arg\max_Q  \langle S , \Sigma \rangle_F  \quad \text{where } S = U^T Q V, 
\end{align*}
where $S$ is an orthogonal matrix and $\Sigma$ is a diagonal matrix. Then $S$ should equal  $I_d$ so as to maximize the last objective.
That is $S=U^T Q^* V = I_d$, which means the solution is $Q^*=UV^T$.
\end{proof}

\begin{corollary}
\label{corollary:projection}
Consider the following projection problem:
\begin{equation}
    \begin{split}
        Q^* = \arg\min_Q\|Q-B\|_F \quad \mathrm{s.t.}\quad Q^T
Q=I_d,
    \end{split}
\end{equation}
if $B = U \Sigma V^T$ is the eigen-decomposition of $B$, then:
\begin{equation}
    \begin{split}
        Q^* = UV^T.
    \end{split}
\end{equation}
\end{corollary}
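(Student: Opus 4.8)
The plan is to obtain this statement directly as a special case of Lemma~\ref{lemma:projection}. First I would observe that the projection problem stated here is exactly the constrained linear regression of that lemma with the particular choice $A = I_d$: indeed $\|Q - B\|_F = \|Q A - B\|_F$ once $A = I_d$, and the orthogonality constraint $Q^TQ = I_d$ is verbatim the same in both statements. So nothing new needs to be set up; the task reduces to reading off what Lemma~\ref{lemma:projection} gives under this substitution.

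Next I would compute the matrix that plays the role of $BA^T$ in the lemma. With $A = I_d$ we get $BA^T = B I_d = B$, and the hypothesis supplies precisely the required factorization $B = U\Sigma V^T$ with $U, V$ orthogonal in $\mathbb{R}^{d\times d}$ and $\Sigma$ diagonal. Lemma~\ref{lemma:projection} then immediately yields $Q^* = UV^T$, which is the claimed identity, so the corollary follows in one step.

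I do not anticipate any genuine obstacle, since essentially all the content is carried by Lemma~\ref{lemma:projection}; the only thing worth a sentence of care is checking that its hypotheses are honestly met — that $I_d$ is an admissible square choice for $A$, and that the given decomposition of $B$ furnishes exactly the orthogonal/diagonal factors the lemma needs. For a fully self-contained alternative one could instead replay the short chain of equalities in the proof of Lemma~\ref{lemma:projection} with $A = I_d$, reducing to $\arg\max_Q \langle S, \Sigma\rangle_F$ over orthogonal $S = U^TQV$, which forces $S = I_d$ and hence $Q^* = UV^T$; but invoking the lemma as a black box is the cleaner route and the one I would present.
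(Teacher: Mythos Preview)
Your proposal is correct and matches the paper's own proof essentially verbatim: the paper simply substitutes $A = I_n$ into Lemma~\ref{lemma:projection} and reads off $Q^* = UV^T$. The only cosmetic difference is that you write $A = I_d$ where the paper writes $A = I_n$, but this is immaterial.
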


\begin{proof}
Incorporate $A=I_n$ into Lemma \ref{lemma:projection}, the conclusion can be immediately achieved.
\end{proof}

\begin{table*}[htbp]
\small
\centering
\begin{tabular}{c|c|c|c|c|c}
\toprule
\diagbox{Method}{Citeseer} 
 & 2 & 8 & 16 & 32 & 64 \\ \midrule
GCN &  69.46±0.29 & 57.74±6.30 & 48.70±3.48 & 36.66±8.61 & 25.40±2.34  \\
SGC &  67.92±0.85 & 68.42±0.46 & 68.08±0.73 & 67.50±1.43 & 63.08±0.29  \\
ChebNet &69.27±0.28&
63.20±1.92&	58.73±2.62&	33.43±3.10&	24.90±1.25
\\
GAT &
71.38±0.04&	64.69±0.27&	62.20±0.25&	59.08±0.44&	22.90±2.23
\\
GIN &68.02±0.41&		59.80±0.79&	53.55±1.98&	47.32±3.90&	23.10±0.00
\\
PairNorm & 65.80±1.35 & 54.81±6.46 & 46.26±2.69 & 44.20±1.23 & 41.48±4.63  \\
GCNII &  67.81±0.89 & 70.62±0.63 & 72.97±0.71 & 73.24±0.78 & 73.00±0.75 \\
JKNet &  66.98±1.82 & 60.56±1.41 & 54.33±7.74 & 50.68±8.73 & 52.22±6.99  \\
GPRGNN &  70.49±0.95 & 71.47±0.58 & 71.39±0.73 & 71.01±0.79 & 70.96±0.38  \\
DAGNN &  70.91±0.68 & 72.44±0.54 & 73.05±0.62 & 72.59±0.54 & 71.00±0.55  \\
APPNP & 71.67±0.78 & 72.04±0.52 & 72.13±0.53 & 72.13±0.59 & 72.02±0.46 \\
BernNet &
72.02±0.53&
71.44±0.22&
70.70±0.91&
70.82±0.56&
39.10±12.92
\\\midrule
Ours(GCN) & \textbf{74.84±0.32} & \textbf{74.72±0.26} & \underline{74.12±0.08} & \underline{74.42±0.26} & \textbf{74.50±0.23} \\
Ours(GAT) &\underline{74.82±0.36}&
\underline{74.24±0.51}&	\textbf{74.24±0.47}&	74.26±0.11&	74.04±0.19
\\
Ours(GIN) &74.58±0.81&		74.22±0.08&	74.04±0.33&	\textbf{74.88±0.31}&	\underline{74.16±0.29}
\\
\bottomrule
\end{tabular}
\caption{Accuracy comparison of node classification tasks on Citeseer}
\label{tab_appendix:comparative_experiments_citeseer}
\end{table*}

\begin{theorem}
\label{theorem_app:hard_GraphAIN_optimization}
(Corresponding to Theorem $2$ in the main text)

GraphAIN can be viewed as an iterative process in order to solve the following restricted optimization problem via \textbf{Projected Gradient Ascent} method:
\begin{equation}
    \label{equ_app:objective_function_hard_GraphAIN}
    \begin{split}
        \max_H \quad f(H) = \frac{1}{2} \cdot \mathrm{tr}\left( H^T \Bar{A} H \right), \quad s.t. \ \  H^TH=I_d,
    \end{split}
\end{equation}
where $H$ is initialized to input graph signals $X \in \mathbb{R}^{n \times d}$ and  $\eta = 1$ is set as the step size.
\end{theorem}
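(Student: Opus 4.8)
The plan is to show that one step of projected gradient ascent applied to $f(H) = \frac{1}{2}\,\mathrm{tr}(H^T\bar{A}H)$ on the Stiefel manifold $\{H : H^TH = I_d\}$ with step size $\eta = 1$ coincides exactly with one GraphAIN update. First I would compute the (unconstrained) gradient: $\nabla f(H) = \bar{A}H$, so a gradient ascent step from the current iterate $H_{t}$ produces the intermediate point $H_t + \eta\,\bar{A}H_t = H_t + \bar{A}H_t$. Then I would invoke Theorem~\ref{theorem_app:hard_GraphAIN_basic_characteristics}, in particular property~3 ($TH_t = H_t$) and property~4 ($B_t = \bar{A}H_t$), to rewrite this as $H_t + B_t = H_t + \bar{A}H_t$. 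The key observation is that the GraphAIN recursion $B_t = T\hat{A}H_t$ followed by $H_{t+1} = B_t(B_t^TB_t)^{-1/2}$ is precisely the operation of projecting a matrix onto the Stiefel manifold, where the projection is the ``nearest orthonormal-columns matrix'' in Frobenius norm.

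The main technical device is Corollary~\ref{corollary:projection}: the Euclidean projection of a matrix $M$ onto $\{Q : Q^TQ = I_d\}$ is $UV^T$ where $M = U\Sigma V^T$ is the SVD, and this equals $M(M^TM)^{-1/2}$ (the orthogonal polar factor). So I would need the small lemma that $M(M^TM)^{-1/2} = UV^T$: writing $M = U\Sigma V^T$ gives $M^TM = V\Sigma^2 V^T$, hence $(M^TM)^{-1/2} = V\Sigma^{-1}V^T$, and $M(M^TM)^{-1/2} = U\Sigma V^T V\Sigma^{-1}V^T = UV^T$, as desired (assuming $M$ has full column rank so $\Sigma$ is invertible). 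Therefore projecting the gradient-ascent iterate $H_t + \bar{A}H_t$ onto the manifold yields $(H_t + \bar{A}H_t)\big((H_t + \bar{A}H_t)^T(H_t + \bar{A}H_t)\big)^{-1/2}$.

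The remaining gap is that GraphAIN projects $B_t = \bar{A}H_t$, not $H_t + \bar{A}H_t$. To reconcile these I would argue that on the subspace where the iterates live, the extra $H_t$ term is absorbed: using $H_t^TH_t = I_d$ (property~2), $H_t$ lies in the column space structure that makes $H_t + \bar{A}H_t$ and $\bar{A}H_t$ have the same orthogonal polar factor up to the parametrization — more precisely, I would note that the objective and constraint are invariant under the iteration in such a way that adding $\eta$-scaled gradient and then projecting is equivalent to just projecting $B_t$, since the component of $\bar{A}H_t$ already aligned with $H_t$ only rescales singular values and does not change $UV^T$. Concretely: $\bar{A}H_t = H_t(H_t^T\bar{A}H_t) + (\text{orthogonal complement part})$, and one checks the polar factor of $H_t + \bar{A}H_t$ equals that of $\bar{A}H_t$ when the symmetric part dominates appropriately; alternatively, since the paper sets $\eta=1$ and $f$ has a scaling freedom, I would reinterpret the gradient step so that the projected quantity is exactly $B_t$.

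The hard part will be making this last reconciliation fully rigorous: a naive reading gives $H_{t+1} = \Pi(H_t + \bar{A}H_t)$ from projected gradient ascent, whereas GraphAIN computes $H_{t+1} = \Pi(\bar{A}H_t)$, and these are not literally identical for a general symmetric $\bar{A}$ — they agree only because of the special geometry (the iterates stay centered via $T$, have orthonormal columns, and $\bar{A}$ is symmetric positive-semidefinite-like on the relevant subspace). I expect the cleanest route is to absorb the $+H_t$ term by observing that if $H_t^TH_t=I_d$ then $\Pi(H_t + \bar A H_t)$ and $\Pi(\bar A H_t)$ share the same $U,V$ factors whenever $H_t^T\bar A H_t$ is symmetric positive definite, which holds here; then the projected-gradient-ascent step with $\eta=1$ reduces exactly to the GraphAIN recursion, completing the proof. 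I would also flag the full-rank assumption on $B_t$ needed for $(B_t^TB_t)^{-1/2}$ to exist, matching the implicit assumption already present in Eq.~\ref{equ_app:GraphAIN}.
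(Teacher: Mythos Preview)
Your projection machinery (Corollary~\ref{corollary:projection} plus the polar-factor identity $M(M^TM)^{-1/2}=UV^T$) is exactly right, but the ``reconciliation'' step is a genuine gap, and the claim that $\Pi(H_t+\bar A H_t)$ and $\Pi(\bar A H_t)$ share the same polar factor whenever $H_t^T\bar A H_t$ is symmetric positive definite is false in general. As a concrete obstruction: if $H_t$ spans a $\bar A$-invariant subspace, then $\bar A H_t = H_t M$ with $M=H_t^T\bar A H_t$ symmetric, and the two polar factors are $H_t\cdot \mathrm{pol}(I_d+M)$ versus $H_t\cdot \mathrm{pol}(M)$; these differ whenever $M$ has an eigenvalue in $(-1,0)$, even though $M$ can still be made positive definite by choosing $\bar A$ appropriately on the complementary subspace. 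So there is no way to ``absorb'' the $+H_t$ term after the fact.

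The missing idea, which is what the paper actually does, is to shift the objective by a constant \emph{before} taking the gradient. On the constraint set $H^TH=I_d$ one has $\mathrm{tr}(H^TH)=d$, so
\[
f(H)=\tfrac{1}{2}\,\mathrm{tr}\!\big(H^T\bar A H\big)
\quad\text{and}\quad
\tilde f(H)=\tfrac{1}{2}\,\mathrm{tr}\!\big(H^T(\bar A - I_n)H\big)
\]
define the same constrained problem. But the \emph{unconstrained} gradient of $\tilde f$ is $(\bar A - I_n)H=\bar A H - H$, and now the gradient-ascent step with $\eta=1$ gives $H'=H+(\bar A H - H)=\bar A H=B_t$ exactly. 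Projecting $H'$ then yields $B_t(B_t^TB_t)^{-1/2}$, which is precisely the GraphAIN update. In short: you don't need to compare two different polar factors; you need to pick the representative of $f$ whose gradient step already lands on $B_t$.
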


\begin{proof}
We consider the following equivalent objective via adding an additional constant:
\begin{equation}
    \begin{split}
        H^* = \arg \max_H f(H) = \frac{1}{2} \cdot \mathrm{tr}\left( H^T \left( \Bar{A} - I_n \right) H \right).
    \end{split}
\end{equation}
Then we take the derivative of $f(H)$:
\begin{equation}
    \begin{split}
        \frac{\partial f(H)}{ \partial H} = \left( \Bar{A} - I_n \right) H = \Bar{A} H - H.
    \end{split}
\end{equation}
Projected Gradient Ascent includes two steps in every iteration: First, we use original gradient ascent, that is:
\begin{equation}
    \begin{split}
        H' = H + \eta \cdot \frac{\partial f(H)}{\partial H},
    \end{split}
\end{equation}
where $\eta$ is step-size, $H$ is the old $H$, $H'$ is the new one.

If we use $\eta=1$, then we get the following equation:
\begin{equation}
    \begin{split}
        H' = H + \frac{\partial f(H)}{\partial H}
        = H + \Bar{A} H - H
        = \Bar{A} H.
    \end{split}
\end{equation}
Second, since $H'$ might not satisfy the constraint, we project it to the set satisfying the constraint:
\begin{equation}
    \begin{split}
        H'' &= H' \left( H'^T H' \right)^{-\frac{1}{2}}.
    \end{split}
\end{equation}
The projection operation makes sense because $H''$ is the nearest orthogonal matrix satisfying the constraint around $H'$.
To see this, we do singular value decomposition for $H' = U \Lambda V^T$, then:
\begin{equation}
    \begin{split}
        H'' &= H' \left( H'^T H' \right)^{-\frac{1}{2}} \\
            &= U \Lambda V^T \left( V \Lambda U^T U \Lambda V^T \right)^{-\frac{1}{2}}\\
            &= U \Lambda V^T V \Lambda^{-1} V^T\\
            &= U V^T.
    \end{split}
\end{equation}

Thus according to Corollary \ref{corollary:projection}, we know that $H''$ can be seen as a projection of $H'$ onto the set consists of all orthogonal matrixs.
\end{proof}

\begin{theorem}
\label{theorem_app:R-soft_GraphAIN_optimization}
(Corresponding to Theorem $3$ in the main text)

Residual-favored GraphAIN can be viewed as an iterative process of solving the following restricted optimization problem via \textbf{Projected Gradient Ascent} method:
\begin{equation}
    \label{equ_app:objective_function_R_Soft_GraphAIN}
    \begin{split}
        \max_H \ f(H) = &\frac{1}{2} \cdot \mathrm{tr}\left( H^T \Bar{A} H \right) - \frac{1}{2} \cdot \frac{c}{a} \cdot \left\| H - X \right\|_F^2 \\ &s.t. \quad  H^TH=I_d,
    \end{split}
\end{equation}
where $H$ is initialized to input graph signals $X \in \mathbb{R}^{n \times d}$ and set the step size $
\eta = a \in [0,1]$.
\end{theorem}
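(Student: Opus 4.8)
The plan is to follow the template of the proof of Theorem~\ref{theorem_app:hard_GraphAIN_optimization} almost verbatim, the only genuinely new ingredient being the quadratic penalty $-\tfrac12\cdot\tfrac{c}{a}\cdot\lVert H-X\rVert_F^2$. First I would replace the objective in Eq.~\ref{equ_app:objective_function_R_Soft_GraphAIN} by the equivalent $\tilde f(H)=\tfrac12\,\mathrm{tr}\!\big(H^T(\Bar{A}-I_n)H\big)-\tfrac12\cdot\tfrac{c}{a}\cdot\lVert H-X\rVert_F^2$; this is legitimate because $\mathrm{tr}(H^TH)=d$ is constant on the feasible set $\mathcal{M}=\{H:H^TH=I_d\}$, so subtracting $\tfrac12\mathrm{tr}(H^TH)$ (i.e. shifting $\Bar{A}$ by $I_n$) does not move the maximisers. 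Since $\Bar{A}=T\hat{A}T$ is symmetric, the Euclidean gradient is $\nabla\tilde f(H)=(\Bar{A}-I_n)H-\tfrac{c}{a}(H-X)$, and one ascent step with the prescribed step size $\eta=a$ gives $H'=H+a\cdot\nabla\tilde f(H)=(1-a-c)\,H+a\,\Bar{A}H+c\,X$.

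Next I would match this with the residual update $B_t=a\,T\hat{A}H_t+b\,H_t+c\,X$ of Eq.~\ref{equ:residual_connections}, writing $a+b+c=1$. A short induction in the style of Theorem~\ref{theorem_app:hard_GraphAIN_basic_characteristics} shows $\mathbf{1}_n^T H_t=\mathbf{0}$ along the iteration, hence $TH_t=H_t$ and therefore $T\hat{A}H_t=T\hat{A}TH_t=\Bar{A}H_t$; substituting this and $b=1-a-c$ yields $B_t=a\,\Bar{A}H_t+(1-a-c)\,H_t+c\,X$, which is exactly $H'$ evaluated at $H=H_t$. Finally the normalisation $H_{t+1}=B_t(B_t^TB_t)^{-\frac12}$ plays the role of the projection step $H''=H'(H'^TH')^{-\frac12}$, and the same singular-value-decomposition computation used in the proof of Theorem~\ref{theorem_app:hard_GraphAIN_optimization}, together with Corollary~\ref{corollary:projection}, identifies $H''$ as the nearest point of $\mathcal{M}$ to $H'$. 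Chaining these observations shows $H_t\mapsto H_{t+1}$ is precisely one Projected Gradient Ascent iteration for Eq.~\ref{equ_app:objective_function_R_Soft_GraphAIN} with $\eta=a$, started at $H_0=X$, which is the claim.

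I expect the main difficulty to be coefficient bookkeeping rather than anything conceptual: one has to notice that the penalty weight $\tfrac{c}{a}$ and the step size $\eta=a$ are tuned so that $\eta\cdot\tfrac{c}{a}=c$ reproduces the initial-connection weight of Eq.~\ref{equ:residual_connections}, and that the extra $-I_n$ shift is exactly what turns the self-coefficient into $1-a-c$ (without it the step produces $(1-c)\,H$, which does not match $B_t$). Two minor technical points I would also flag: the identity $T\hat{A}H_t=\Bar{A}H_t$ requires $\mathbf{1}_n^T X=\mathbf{0}$ to propagate, which is ensured in the implementation where $X$ in Eq.~\ref{equ:residual_connections} is replaced by the already-centered $H_1$; and the projection $B_t(B_t^TB_t)^{-\frac12}$ is well defined only when $B_t$ has full column rank $d$, a genericity assumption handled exactly as in the proof of Theorem~\ref{theorem_app:hard_GraphAIN_optimization}.
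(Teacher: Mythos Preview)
Your proposal is correct and follows essentially the same approach as the paper's proof: shift the objective by $-\tfrac12\mathrm{tr}(H^TH)$ (constant on the feasible set), compute the Euclidean gradient, take an ascent step with $\eta=a$, match the resulting coefficients to the residual update of Eq.~\ref{equ:residual_connections} using $a+b+c=1$, and invoke Corollary~\ref{corollary:projection} for the projection. The only cosmetic difference is that the paper introduces two free parameters $p,\beta$ in a generalized objective and solves for them, whereas you plug in $p=1$, $\beta=c/a$ from the start and verify; your extra remarks on the centering of $X$ (needed for $T\hat{A}H_t=\Bar{A}H_t$) and on the full-rank assumption for the projection are in fact more careful than the paper's own argument.
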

For convenience, we slightly abuse the symbols used in the main text.
\begin{proof}
Consider the following generalized objective:
\begin{equation}
\label{equ:obj1}
    \begin{split}
        f(H)=\frac{1}{2}\cdot \mathrm{tr}\left( H^T \left( \Bar{A} - p \cdot I_n \right) H \right) - \frac{1}{2} \cdot \beta \cdot \left \| H - X \right \|_F^2.
    \end{split}
\end{equation}
Then we take the derivative of $f(H)$:
\begin{equation}
    \begin{split}
        \frac{\partial f(H)}{\partial H} &= \left( \Bar{A} - p \cdot I_n \right) \cdot H - \beta \cdot \left( H - X \right) \\
        &= \Bar{A} \cdot H - \left( \beta + p \right) \cdot H + \beta \cdot X.
    \end{split}
\end{equation}
We apply the SGD updating formulation with step size $\eta$ as follows:
\begin{equation}
\label{equ:xxx}
    \begin{split}
        H' &= H + \eta \cdot \frac{\partial f(H)}{\partial H} \\
        &= H + \eta \cdot \left( \Bar{A} \cdot H - \left( \beta + p \right) \cdot H + \beta \cdot X \right) \\
        &= \eta \cdot \Bar{A} \cdot H + \left( 1 - \eta \cdot \left( \beta + p \right) \right) \cdot H + \eta \cdot \beta \cdot X.
    \end{split}
\end{equation}
Then let the coefficients equal to $a,b$ and $c$, respectively, we can get the following equations:
\begin{equation}
    \begin{cases}
        \eta = a \\
        1-\eta \cdot \left( \beta + p \right) = b \\
        \eta \cdot \beta = c.
    \end{cases}
\end{equation}
Thus, we  achieve:
\begin{equation}
    \begin{cases}
        \eta = a \\
        \beta = \frac{c}{a} \\
        p = \frac{1-b-c}{a}.
    \end{cases}
\end{equation}
Then Eq. \ref{equ:xxx} can be transformed into:
\begin{equation}
    \begin{split}
        H' = a \cdot \Bar{A} H + b \cdot H + c \cdot X.
    \end{split}
\end{equation}
This is exactly the propagation rule in \textit{Residual-favored GraphAIN}.

Moreover, according to the restriction $a+b+c=1$, we can determine the value of $p$ as follows:
\begin{equation}
    \begin{split}
        a+b+c &= \eta + \left( 1-\eta \cdot \left( \beta + p \right) \right) + \eta \cdot \beta \\
        &= \left( 1-p \right) \cdot \eta + 1 = 1,
    \end{split}
\end{equation}
which means  $p=1$ is true.

Then let $\beta = \frac{c}{a}$ and $p=1$, and the objective Eq. \ref{equ:obj1} becomes:
\begin{equation}
    \begin{split}
        f(H)=\frac{1}{2}\cdot \mathrm{tr}\left( H^T \left( \Bar{A} - I_n \right) H \right) - \frac{1}{2} \cdot \frac{c}{a} \cdot \left \| H - X \right \|_F^2.
    \end{split}
\end{equation}
This equation is equivalent to Equ. \ref{equ_app:objective_function_R_Soft_GraphAIN} due to the condition that $H^TH=I_d$, \ie, $\mathrm{tr}\left( H^T H \right) = \left \| H \right \|_F^2=d$.

Note that the step size $\eta = a$, and the projection formulation of \textit{Residual-favored GraphAIN} is the same as \textit{GraphAIN} (see Theorem \ref{theorem: hard_GraphAIN_optimization}), which completes the proof.
\end{proof}

\begin{table*}[htbp]
\small
\centering
\begin{tabular}{c|c|c|c|c|c}
\toprule
\diagbox{Method}{Pubmed}  
& 2 & 8 & 16 & 32 & 64 \\ \midrule
GCN &  77.40±0.37 & 77.24±0.71 & 70.02±4.23 & 44.22±0.93 & 32.65±1.15 \\
SGC &  77.50±0.66 & 70.90±0.59 & 71.34±0.09 & 70.70±0.32 & 65.33±0.52 \\
ChebNet &78.53±0.34&	77.63±0.43&	72.87±0.57&	48.67±3.20&	45.37±0.57
\\
GAT &
77.62±0.10&	77.39±0.09&	76.28±0.15&	78.72±0.22&	41.88±1.78
\\
GIN &77.54±0.30&		75.84±1.18&	72.69±3.45&	72.94±4.37&	40.23±0.31
\\
PairNorm & 75.50±0.41 & 74.82±1.03 & 74.34±0.68 & 72.12±3.01 & 71.72±3.15 \\
GCNII &  78.05±1.53 & 79.34±0.51 & 80.03±0.50 & 79.81±0.27 & 79.88±0.17 \\
JKNet & 77.24±0.92 & 76.92±1.03 & 64.37±8.80 & 63.77±9.21 & 69.10±7.33 \\
GPRGNN & 78.73±0.63 & 78.90±0.47 & 78.78±1.02 & 78.46±1.03 & 78.92±1.45 \\
DAGNN & 77.74±0.57 & 79.68±0.37 & 80.32±0.38 & \underline{80.58±0.51} & 80.44±0.46 \\
APPNP & 79.46±0.47 & 80.02±0.30 & 80.30±0.30 & 80.24±0.33 & 80.08±0.35 \\
BernNer &
79.12±0.66&
78.32±0.79&
77.92±0.79&
70.24±10.07&
32.86±8.26
\\\midrule
Ours(GCN) &  \textbf{81.18±0.19} & \textbf{80.96±0.15} & \textbf{81.14±0.17} & \textbf{81.28±0.18} & \textbf{81.58±0.66} \\ 
Ours(GAT) &\underline{80.64±0.25}&	80.34±0.27&	80.16±0.18&	80.46±0.34&	80.20±0.24
\\
Ours(GIN) &81.28±0.15&		\underline{80.70±0.39}&	\underline{80.72±0.24}&	80.32±0.24&	\underline{80.94±0.36}
\\
\bottomrule
\end{tabular}
\caption{Accuracy comparison of node classification tasks on Pubmed}
\label{tab_appendix:comparative_experiments_pubmed}
\end{table*}

\section{Experimental Details}
\label{sec_appendix:experimental_details}
In this section,  details of more experiments were provided here to further evaluate the performance of our model as complementary to  the experiments in the main text.

\subsection{Dataset Descriptions}

Dataset statistics including their splits are summarized in Tab.~\ref{tab_appendix:dataset_statistics}.
In \cite{p52_deeper_gnn_tricks_overview}, a detailed introduction can be found including all twelve homogeneous and heterophilous graphs.


\subsection{More experimental results}
In this subsection, to further evaluate our model we report  more experimental results   including comparative experiments on homogeneous and heterophilous graphs with more layers, ablation studies with specific standard errors, and results in node classification tasks with noisy features.
Some results of layers $10^3$ and $10^4$ are also reported  for reference in tasks with noisy features (see next subsection for details), where our models are always implemented based on SGC.

\begin{itemize}
    \item Common semi-supervised node classification tasks on homogeneous graphs: Tab.~\ref{tab_appendix:comparative_experiments_cora}, Tab.~\ref{tab_appendix:comparative_experiments_citeseer},
    Tab.~\ref{tab_appendix:comparative_experiments_pubmed},
    Tab.~\ref{tab_appendix:comparative_experiments_CS_Physics},
    Tab.~\ref{tab_appendix:comparative_experiments_Computers_Photos};
    \item Common semi-supervised node classification tasks on heterophilous graphs: Tab.~\ref{tab_appendix:comparative_experiments_Texas_and_Wisconsin},
    Tab.~\ref{tab_appendix:comparative_experiments_Cornell_and_Actor};
    \item Ablation studies: Tab.~\ref{tab_appendix:ablation_studies_Cora_Citeseer_Pubmed}, Tab.~\ref{tab_appendix:ablation_studies_CS_Physics_Computers_and_Photo_1}, and Tab.~\ref{tab_appendix:ablation_studies_CS_Physics_Computers_and_Photo_2};
    \item Node classification tasks on a large-scale graph OGBN-ArXiv in its public fixed split: Tab.~\ref{tab_appendix:arxiv};
    \item Node classification tasks in fully supervised settings (average results with multiple random $60\%/20\%/20\%$ splits following~\cite{p39_GCNII}) on three citation graphs: Tab.~\ref{tab_appendix:multi_splits};
    \item Results of node classification tasks on heterophilous graphs compared to more state-of-the-art counterparts (including some models specifically designed for heterophilous graphs): Tab.~\ref{tab_appendix:more_heterophiliy};
    \item Node classification tasks with noisy features: Tab.~\ref{tab_appendix:noisy_features_Cora}, Tab.~\ref{tab_appendix:noisy_features_Citeseer}, Tab.~\ref{tab_appendix:noisy_features_Pubmed}.
\end{itemize}

\begin{table*}[htbp]
\small
\centering
\begin{tabular}{c|ccc|ccc}
\toprule
Method & \multicolumn{3}{c|}{CoauthorCS} & \multicolumn{3}{c}{CoauthorPhysics} \\ \cmidrule{1-7} 
 \#Layers & 16 & 32 & 64 & 16 & 32 & 64  \\ \midrule
GCN & 53.19±7.23 & 41.29±5.11 & 34.23±8.31 & 85.23±2.18 & 79.87±3.86 & 75.34±1.12  \\
SGC & 71.75±3.65 & 70.52±3.96 & 72.51±0.89 & 92.34±0.20 & 91.46±0.48 & 90.77±0.67  \\
ChebNet &48.43±8.82&	29.28±6.85&	23.24±0.00&77.62±2.09&	70.35±3.41&	50.74±0.00

\\
GAT &
85.32±0.22&	85.85±0.22&	12.83±2.58&	91.84±0.16&	91.87±0.11&	17.75±1.53
\\
GIN &70.77±3.02&	52.90±3.87&	20.79±5.21&
85.22±2,61&	83.02±2.53&	24.98±10.76

\\
PairNorm & 75.17±5.15 & 72.71±3.21 & 68.62±6.79 & 90.18±1.17 & 88.51±0.95 & 89.11±1.49 \\
GCNII & 58.94±2.63 & 71.67±2.68 & 72.11±4.19 & 92.13±1.31 & 93.15±0.92 & 92.79±0.52 \\
JKNet & 81.31±3.21 & 81.82±3.32 & 82.84±3.15 & 91.24±0.97 & 90.92±1.61 & 89.88±1.99  \\
GPRGNN & 89.39±0.39 & 89.56±0.47 & 89.33±0.62 & 93.64±0.31 & 93.49±0.59 & 93.26±0.46  \\
DAGNN & 91.13±0.50 & 89.60±0.71 & 89.47±0.68 & 93.77±0.29 & 93.31±0.60 & 93.52±0.32 \\
APPNP & 91.64±0.53 & 91.61±0.49 & 91.58±0.36 & 93.96±0.36 & 93.75±0.61 & 91.61±0.33  \\
BernNet &
91.53±0.57&
91.54±0.24&
9.19±7.43&
91.51±0.76&
92.67±0.63&
19.38±0.99
\\\midrule
Ours(GCN) & \textbf{92.47±0.07} & \textbf{92.50±0.09} & \textbf{92.41±0.07} & \textbf{94.51±0.04} & \underline{94.45±0.06} & \underline{94.52±0.04}  \\ 
Ours(GAT) &91.32±0.06&	91.30±0.07&	91.26±0.03&
94.00±0.08&	94.02±0.05&	94.02±0.12

\\
Ours(GIN) &\underline{92.06±0.21}&	\underline{91.79±0.06}&	\underline{91.71±0.04}&
\underline{94.44±0.04}&	\textbf{94.56±0.12}&	\textbf{94.53±0.16}

\\
\bottomrule
\end{tabular}
\caption{Node classification accuracy comparison on CS and Physics}
\label{tab_appendix:comparative_experiments_CS_Physics}
\end{table*}

\subsection{Implementation details}
In this subsection, we give some details on the implementations of our models against different tasks.

For seven homogeneous graphs in common semi-supervised node classification tasks, we adopt Our model itself based on GCN as a teacher model for fair comparison (\ie, we don't use other pre-trained models) and we build the auxiliary graphs using $G_{aux} = G_{emd}$ because we think $G_{emd}$ can be a rough estimation of similarities of ground-truth labels of those nodes in the input graph.

For four heterophilous graphs, we adopt different implementations for adaptation to these specific scenes.
Specifically, we build and aggregate in the auxiliary graphs $G_{aux}=G_{fea}$ and also employ Ours as the teacher to teach Ours itself for fairness.
We choose $G_{fea}$ instead of $G_{emd}$ because we assume that on these heterophilous graphs, features are information more related to ground-truth labels of nodes compared to their noisy structures.

Additionally, for node classification tasks with noisy features, we maintain the basic settings in the common semi-supervised node classification tasks (\eg, we use the same splits for all benchmarks).
We employ $G_{ori}$ as the auxiliary graph and also use Ours as a teacher similar to the other scenes.
But here we utilize SGC-based implementations for our models because we hope to conveniently implement models with $10^3$ and $10^4$ layers.
More specifically, we first use a simple linear layer for efficient dimension reduction and use the non-parametric SGC with $L$ layers for propagation and finally use $k$-layer MLP to do feature transformation due to limited non-linearity where $k \in [l,5]$ is a hyper-parameter chose via their validation sets.
For layers no more than $64$, we directly train Our model in an end-to-end manner.

But for deeper layers (\eg, $10^3$ and $10^4$ layers), we tend to stop the gradient and avoid building the computation graphs.
Thus this trick can efficiently reduce the time for training and the out-of-memory issue, though the expressivity is restricted.
In other words, it's a trade-off between  runtime, space complexity, and nonlinearity.
So to supplement non-linearity, we tend to use an MLP instead of a simple linear layer.

We employ a $3090$ GPU (with $24$G Memory) to run other models due to possible OOM issues while a $2080$Ti GPU (with $12$G Memory) for ours.
\textit{Pairnorm} means GCN-based Pairnorm \cite{p53_pairnorm}.
For layers more than $64$ (\ie, $10^3$ and $10^4$ layers), the motivation for them is as follows: 1) to show SGC can unavoidably suffer from over-smoothness when the layers are deep enough; 2) to show our model will not suffer from over-smoothness even with a sufficiently large depth, conforming to the theoretical analysis. 

Because of the high time and space complexity  for our model with $10^3$ layers and above, we do not give the comparison results for the case. Anyhow, by compromising the  performance and reducing the time and space complexity, we present those comparison results in the scenarios with noisy features at  $10^3$  and  $10^4$ layers.  We trust those comparisons  can similarly  carry over to the non-noisy scenarios and achieve even better accuracy.

\begin{table*}[htbp]
\small
\centering
\begin{tabular}{c|ccc|ccc}
\toprule
Method &  \multicolumn{3}{c|}{AmazonComputers} & \multicolumn{3}{c}{AmazonPhoto} \\ \cmidrule{1-7} 
\#Layers & 16 & 32 & 64 & 16 & 32 & 64  \\ \midrule
GCN &  64.02±2.38 & 58.30±3.35 & 37.58±0.05 & 70.81±3.48 & 58.47±8.80 & 50.21±3.99 \\
SGC &  37.48±0.07 & 37.44±0.12 & 37.50±0.08 & 35.64±6.11 & 26.08±1.39 & 24.57±1.32 \\
ChebNet &
65.42±4,04&	58.58±3.29&	50.12±2.15&
69.63±6.90&	65.28±3.55&	64,83±0.72

\\
GAT &
76.90±0.49&	76.05±0.20&	37.18±0.53&	89.27±0.29&	83.73±0.74&	 25.36±0.20
\\
GIN &
71.91±2.71&	39.18±3.77&	37.50±0.09&
82.25±2.04&	65.98±2.14&	25.27±0.13

\\
PairNorm &  77.41±1.85 & 74.96±2.09 & 74.35±1.82 & 82.72±1.19 & 82.66±2.47 & 79.55±2.51 \\
GCNII &  38.88±4.26 & 37.56±0.43 & 37.50±0.08 & 68.37±6.61 & 62.95±9.41 & 65.12±2.86 \\
JKNet &  60.76±5.10 & 67.99±5.07 & 67.78±4.79 & 74.86±8.45 & 78.42±6.95 & 79.73±7.26 \\
GPRGNN &  76.07±2.28 & 41.94±9.95 & 78.30±2.51 & 91.55±0.43 & 91.74±0.81 &91.28±0.88 \\
DAGNN &  80.33±1.04 & 79.73±3.63 & 79.23±2.36 & 90.81±0.59 & 89.96±1.16 & 87.86±0.70 \\
APPNP &  39.41±5.80 & 43.02±10.16 & 41.42±7.50 & 64.59±20.09 & 59.62±23.27 & 63.63±19.28 \\
BernNet &
84.23±1.62&
81.86±1.63&
12.81±12.10&
89.17±1.91&
91.59±0.15&
16.53±5.78
\\\midrule
Ours(GCN) &  \underline{85.24±0.05} & \textbf{85.21±0.11} & \textbf{85.13±0.08} & 91.98±0.09 & \textbf{92.06±0.19} & \textbf{92.03±0.12} \\ 
Ours(GAT) &
84.88±0.14&	84.82±0.27&	\underline{85.04±0.15}&
\textbf{92.04±0.07}&	91.98±0.07&	\underline{92.00±0.05}

\\
Ours(GIN) &
\textbf{85.30±0.13}&	\underline{85.16±0.12}&	\textbf{85.13±0.08}&
\underline{92.00±0.03}&	\underline{92.03±0.05}&	91.98±0.04
\\
\bottomrule
\end{tabular}
\caption{  Node classification accuracy comparison on Computers and Photo}
\label{tab_appendix:comparative_experiments_Computers_Photos}
\end{table*}

\subsection{Hyper-parameter Configurations}

It's difficult to well-tune all the hyper-parameters simultaneously, so we do it  in three stages: 1)    Tune hyper-parameters in \textit{R-SoftGraphAIN} but with all other hyper-parameters fixed; 2)  Tune hyper-parameters in \textit{Curriculum Learning} but with all other hyper-parameters fixed; 3) fine-tune other hyper-parameters excepting that of  \textit{R-SoftGraphAIN} and  \textit{Curriculum Learning} (\eg, learning rate, dropout rate, and weight decay).

In every stage above, we choose the hyper-parameters according to their best validation performance.
The search spaces of all meaningful hyper-parameters are listed in Tab.~\ref{tab_appendix:hyper-parameters_search_spaces}, and we omit some unimportant ones because they are actually robust to model performance.
For every kind of experiment, we search in the space randomly for $500$ times in total (\ie, the sum of all layers).
Additionally, we report all hyper-parameters configurations used by \textit{Ours(GCN)} for comparative experiments in Tab.~\ref{tab_appendix:Hyper_parameters_configurations}.
Note that the exactly same hyper-parameters are employed for \textit{Ours(GAT)} and \textit{Ours(GIN)} to reduce the burden of computational resources.
In other words, with further appropriate hyper-parameter tuning, these two versions might have more outstanding performance, possibly due to the fact that the encoders (\ie, GAT and GIN) are more powerful than GCN itself.

In addition, note that in our experiments, we do not directly use the hyper-parameters $\alpha, \beta, \gamma$ in Tab.~\ref{tab_appendix:Hyper_parameters_configurations}.
In fact, we use their values after normalization, \ie, letting $\alpha = \alpha / \left( \alpha + \beta + \gamma \right) \in [0,1]$ in order to satisfy the rule $\alpha + \beta + \gamma = 1$ in the main text (see Sec.~\ref{sec:Additional_Residual_Combination}).

\section{Extensive Visualizations}
\label{sec_appendix:more_visualization}

In this section, we plot the original features, embeddings output by SGC, and embeddings output by our model on eleven real-world graph benchmarks respectively in Fig.~\ref{fig_appendix:embeddings_cora_citeseer_pubmed}, Fig.~\ref{fig_appendix:embeddings_CS_Physics_Computers_Photo}, and Fig.~\ref{fig_appendix:embeddings_texas_wisconsin_cornell_actor}.
These demonstrate that our model obtains clearer clustering boundaries compared to SGC, which conforms to the fact that our model achieves a performance gain in comparison.

Moreover, Fig.~\ref{fig_appendix:vis_comp1} gives  visualizations of the embeddings produced by different deep models with  synthetic data that is generated as follows:  first choose several cluster centers $\{c_i\}$ in $\mathbb{R}^{3}$, then sample points   in $\mathcal{N}\left( c_i, I \right)$, where colors and edges are created following Bernoulli distributions to ensure that points in the same class tend to share a link and the same color.
These visualizations show that our model obtains clearer decision boundaries  compared to the baseline models including SGC and Pairnorm,  conforming to the performance gain of our model.

\section{Discussion on the Time and Space Complexities}
\label{sec_appendix:time_and_space_complexities}
The theoretical time complexity is analysed $O(L d^2 d_0)$ where $d,d_0 \ll n$ if partial-SVD or truncated-SVD~\cite{p117_TruncatedSVD} is utilized.
And it would become $O(L d^3)$ with a full SVD decomposition.
However, it is efficient under the high-parallelizability implemented via Pytorch.
The theoretical space complexity is $O\left(L\left(n+d\right)d\right)$.

\section{Additional Hyper-parameter Studies}
\label{sec_appendix:Additional_Hyper-parameter_Studies}
In this section, to help readers better understand the properties of our method, we further conduct some additional hyper-parameter studies on the hyper-parameter $\alpha$, $\gamma$, and $k_{KNN}$, which we think are somewhat important in our framework.
We study these hyper-parameters based on all versions, \ie, Ours(GCN), Ours(GAT), and Ours(GIN).
We visualize the line graph considering the effects of them on the final results (see Fig.~\ref{app_fig:hyper-parameter_gamma} and Fig.~\ref{app_fig:hyper-parameter_k}).

Fig.~\ref{app_fig:hyper-parameter_gamma} illustrates how the trade-off between residual connections and initial connections influences the final performance, considering fixing the hyper-parameter $\beta$.
Note that studying the hyper-parameter $\gamma$ is equivalent to studying the hyper-parameter $\alpha$ in this situation.
From this figure, we can see that in some datasets, $\gamma$ should not be too small; otherwise, the performance would decline or fluctuate.
However, a too-high value of $\gamma$ also hurt the performance in some cases (\eg, nearly all versions on Cora and Ours(GAT) on Pubmed).
Therefore, we had better keep its value ranging between $0.3$ and $0.7$ on Cora, Citeseer, and Pubmed, because it seems somewhat stable in this range.
And definitely, we can specifically and carefully tune it according to the validation sets of all these datasets.

From Fig.~\ref{app_fig:hyper-parameter_k}, we can clearly observe a relatively stable line trend between the hyper-parameter $k_{KNN}$ and the model performance.
In most cases, we think the range $[5,7]$ is quite appropriate despite some exceptions.
However, according to this line graph, we generally think that it is not necessary to carefully select or tune this hyper-parameter, though one can definitely do it.


\section{Discussion On The Connections Between R-SoftGraphAIN and SmoothCurriculum}
\label{sec_appendix:connection_between_two_parts}
In this section, to help readers better understand our method, we briefly discuss the connection of our proposed two parts, \ie, \textit{R-SoftGraphAIN} and \textit{SmoothCurriculum}.
At first glimpse, the proposed two parts have little essential relationship, because both of them can be applied to different graph encoders to improve their performance.
But under the intrinsic design and basic motivation on facilitating a better deep GNN model, they can be parts that get along with each other quite well, and thus can be viewed as a whole part.
Specifically, one can easily notice that, according to its properties, R-SoftGraphAIN can preserve knowledge into $d$ (or $d_0$, more accurately) different channels and organize them well, \eg, the information in different channels would not intervene with each other.
With the aid of it, GNN encoders can definitely improve performance when layers go deep.
However, without the help of curriculum learning, the supervised knowledge (\ie, label information) absorbed by this module keeps constantly the same.
Therefore, it would organize this knowledge adaptively and in its own way, which is not quite understandable and depends on both the feature and the input graph structure.
But under the guide of the proposed curriculum learning framework, it will be gradually fed with the knowledge that is from easy to hard, from low-frequency to high-frequency, from global to local, and from generalizable to relatively noisy or unnecessary.
So fortunately, it has some opportunities to load different knowledge hierarchies into different channels and intuitively avoid some unnecessary forgetting.
For example, it could employ the first several channels to differentiate relatively larger communities, and use additional channels to more accurately locate neighborhoods and nodes themselves.
Besides, to reduce the negative influence of noise, it can stop learning early when necessary.
On the other hand, if using the curriculum learning framework individually and, \eg, applying it to GCN, GCN might have only one channel, and thus the new knowledge might replace the old one and the model tends to forget some basic yet important and generalizable label information, assuming the graph is connected.
But with the help of R-SoftGraphAIN, it might help organize this information well.
That's why we think these two parts can be viewed as a whole to significantly improve the model's performance.

\begin{table*}[t]
\small
\centering
\begin{tabular}{c|ccc|ccc}
\toprule
Method & \multicolumn{3}{c|}{Texas} & \multicolumn{3}{c}{Wisconsin} \\ \cmidrule{1-7} 
\#Layers & 16 & 32 & 64 & 16 & 32 & 64  \\ \midrule
GCN & 62.16±2.70 & 62.16±2.70 & 62.16±2.70 & 57.84±4.90 & 57.84±4.90 & 57.84±4.90  \\
SGC & 62.03±0.59 & 56.41±4.25 & 56.96±5.28 & 52.25±7.19 & 51.29±6.44 & 52.16±7.08  \\
ChebNet &
69.37±1.56&	
64.86±0.00&	
64.86±0.00&
71.90±3.00&
52.94±0.00&
52.94±0.00

\\
GAT &
64.32±2.96&	65.41±1.21&	64.86±0.00&	53.33±1.64&	53.73±1.07&	53.33±0.88
\\
GIN &
52.97±6.78&	62.17±4.44&	60.00±4.01&
47.06±3.67&	47.06±2.40&	50.98±4.38

\\
PairNorm & 32.70±17.41 & 41.08±18.04 & 40.68±16.71 & 44.51±11.33 & 52.84±8.97 & 52.94±11.35 \\
GCNII & 69.59±6.10 & 69.19±6.56 & 65.41±2.02 & 71.86±1.89 & 70.31±4.75 & 59.02±0.78  \\
JKNet & 65.41±3.03 & 61.08±6.23 & 66.49±2.76 & 55.98±3.53 & 52.76±5.69 & 56.08±4.04  \\
GPRGNN & 62.70±2.65 & 62.27±4.97 & 61.08±1.99 & 67.94±3.58 & 71.35±5.56 & 64.90±2.77  \\
DAGNN & 61.08±1.99 & 57.68±5.07 & 60.27±2.43 & 55.49±3.78 & 50.84±6.62 & 51.76±4.78  \\
APPNP & 64.46±3.11 & 60.68±4.50 & 64.32±2.78 & 59.31±3.71 & 54.24±5.94 & 59.90±3.00  \\
BernNet &
76.22±2.02&
61.08±1.32&
16.76±7.33&
\underline{80.00±2.88}&
63.53±4.74&
24.71±7.60
\\\midrule
Ours(GCN) & \textbf{85.95±1.21} & \textbf{85.41±1.48} & \textbf{84.86±1.48} & \textbf{82.35±1.39} & \textbf{83.14±1.75} & \textbf{84.71±0.88}  \\ 
Ours(GAT) &
79.46±1.48&	78.92±4.83&	78.38±1.91&
72.94±4.25&	73.73±2.97&	74.90±1.64

\\
Ours(GIN) &
\underline{82.70±1.58}&	\underline{82.57±1.48}&	\underline{83.12±1.33}&
\textbf{82.35±1.39}&	\underline{81.78±2.15}&	\underline{81.20±1.21}
\\
\bottomrule
\end{tabular}
\caption{Node classification accuracy comparison on heterophilous graphs: Texas and Wisconsin}
\label{tab_appendix:comparative_experiments_Texas_and_Wisconsin}
\end{table*}

\begin{table*}[t]
\small
\centering
\begin{tabular}{c|ccc|ccc}
\toprule
Method & \multicolumn{3}{c|}{Cornell} & \multicolumn{3}{c}{Actor} \\ \cmidrule{1-7} 
\#Layers & 16 & 32 & 64 & 16 & 32 & 64  \\ \midrule
GCN &  56.76±2.70 & 56.76±2.70 & 56.76±2.70 & 25.16±0.30 & 25.16±0.30 & 25.16±0.30 \\
SGC &  55.41±1.35 & 58.57±3.44 & 55.41±1.35 & 25.88±0.49 & 26.17±1.15 & 25.88±0.49 \\
ChebNet &
55.86±1.56&	55.86±1.56&	52.25±1.56&
34.71±0.11&
25.46±0.00&
25.46±0.00
\\
GAT &
53.51±2.26&	54.05±0.00&	55.14±1.48&	26.70±0.13&	25.54±0.09&	25.62±0.13
\\
GIN &
55.68±2.42&	54.59±1.21&	55.14±1.48&
24.78±1.08&	24.36±2,48&	23.45±2.73
\\
PairNorm &  35.95±16.77 & 36.89±18.63 & 40.68±12.89 & 22.84±2.29 & 24.33±1.60 & 23.23±2.84 \\
GCNII &  66.49±5.88 & 74.16±6.48 & 56.92±2.02 & 33.79±0.68 & 34.28±1.12 & 34.64±0.71 \\
JKNet &  50.27±3.95 & 57.30±4.95 & 51.49±4.40 & 29.23±0.65 & 28.80±0.97 & 28.26±0.43 \\
GPRGNN &  53.11±3.45 & 58.27±3.96 & 52.16±3.74 & 32.83±0.88 & 29.88±1.82 & 32.43±0.49 \\
DAGNN &  55.27±2.49 & 58.43±3.93 & 52.43±5.01 & 27.66±0.67 & 27.73±1.08 & 25.45±0.64 \\
APPNP &  56.35±2.46 & 58.43±3.74 & 54.69±2.55 & 28.38±0.79 & 28.65±1.28 & 28.19±0.97 \\
BernNet &
72.97±1.71&
52.43±9.46&
11.89±1.32&
\underline{36.31±0.39}&
28.19±1.10&
20.66±3.10
\\\midrule
Ours(GCN)&  \textbf{82.16±1.48} & \textbf{82.70±1.48} & \textbf{82.62±1.21} & \textbf{38.16±0.15} & \textbf{38.42±0.23} & \textbf{38.49±0.34} \\ 
Ours(GAT) &
\underline{81.62±2.26}&	77.84±2.26&	\underline{81.62±1.21}&
35.04±0.42&	\underline{34.97±0.34}&	\underline{35.54±0.61}
\\
Ours(GIN) &
80.00±1.48&	\underline{81.08±1.91}&	81.08±1.91&
34.46±0.51&	34.51±0.27&	34.50±0.61
\\
\bottomrule
\end{tabular}
\caption{  Node classification accuracy comparison on heterophilous graphs:  Cornell and Actor}
\label{tab_appendix:comparative_experiments_Cornell_and_Actor}
\end{table*}

\begin{table*}[t]
\small
\centering
\begin{tabular}{c|cc|cc|cc}
\toprule
Method & \multicolumn{2}{c|}{Cora} & \multicolumn{2}{c|}{Citeseer} & \multicolumn{2}{c}{Pubmed} \\ \cmidrule{1-7} 
 \#Layers & 32 & 64 & 32 & 64 & 32 & 64 \\ \midrule
w.o. SG & 83.30±0.12 & \underline{84.60±0.29} & 72.98±0.50 & 72.62±0.98 & 80.26±0.98 & 80.30±0.23 \\
w.o. RC & 82.88±0.79 & 82.40±0.33 & 72.82±1.01 & 71.04±0.55 & 78.84±0.59 & 78.60±0.27 \\
w.o. R-SG & 40.22±5.71 & 36.74±4.36 & 29.32±3.47 & 27.70±2.94 & 46.28±4.91 & 28.61±3.64 \\
w.o. LS & \underline{83.96±0.65} & 84.00±0.22 & \underline{73.64±0.68} & \underline{74.34±0.79} & \underline{81.04±0.21} & \underline{80.86±0.79} \\
w.o. CL &  82.40±0.65 & 82.58±0.87 & 73.20±0.77 & 72.54±0.42 & 79.34±0.39 & 79.00±0.54 \\
Ours(GCN) & \textbf{85.12±0.15} & \textbf{84.87±0.26} & \textbf{74.42±0.26} & \textbf{74.50±0.23} & \textbf{81.28±0.18} & \textbf{81.58±0.66} \\ \bottomrule
\end{tabular}
\caption{Ablation Studies on Cora, Citeseer, and Pubmed}
\label{tab_appendix:ablation_studies_Cora_Citeseer_Pubmed}
\end{table*}


\begin{table*}[t]
\small
\centering
\begin{tabular}{c|cc|cc}
\toprule
Method & \multicolumn{2}{c|}{CoauthorCS} & \multicolumn{2}{c}{CoauthorPhysics} \\ \cmidrule{1-5} 
\#Layers & 32 & 64 & 32 & 64 \\ \midrule
w.o. SG & 91.79±0.07 & 91.70±0.07 & \underline{94.23±0.25} & \underline{94.25±0.13} \\
w.o. RC & 91.40±0.26 & 89.03±0.62 & 92.99±0.19 & 92.59±0.63 \\
w.o. R-SG & 51.61±18.23 &	28.51±4.70 &	77.20±11.95 &	60.94±8.23\\
w.o. LS & \underline{92.02±0.05} & \underline{91.95±0.08} & 94.10±0.11 & 94.06±0.08 \\
w.o. CL & 89.60±0.20 & 89.23±0.12 & 92.46±0.61 & 92.22±0.56
 \\
Ours(GCN) & \textbf{92.50±0.09} & \textbf{92.41±0.07} & \textbf{94.45±0.06} & \textbf{94.52±0.04} \\ \bottomrule
\end{tabular}
\caption{Ablation Studies on CS aand Physics}
\label{tab_appendix:ablation_studies_CS_Physics_Computers_and_Photo_1}
\end{table*}

\begin{table*}[t]
\small
\centering
\begin{tabular}{c|cc|cc}
\toprule
Method & \multicolumn{2}{c|}{AmazonComputers} & \multicolumn{2}{c}{AmazonPhoto} \\ \cmidrule{1-5} 
\#Layers & 32 & 64 & 32 & 64 \\ \midrule
w.o. SG & 82.34±0.67 & 84.50±0.28 & 90.26±1.57 & \underline{91.69±0.37} \\
w.o. RC & 83.78±0.52 & 84.25±0.37 & 91.42±0.32 & 91.14±0.48 \\
w.o. R-SG & 61.52±8.27 &	49.94±9.69 &	83.77±9.52 &	70.09±12.39 \\
w.o. LS & \underline{84.52±0.93} & \underline{84.91±0.16} & \underline{91.50±0.30} & 91.64±0.40 \\
w.o. CL & 84.00±0.97 & 84.44±0.56 & 90.58±0.94 & 90.41±1.07
 \\
Ours(GCN) & \textbf{85.21±0.11} & \textbf{85.13±0.08} & \textbf{92.06±0.19} & \textbf{92.03±0.12} \\ \bottomrule
\end{tabular}
\caption{Ablation Studies on Computers and Photo}
\label{tab_appendix:ablation_studies_CS_Physics_Computers_and_Photo_2}
\end{table*}


\begin{table*}[t]
\centering
\begin{tabular}{c|cc}
\toprule
Method & \multicolumn{2}{c}{OGBN-ArXiv} \\ \cmidrule{1-3} 
\#Layers & 32 & 64 \\ \midrule
GCN & 46.38±3.87 & 42.95±3.93 \\
SGC & 34.22±0.04 & 23.14±7.85 \\
ChebNet &
41.00±1.59&	35.16±2.23
\\
GAT &
59.78±0.25&	36.63±1.71
\\
GIN &
65.17±0.31&	60.56±0.57
\\
PairNorm & 63.32±0.97 & 43.57±1.24 \\
GCNII & 72.60±0.25 & 70.07±0.14 \\
JKNet & 66.31±0.63 & 65.80±0.27 \\
GPRGNN & 70.18±0.16 & 69.98±0.15 \\
DAGNN & 71.46±0.27 & 70.58±0.12 \\
APPNP & 66.94±0.26 & 66.90±0.15 \\ 
BernNet &
45.16±0.33&	37.18±2.52
\\\midrule
Ours(GCN) & 74.07±0.08 & 73.95±0.06 \\ 
Ours(GAT) &
\underline{74.37±0.07}&	\underline{74.41±0.05}
\\
Ours(GIN) &
\textbf{75.02±0.12}&	\textbf{74.85±0.09}
\\
\bottomrule
\end{tabular}
\caption{Node classification accuracy comparison on the large-scale graph: OGBN-ArXiv}
\label{tab_appendix:arxiv}
\end{table*}

\begin{table*}[t]
\centering
\begin{tabular}{c|ccc}
\toprule
 &  &  &  \\
\multirow{-2}{*}{Method} & \multirow{-2}{*}{Cora} & \multirow{-2}{*}{Citeseer} & \multirow{-2}{*}{Pubmed} \\ \midrule
GCN & 52.53±8.94 & 29.98±6.25 & 48.75±9.33 \\
SGC & 76.16±0.27 & 69.62±0.63 & 78.09±0.51 \\
PairNorm & 80.50±0.19 & 63.34±5.93 & 80.55±3.16 \\
GCNII & {\color[HTML]{282828} 88.08±0.25} & {\color[HTML]{282828} 75.47±0.32} & {\color[HTML]{282828} 89.53±0.16} \\
JKNet & 85.05±0.47 & 74.15±0.70 & 89.25±0.39 \\
GPRGNN & 87.47±0.35 & 74.29±0.48 & \underline{90.61±0.12} \\
DAGNN & 87.67±0.39 & 73.82±0.49 & 87.21±0.13 \\
APPNP & {\color[HTML]{282828} \underline{88.28±0.63}} & {\color[HTML]{282828} \underline{75.76±0.51}} & {\color[HTML]{282828} 89.05±0.11} \\ 
Ours(GCN) & {\color[HTML]{282828} \textbf{90.16±0.33}} & {\color[HTML]{282828} \textbf{77.23±0.35}} & {\color[HTML]{282828} \textbf{91.01±0.12}} \\ \bottomrule
\end{tabular}
\caption{Results comparison of node classification tasks in multiple random splits ($60\%/20\%/20\%$) on three citation graphs}
\label{tab_appendix:multi_splits}
\end{table*}

\begin{table*}[t]
\centering
\begin{tabular}{c|cccc}
\toprule
Method & Texas & Wisconsin & Cornell & Actor \\ \midrule
GAT~\cite{p8_GAT} & 61.62 & 54.71 & 59.46 & 28.06 \\
{\color[HTML]{282828} Gemo-GCN~\cite{p111_Geom-GCN}} & 67.57 & 64.12 & 60.81 & 31.63 \\
{\color[HTML]{282828} H2GCN~\cite{p100_Beyond_homophily}} & 84.86 & 86.67 & 82.16 & 35.86 \\
FAGCN~\cite{p112_FAGCN} & 84.32 & 83.33 & 81.35 & 35.74 \\
GCA~\cite{p113_GCA} & 59.46 & 50.78 & 55.41 & 29.65 \\
BGRL~\cite{p114_BGRL} & 59.19 & 52.35 & 57.30 & 29.86 \\
VGAE~\cite{p115_VGAE} & 59.20 & 56.67 & 59.19 & 26.99 \\
FSGNN*~\cite{p116_FSGNN} &  \underline{87.30} & \textbf{88.43} & \underline{88.11} & 35.75 \\ \midrule
Ours(GCN) & 85.95 & 84.71 & 82.70 & \textbf{38.49} \\
Ours(GCN)* & \textbf{94.74} & \underline{88.24} & \textbf{91.05} & \underline{37.73} \\ \bottomrule
\end{tabular}
\caption{More result comparison on these heterophilous graphs (models with the suffix $*$ are evaluated in multiple random splits, \ie, $60\%/20\%/20\%$)}
\label{tab_appendix:more_heterophiliy}
\end{table*}

\begin{table*}[t]
\small
\centering
\begin{tabular}{c|cccccccc|cc}
\toprule
\diagbox{Method}{Cora}   & 2 & 4 & 8 & 12 & 16 & 24 & 32 & 64 & 1000 & 10000 \\ \midrule
GCN & 54.8 & \underline{66.1} & \underline{72.2} & 70.2 & 65.6 & 56.6 & 56.0 & 31.9 & 31.9 & 31.9 \\
SGC & 55.0 & 65.2 & 72.1 & \underline{75.1} & \underline{75.0} & \underline{77.0} & \underline{75.6} & \underline{75.6} & 16.8 & 15.7 \\
GCNII & 40.7 & 49.4 & 60.1 & 65.8 & 62.4 & 62.6 & 65.9 & 62.4 & \underline{65.7} & \underline{63.4} \\
Pairnorm & \underline{56.6} & 60.6 & 63.2 & 67.9 & 70.5 & 70.8 & 61.9 & 35.5 & 54.4 & 31.8 \\
Ours(GCN) & \textbf{69.9} & \textbf{72.4} & \textbf{74.4} & \textbf{78.3} & \textbf{78.8} & \textbf{78.5} & \textbf{77.8} & \textbf{76.7} & \textbf{76.6} & \textbf{75.0} \\ \bottomrule
\end{tabular}
\caption{Accuracy of different models with varying layers in node classification tasks with noisy features on Cora}
\label{tab_appendix:noisy_features_Cora}
\end{table*}

\begin{table*}[t]
\small
\centering
\begin{tabular}{c|cccccccc|cc}
\toprule
\diagbox{Method}{Citeseer}  & 2 & 4 & 8 & 12 & 16 & 24 & 32 & 64 & 1000 & 10000 \\ \midrule
GCN & 36.1 & \underline{45.4} & 49.4 & 48.4 & 47.6 & 43.3 & 43.8 & 23.3 & 18.1 & 18.1 \\
SGC & \underline{37.9} & 44.0 & \underline{51.1} & \textbf{52.3} & \underline{54.0} & \underline{53.0} & \underline{55.2} & \textbf{56.0} & \underline{47.0} & 22.4 \\
GCNII & 32.8 & 33.5 & 34.7 & 43.4 & 45.1 & 42.8 & 45.9 & \underline{46.8} & 48.5 & \underline{46.8} \\
Pairnorm & 37.5 & 43.4 & 48.2 & 47.7 & 46.9 & 44.3 & 38.9 & 34.8 & 34.4 & 34.4 \\
Ours(GCN) & \textbf{46.5} & \textbf{49.4} & \textbf{53.1} & \underline{51.8} & \textbf{54.6} & \textbf{55.1} & \textbf{55.4} & \underline{55.7} & \textbf{55.0} & \textbf{53.5} \\ \bottomrule
\end{tabular}
\caption{Accuracy of different models with varying layers in node classification tasks with noisy features on Citeseer}
\label{tab_appendix:noisy_features_Citeseer}
\end{table*}

\begin{table*}[t]
\small
\centering
\begin{tabular}{c|cccccccc|cc}
\toprule
\diagbox{Method}{Pubmed} & 2 & 4 & 8 & 12 & 16 & 24 & 32 & 64 & 1000 & 10000 \\ \midrule
GCN & 40.4 & 48.0 & 57.3 & 61.2 & 60.4 & 59.4 & 57.3 & 44.8 & 40.7 & OOM \\
SGC & 40.6 & 47.6 & \underline{62.3} & \underline{69.2} & \underline{71.2} & \underline{74.6} & \underline{74.9} & \underline{78.6} & 34.1 & \underline{34.1} \\
GCNII & 39.7 & 41.3 & 41.2 & 40.6 & 40.3 & 40.0 & 41.0 & 39.7 & 39.3 & OOM \\
Pairnorm & \underline{41.3} & \underline{48.8} & 61.4 & 65.7 & 68.1 & 67.4 & 65.3 & 66.6 & \underline{65.7} & OOM \\
Ours(GCN) & \textbf{45.7} & \textbf{52.7} & \textbf{70.1} & \textbf{69.8} & \textbf{73.9} & \textbf{76.2} & \textbf{78.0} & \textbf{78.5} & \textbf{71.0} & \textbf{71.0} \\ \bottomrule
\end{tabular}
\caption{Accuracy of different models with varying layers in node classification tasks with noisy features on Pubmed}
\label{tab_appendix:noisy_features_Pubmed}
\end{table*}

\begin{sidewaystable*}[t]
\small
\centering
\begin{tabular}{c|c}
\toprule
Dataset & Hyper-parameters Configurations \\ \midrule
Cora 
& 
\makecell[c]
{
$d=64$,
$\alpha = 0.9$,
$\beta = 0.2$,
$\gamma=0.5$,
$a=0.01$,
$b=1$,
$n_T=10$,
$p=0.4$,
$q=0.3$,
\\
$\mathrm{mask\_ratio}=10^{-4}$,
$K_{knn}=7$,
$\gamma'=0.1$,
$d_0/d=0.99$,
$\mathrm{epoch}=300$,
\\
$\mathrm{lr=0.01}$,
$\mathrm{lr\_decay}=100$,
$\mathrm{dropout}=0.7$,
$\mathrm{weight\_decay}= 10^{-4}$
} 
\\ 
\midrule
Citeseer
&
\makecell[c]{
$d=256$,
$\alpha = 0.3$,
$\beta = 0.4$,
$\gamma=5$,
$a=0.6$,
$b=0.9$,
$n_T=100$,
$p=0.6$,
$q=0.5$,
\\
$\mathrm{mask\_ratio}=10^{-5}$,
$K_{knn}=4$,
$\gamma'=0.7$,
$d_0/d=0.8$,
$\mathrm{epoch}=300$,
\\
$\mathrm{lr=0.05}$,
$\mathrm{lr\_decay}=150$,
$\mathrm{dropout}=0.6$,
$\mathrm{weight\_decay}= 10^{-4}$
}
\\
\midrule
Pubmed 
&
\makecell[c]{
$d=64$,
$\alpha = 0.3$,
$\beta = 0.5$,
$\gamma=5$,
$a=0.8$,
$b=0.95$,
$n_T=20$,
$p=0.99$,
$q=0.3$,
\\
$\mathrm{mask\_ratio}=10^{-5}$,
$K_{knn}=7$,
$\gamma'=1$,
$d_0/d=0.8$,
$\mathrm{epoch}=300$,
\\
$\mathrm{lr=0.05}$,
$\mathrm{lr\_decay}=250$,
$\mathrm{dropout}=0.5$,
$\mathrm{weight\_decay}=5\times 10^{-6}$
}
\\
\midrule
Coauthor CS 
&
\makecell[c]{
$d=64$,
$\alpha = 0.01$,
$\beta = 0.5$,
$\gamma=5$,
$a=0.7$,
$b=0.8$,
$n_T=10$,
$p=0.9$,
$q=0.6$,
\\
$\mathrm{mask\_ratio}=0.3$,
$K_{knn}=2$,
$\gamma'=0.01$,
$d_0/d=0.95$,
$\mathrm{epoch}=300$,
\\
$\mathrm{lr=0.1}$,
$\mathrm{lr\_decay}=200$,
$\mathrm{dropout}=0.2$,
$\mathrm{weight\_decay}=5\times 10^{-5}$,
}
\\ 
\midrule
Coauthor Physics 
&
\makecell[c]{
$d=128$,
$\alpha = 0.2$,
$\beta = 0.4$,
$\gamma=10$,
$a=0.8$,
$b=0.8$,
$n_T=50$,
$p=0$,
$q=0.8$,
\\
$\mathrm{mask\_ratio}=0.4$,
$K_{knn}=7$,
$\gamma'=0.3$,
$d_0/d=0.96$,
$\mathrm{epoch}=300$,
\\
$\mathrm{lr=0.01}$,
$\mathrm{lr\_decay}=50$,
$\mathrm{dropout}=0.4$,
$\mathrm{weight\_decay}=10^{-6}$,
}
\\ 
\midrule
Amazon Photo 
&
\makecell[c]{
$d=256$,
$\alpha = 0.01$,
$\beta = 0.7$,
$\gamma=0.5$,
$a=0.3$,
$b=0.85$,
$n_T=5$,
$p=0.6$,
$q=0.8$,
\\
$\mathrm{mask\_ratio}=10^{-3}$,
$K_{knn}=5$,
$\gamma'=0.3$,
$d_0/d=0.93$,
$\mathrm{epoch}=300$,
\\
$\mathrm{lr=0.02}$,
$\mathrm{lr\_decay}=300$,
$\mathrm{dropout}=0.1$,
$\mathrm{weight\_decay}=0$
}
\\ 
\midrule
Amazon Computers 
&
\makecell[c]{
$d=128$,
$\alpha = 0.8$,
$\beta = 0.4$,
$\gamma=5$,
$a=0.2$,
$b=1$,
$n_T=100$,
$p=0.7$,
$q=0.3$,
\\
$\mathrm{mask\_ratio}=10^{-5}$,
$K_{knn}=1$,
$\gamma'=0.7$,
$d_0/d=0.96$,
$\mathrm{epoch}=300$,
\\
$\mathrm{lr=0.1}$,
$\mathrm{lr\_decay}=50$,
$\mathrm{dropout}=0.6$,
$\mathrm{weight\_decay}=10^{-6}$
}
\\ 
\midrule
Texas 
& 
\makecell[c]{
$d=64$,
$\alpha = 0.1$,
$\beta = 0.01$,
$\gamma=0.7$,
$a=0.01$,
$b=0.8$,
$n_T=100$,
$p=0.1$,
$q=0.4$,
\\
$\mathrm{mask\_ratio}=10^{-4}$,
$K_{knn}=2$,
$\gamma'=2$,
$d_0/d=0.99$,
$\mathrm{epoch}=300$,
\\
$\mathrm{lr=0.01}$,
$\mathrm{lr\_decay}=300$,
$\mathrm{dropout}=0.3$,
$\mathrm{weight\_decay}=5\times 10^{-6}$
}
\\ 
\midrule
Wisconsin 
& 
\makecell[c]{
$d=256$,
$\alpha = 0.3$,
$\beta = 0.1$,
$\gamma=10$,
$a=0.2$,
$b=0.8$,
$n_T=20$,
$p=0.9$,
$q=0.7$,
\\
$\mathrm{mask\_ratio}=10^{-5}$,
$K_{knn}=1$,
$\gamma'=2$,
$d_0/d=0.6$,
$\mathrm{epoch}=300$,
\\
$\mathrm{lr=0.05}$,
$\mathrm{lr\_decay}=300$,
$\mathrm{dropout}=0.8$,
$\mathrm{weight\_decay}=0$
}
\\ 
\midrule
Cornell 
&
\makecell[c]{
$d=128$,
$\alpha = 0.6$,
$\beta = 0.2$,
$\gamma=10$,
$a=0.01$,
$b=0.9$,
$n_T=10$,
$p=0.9$,
$q=0.4$,
\\
$\mathrm{mask\_ratio}=10^{-3}$,
$K_{knn}=1$,
$\gamma'=0.5$,
$d_0/d=0.5$,
$\mathrm{epoch}=300$,
\\
$\mathrm{lr=0.01}$,
$\mathrm{lr\_decay}=300$,
$\mathrm{dropout}=0.3$,
$\mathrm{weight\_decay}=5\times 10^{-6}$
}
\\ 
\midrule
Actor 
&
\makecell[c]{
$d=64$,
$\alpha = 100$,
$\beta = 10$,
$\gamma=100$,
$a=0.1$,
$b=0.85$,
$n_T=20$,
$p=0.3$,
$q=0.1$,
\\
$\mathrm{mask\_ratio}=10^{-4}$,
$K_{knn}=7$,
$\gamma'=2$,
$d_0/d=0.5$,
$\mathrm{epoch}=300$,
\\
$\mathrm{lr=0.005}$,
$\mathrm{lr\_decay}=300$,
$\mathrm{dropout}=0.3$,
$\mathrm{weight\_decay}=0$
}
\\ 
\midrule
OGBN-ArXiv
&
\makecell[c]{
$d=128$,
$\alpha = 0.6$,
$\beta = 0.3$,
$\gamma=0.9$,
$a=0.3$,
$b=0.95$,
$n_T=1$,
$p=0.6$,
$q=0.5$,
\\
$\mathrm{mask\_ratio}=10^{-4}$,
$K_{knn}=1$,
$\gamma'=100$,
$d_0/d=0.5$,
$\mathrm{epoch}=1000$,
\\
$\mathrm{lr=0.01}$,
$\mathrm{lr\_decay}=1000$,
$\mathrm{dropout}=0.2$,
$\mathrm{weight\_decay}=0$
}
\\
\bottomrule
\end{tabular}
\caption{Hyper-parameters Configurations of our model \textit{Ours(GCN)} with $32$ layers for node classification tasks on twelve public graph benchmarks. The exactly same hyper-parameters are employed for \textit{Ours(GAT)} and \textit{Ours(GIN)} to reduce the burden of computational resources.}
\label{tab_appendix:Hyper_parameters_configurations}
\end{sidewaystable*}

\begin{sidewaystable*}[t]
\small
\centering
\begin{tabular}{c|c}
\toprule
Hyper-parameter Names & Hyper-parameters Ranges \\ \midrule
$d$
& 
\makecell[c]
{$64,128,256$}
\\ 
\midrule
$\alpha$
&
\makecell[c]
{
$0.01,0.1,0.2,0.3,0.4,0.5,0.6,0.7,0.8,0.9,5,10,100$
}
\\
\midrule
$\beta$
&
\makecell[c]
{
$0.01,0.1,0.2,0.3,0.4,0.5,0.6,0.7,0.8,0.9,5,10,100$
}
\\
\midrule
$\gamma$
&
\makecell[c]
{
$0.01,0.1,0.2,0.3,0.4,0.5,0.6,0.7,0.8,0.9,5,10,100$
}
\\ 
\midrule
$a$
&
\makecell[c]
{
$0.01,0.1,0.2,0.3,0.4,0.5,0.6,0.7,0.8,0.9,0.99$
}
\\ 
\midrule
$b$
&
\makecell[c]
{
$1.0,0.95,0.9,0.85,0.8$
}
\\ 
\midrule
$n_T$
&
\makecell[c]
{
$5,10,20,50,100$
}
\\ 
\midrule
$p$
& 
\makecell[c]
{
$0,0.1,0.2,0.3,0.4,0.5,0.6,0.7,0.8,0.9,0.99$
}
\\ 
\midrule
$q$
& 
\makecell[c]
{
$0,0.1,0.2,0.3,0.4,0.5,0.6,0.7,0.8,0.9,0.99$
}
\\ 
\midrule
$K_{knn}$
&
\makecell[c]
{
$1,2,3,4,5,6,7,8$
}
\\ 
\midrule
$\gamma'$
&
\makecell[c]
{
$0.001,0.01,0.1,0.3,0.5,0.7,1,2,5,10,100$
}
\\
\midrule
$d_0/d$
&
\makecell[c]
{
$0.1,0.2,0.3,0.4,0.5,0.6,0.7,0.8,0.9,0.91,0.92,0.93,0.94,0.95,0.96,0.97,0.98,0.99,1$
}
\\
\midrule
$\mathrm{lr}$
&
\makecell[c]
{
$0.1,0.01,0.05,0.001,0.005$
}
\\
\midrule
$\mathrm{lr\_decay}$
&
\makecell[c]
{
$50,100,150,200,250,300$
}
\\
\midrule
$\mathrm{dropout}$
&
\makecell[c]
{
$0.1,0.2,0.3,0.4,0.5,0.6,0.7,0.8,0.9$
}
\\
\midrule
 $\mathrm{weight\_decay}$
&
\makecell[c]
{
$0,10^{-4},5\times 10^{-4},10^{-5},5\times 10^{-5},10^{-6},5\times 10^{-6}$
}
\\
\midrule
 $\mathrm{mask\_ration}$
&
\makecell[c]
{
$10^{-5}, 10^{-4}, 10^{-3}, 10^{-2}, 0.1, 0.2, 0.3, 0.4, 0.5, 0.6$
}
\\
\bottomrule
\end{tabular}
\caption{The Hyper-parameter Search Spaces}
\label{tab_appendix:hyper-parameters_search_spaces}
\end{sidewaystable*}

\begin{figure*}[htbp]
\centering
\subfigure[Cora(Original Feature)]{
\begin{minipage}[b]{0.3\linewidth}
\centering
\includegraphics[width=4.3cm]{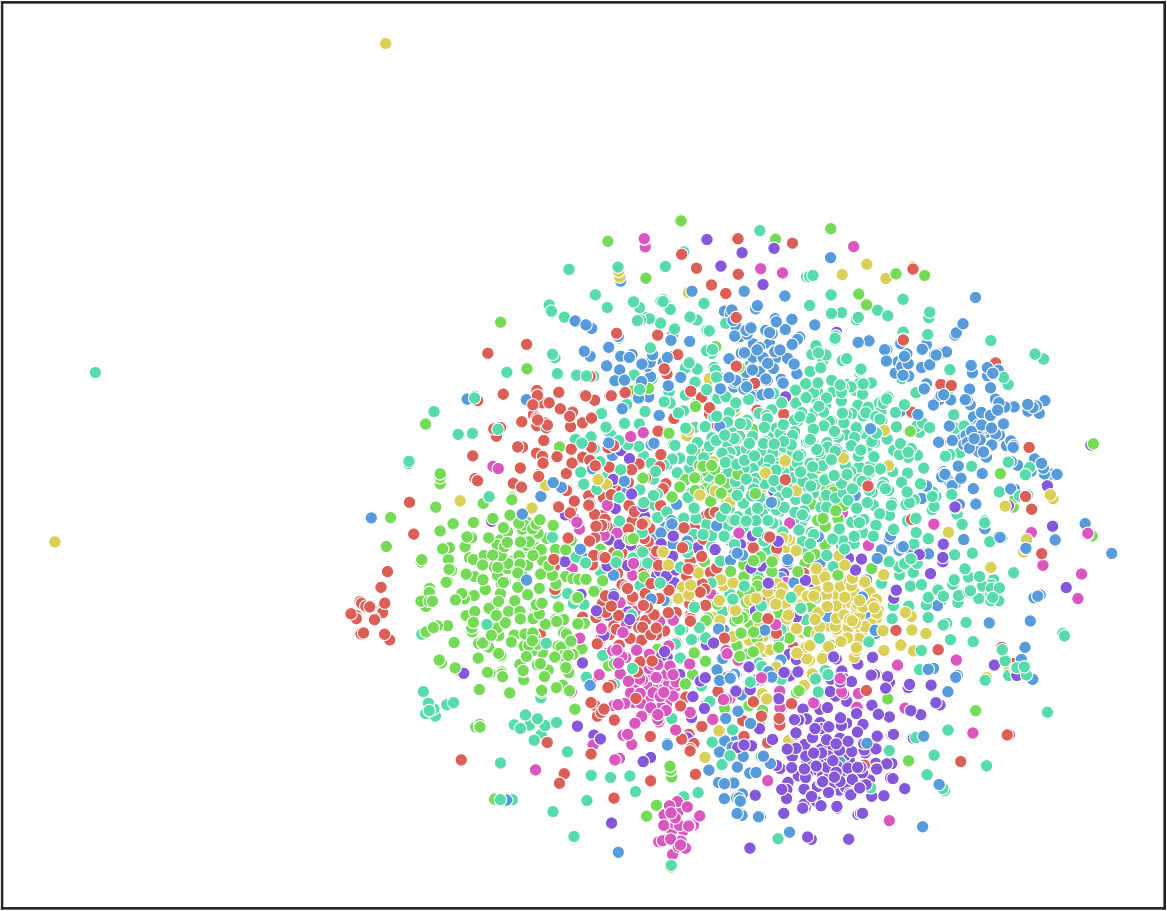}
\end{minipage}
}
\subfigure[Cora(SGC)]{
\begin{minipage}[b]{0.3\linewidth}
\centering
\includegraphics[width=4.3cm]{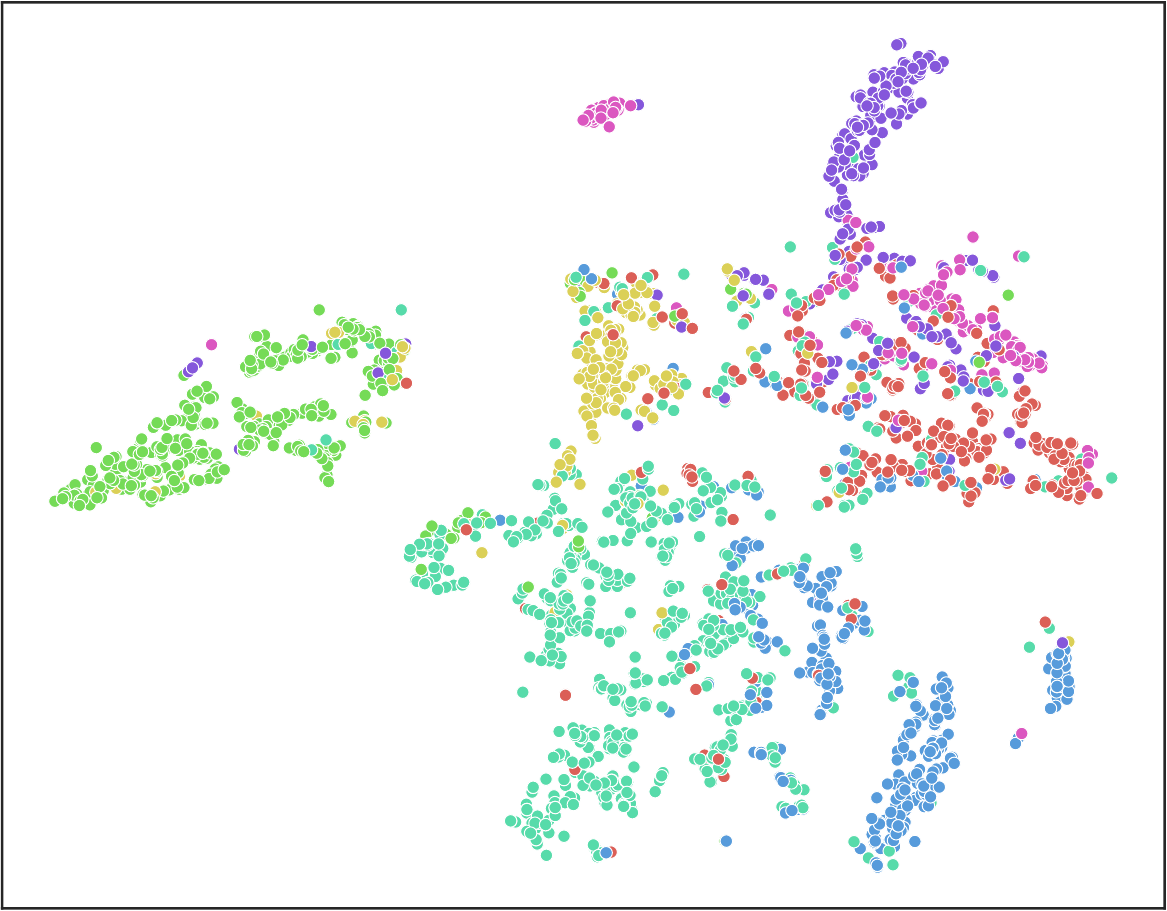}
\end{minipage}
}
\subfigure[Cora(Ours)]{
\begin{minipage}[b]{0.3\linewidth}
\centering
\includegraphics[width=4.3cm]{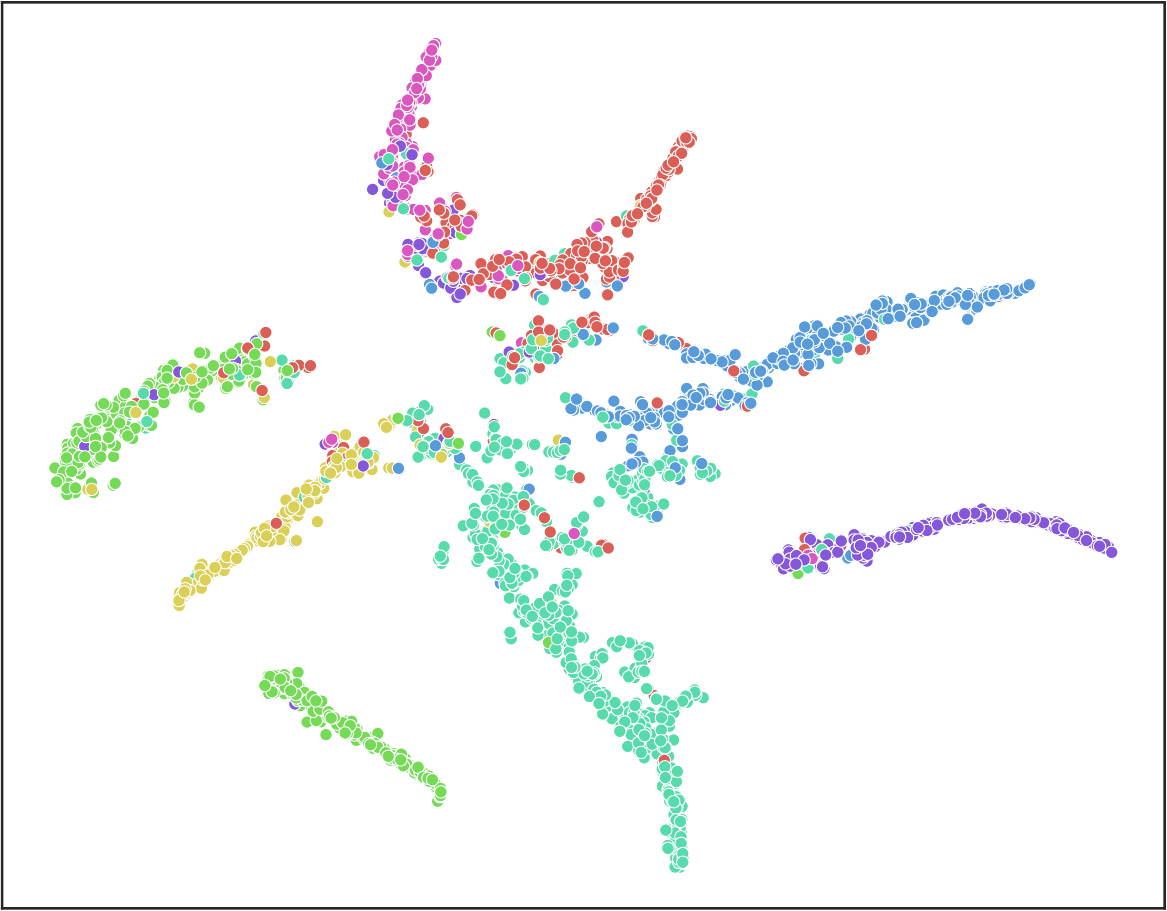}
\end{minipage}
}
\subfigure[Citeseer(Original Feature)]{
\begin{minipage}[b]{0.3\linewidth}
\centering
\includegraphics[width=4.3cm]{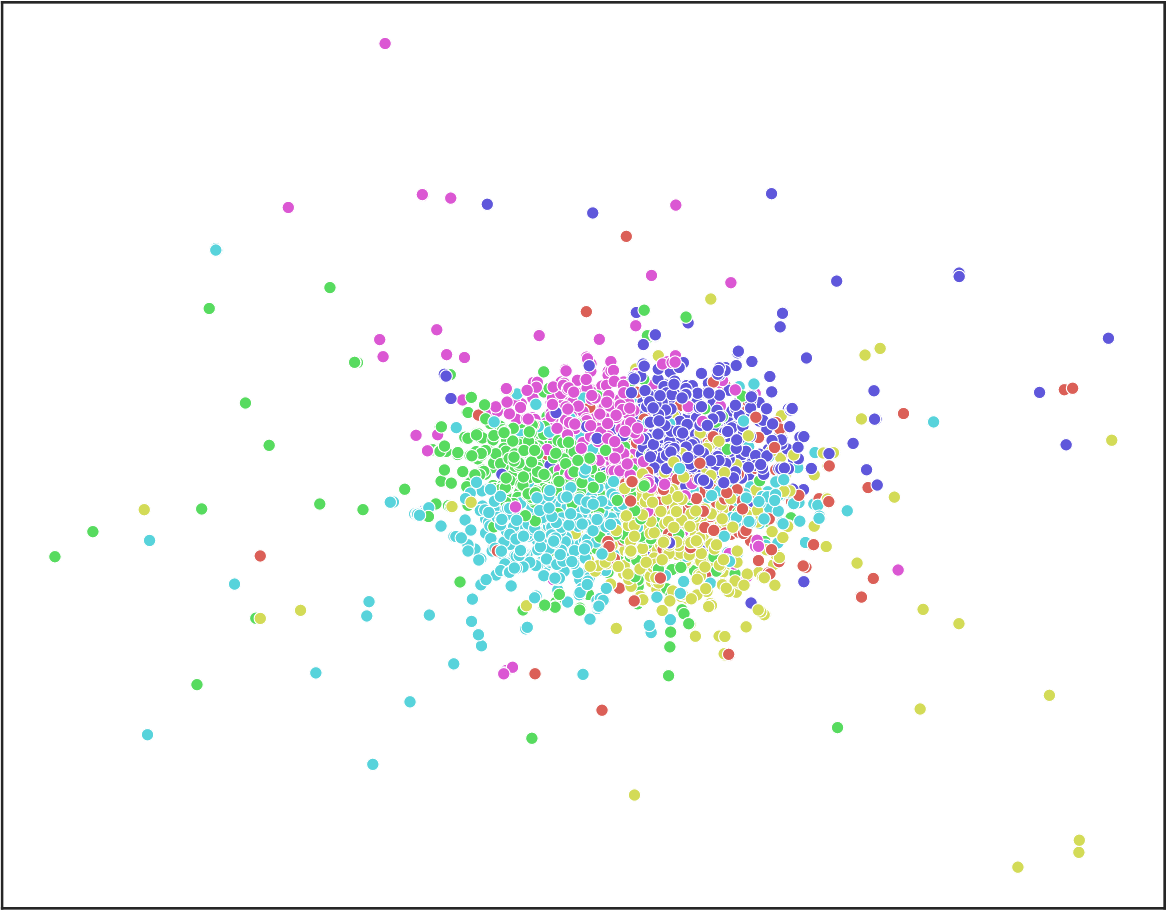}
\end{minipage}
}
\subfigure[Citeseer(SGC)]{
\begin{minipage}[b]{0.3\linewidth}
\centering
\includegraphics[width=4.3cm]{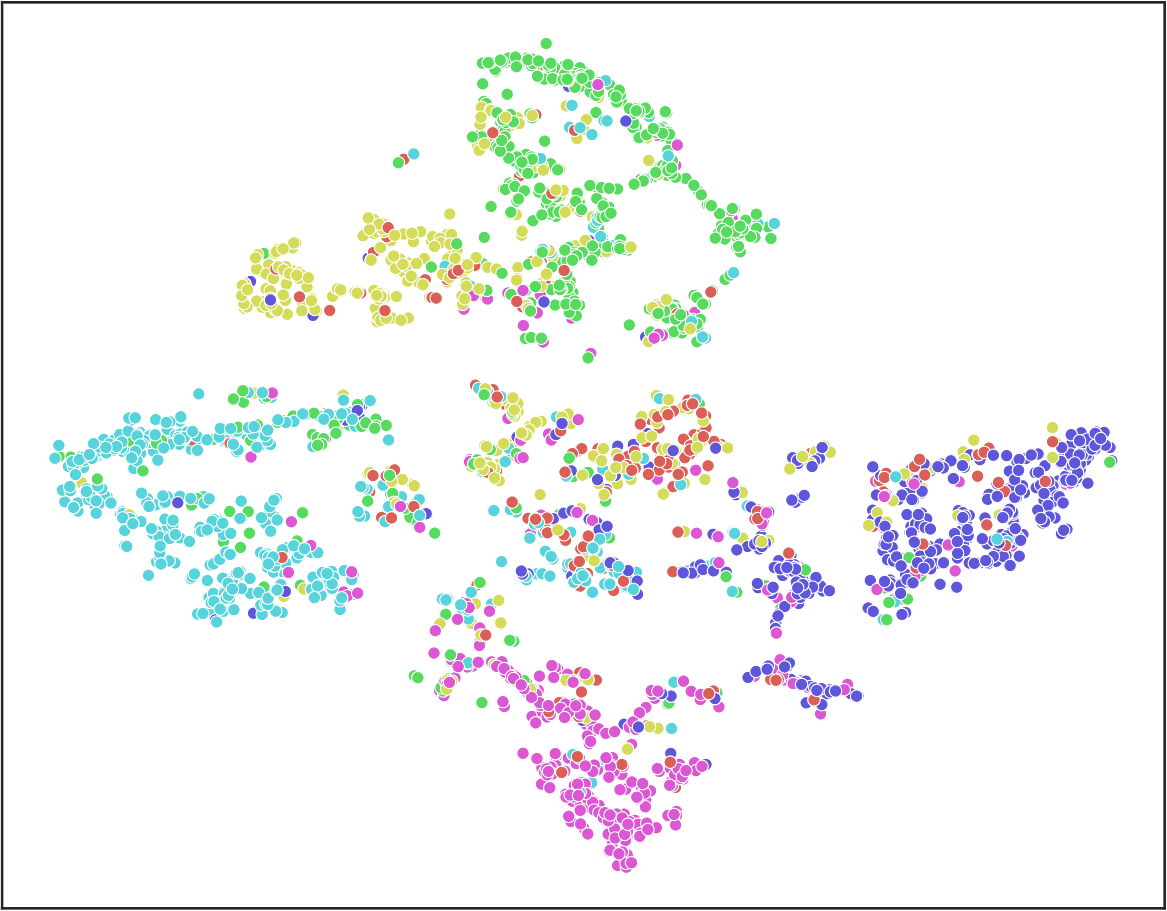}
\end{minipage}
}
\subfigure[Citeseer(Ours)]{
\begin{minipage}[b]{0.3\linewidth}
\centering
\includegraphics[width=4.3cm]{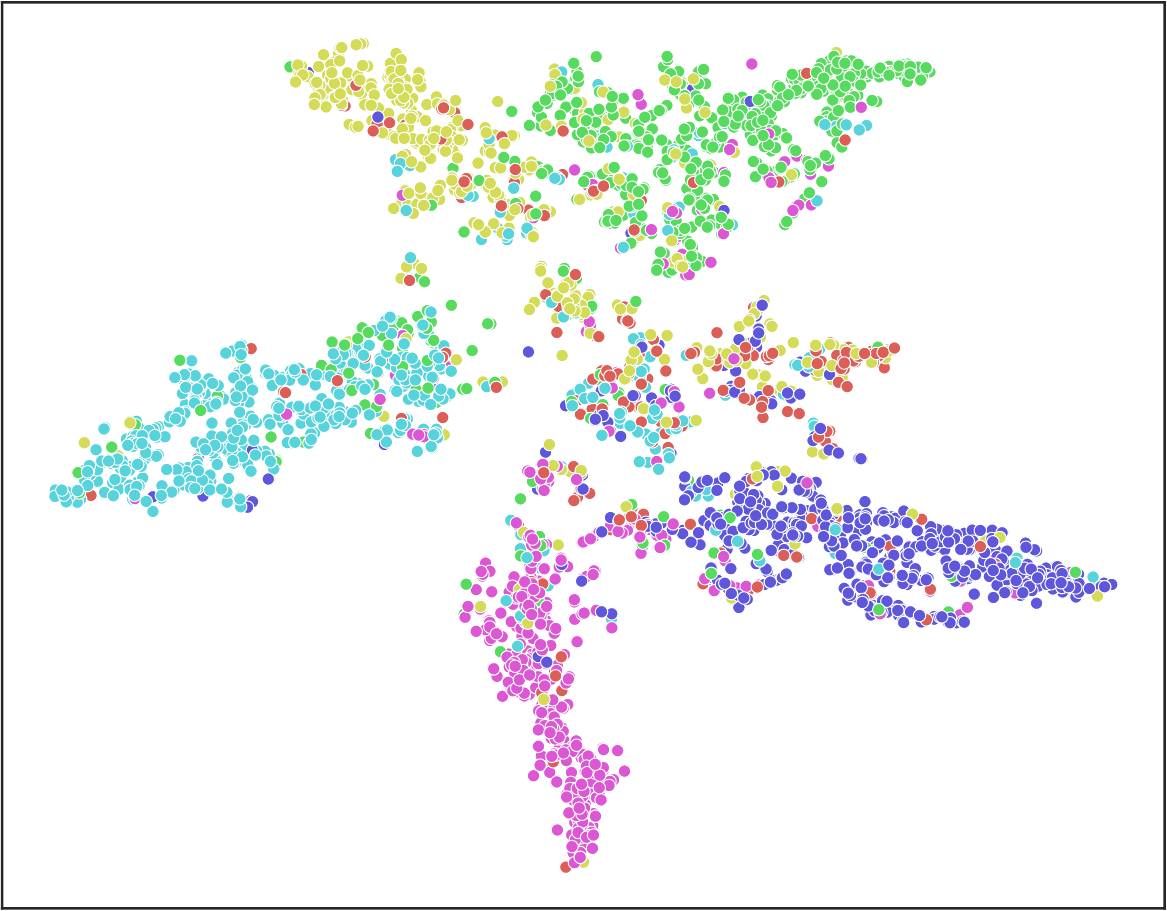}
\end{minipage}
}
\subfigure[Pubmed(Original Feature)]{
\begin{minipage}[b]{0.3\linewidth}
\centering
\includegraphics[width=4.3cm]{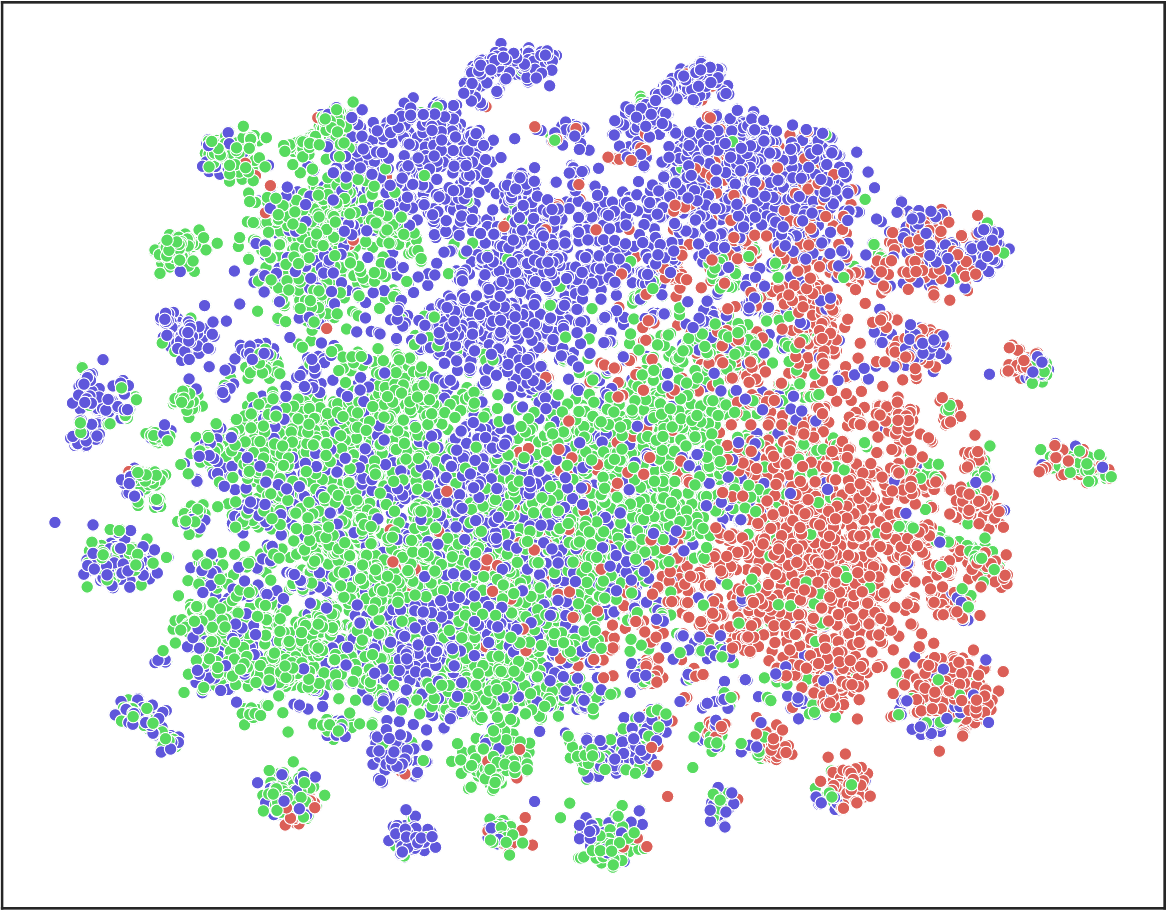}
\end{minipage}
}
\subfigure[Pubmed(SGC)]{
\begin{minipage}[b]{0.3\linewidth}
\centering
\includegraphics[width=4.3cm]{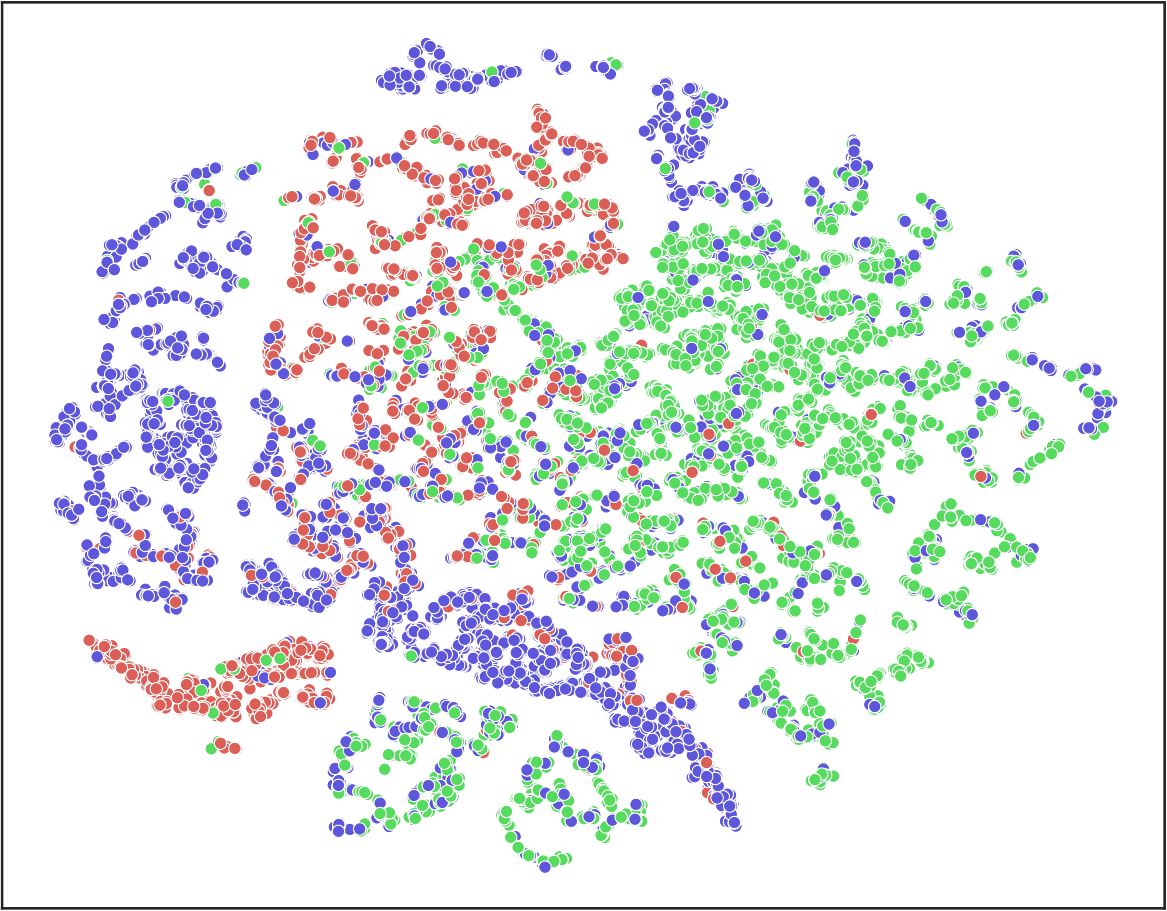}
\end{minipage}
}
\subfigure[Pubmed(Ours)]{
\begin{minipage}[b]{0.3\linewidth}
\centering
\includegraphics[width=4.3cm]{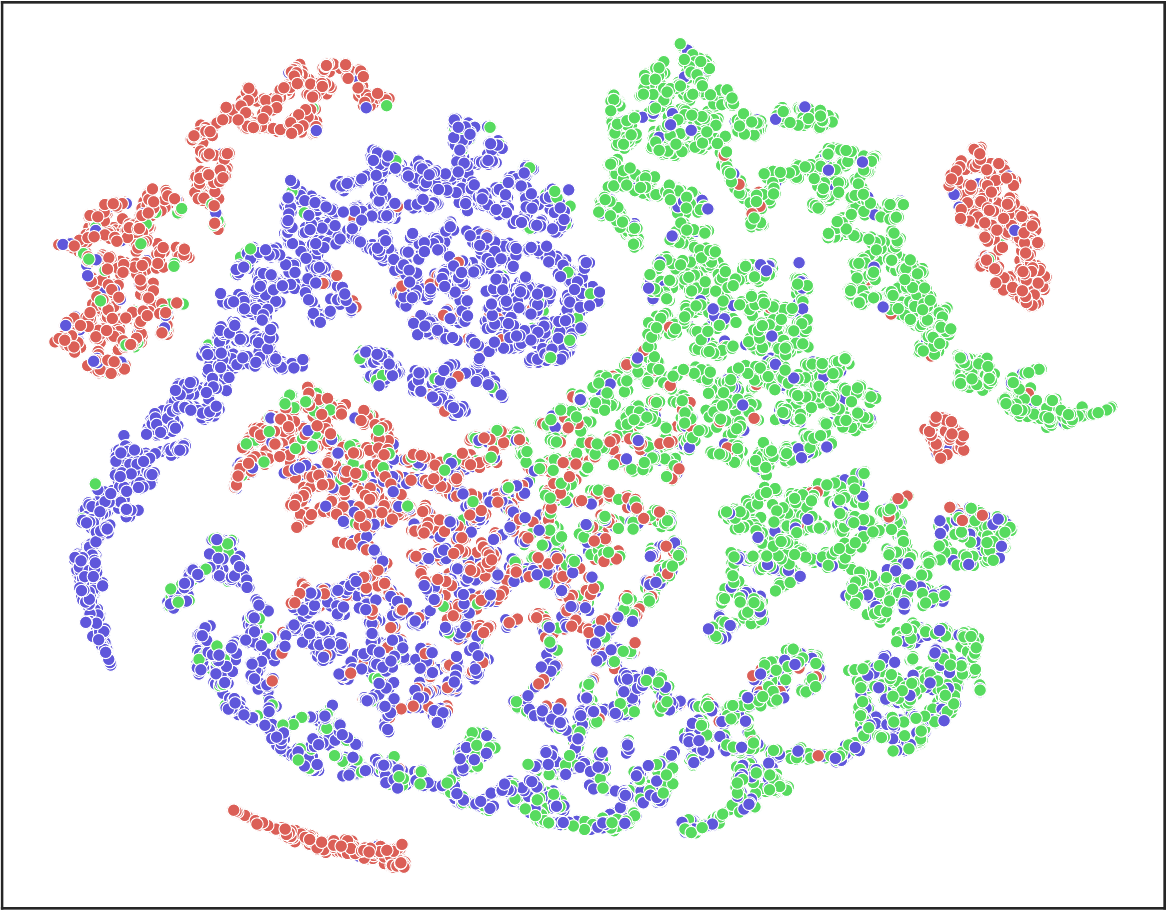}
\end{minipage}
}
\caption{Scatter plots of original features and embeddings output via 32-layer SGC and our model on real-world benchmarks Cora, Citeseer, and Pubmed}
\label{fig_appendix:embeddings_cora_citeseer_pubmed}
\end{figure*}



\begin{figure*}[htbp]
\centering
\subfigure[CoauthorCS(Original Feature)]{
\begin{minipage}[b]{0.3\linewidth}
\centering
\includegraphics[width=4.3cm]{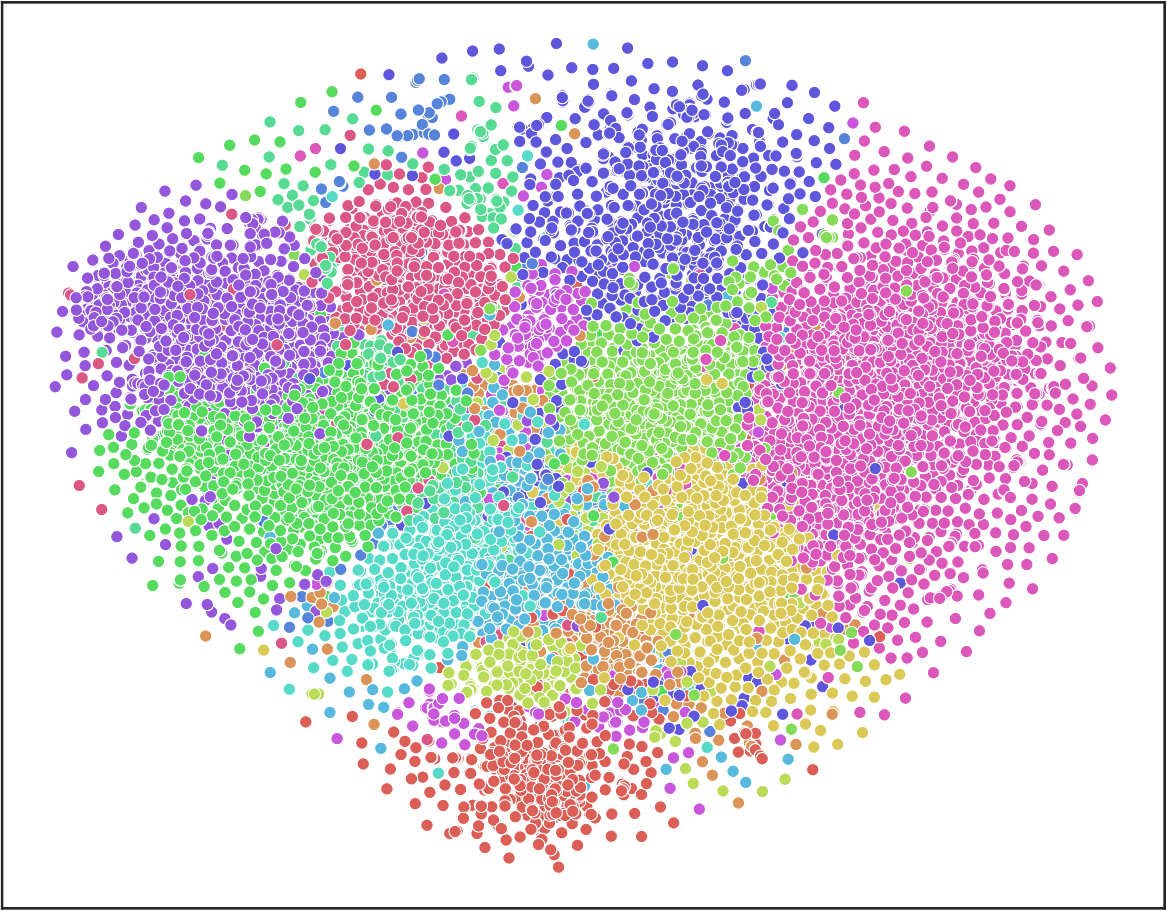}
\end{minipage}
}
\subfigure[CoauthorCS(SGC)]{
\begin{minipage}[b]{0.3\linewidth}
\centering
\includegraphics[width=4.3cm]{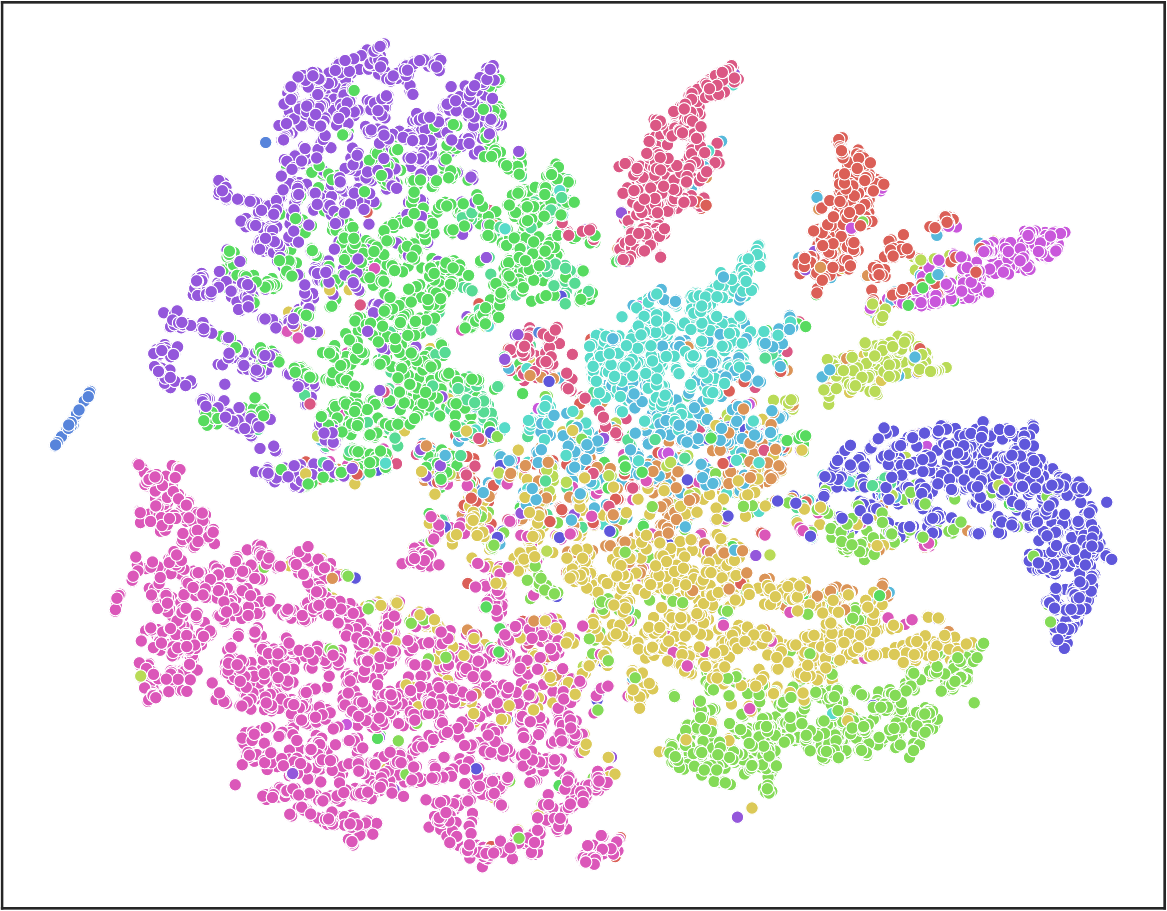}
\end{minipage}
}
\subfigure[CoauthorCS(Ours)]{
\begin{minipage}[b]{0.3\linewidth}
\centering
\includegraphics[width=4.3cm]{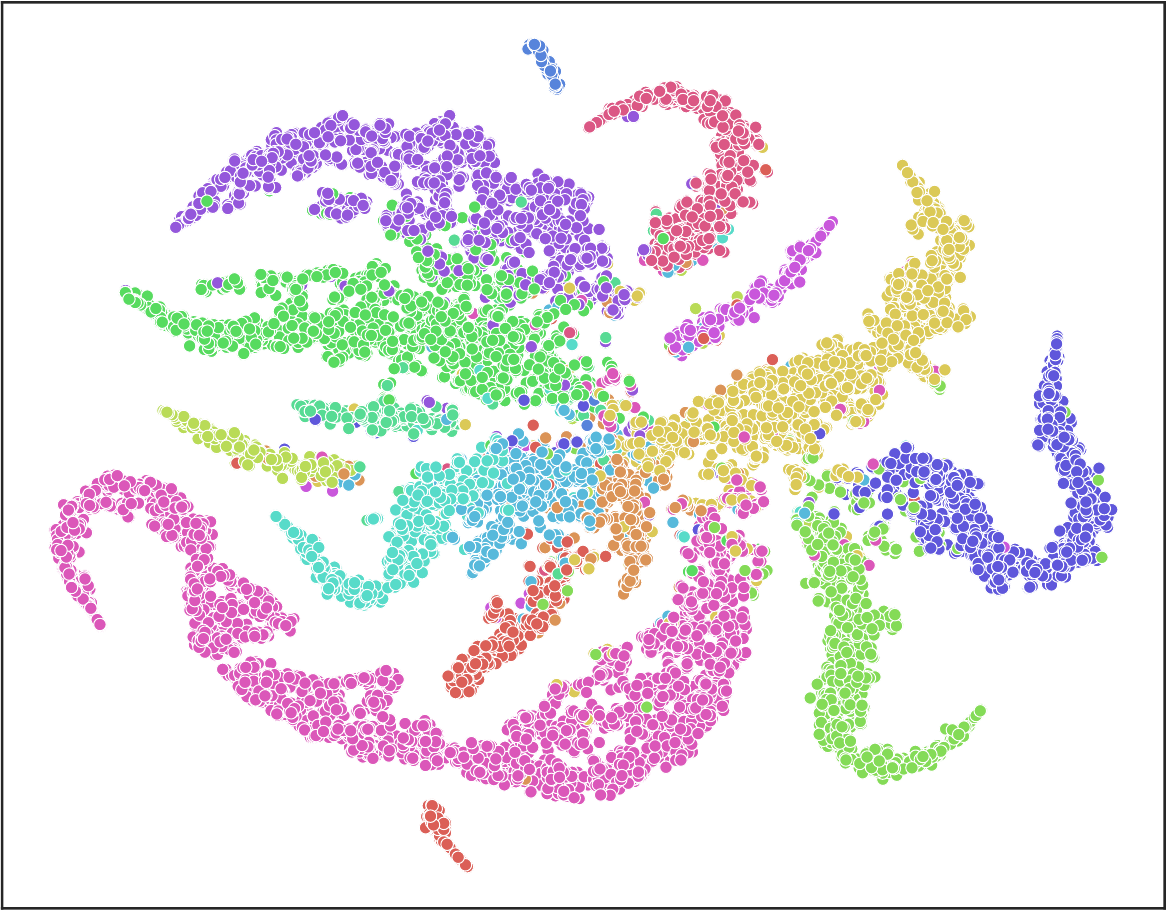}
\end{minipage}
}
\subfigure[CoauthorPhysics(Original Feature)]{
\begin{minipage}[b]{0.3\linewidth}
\centering
\includegraphics[width=4.3cm]{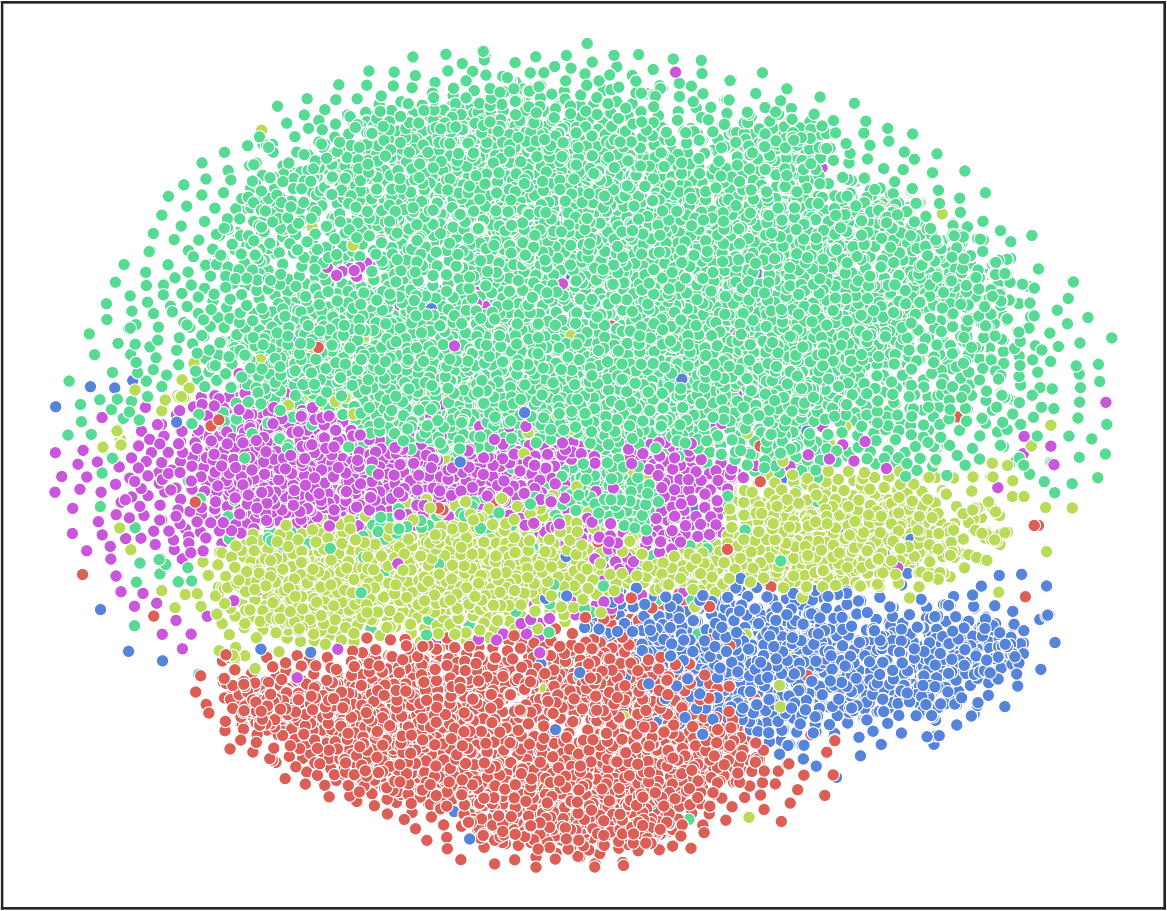}
\end{minipage}
}
\subfigure[CoauthorPhysics(SGC)]{
\begin{minipage}[b]{0.3\linewidth}
\centering
\includegraphics[width=4.3cm]{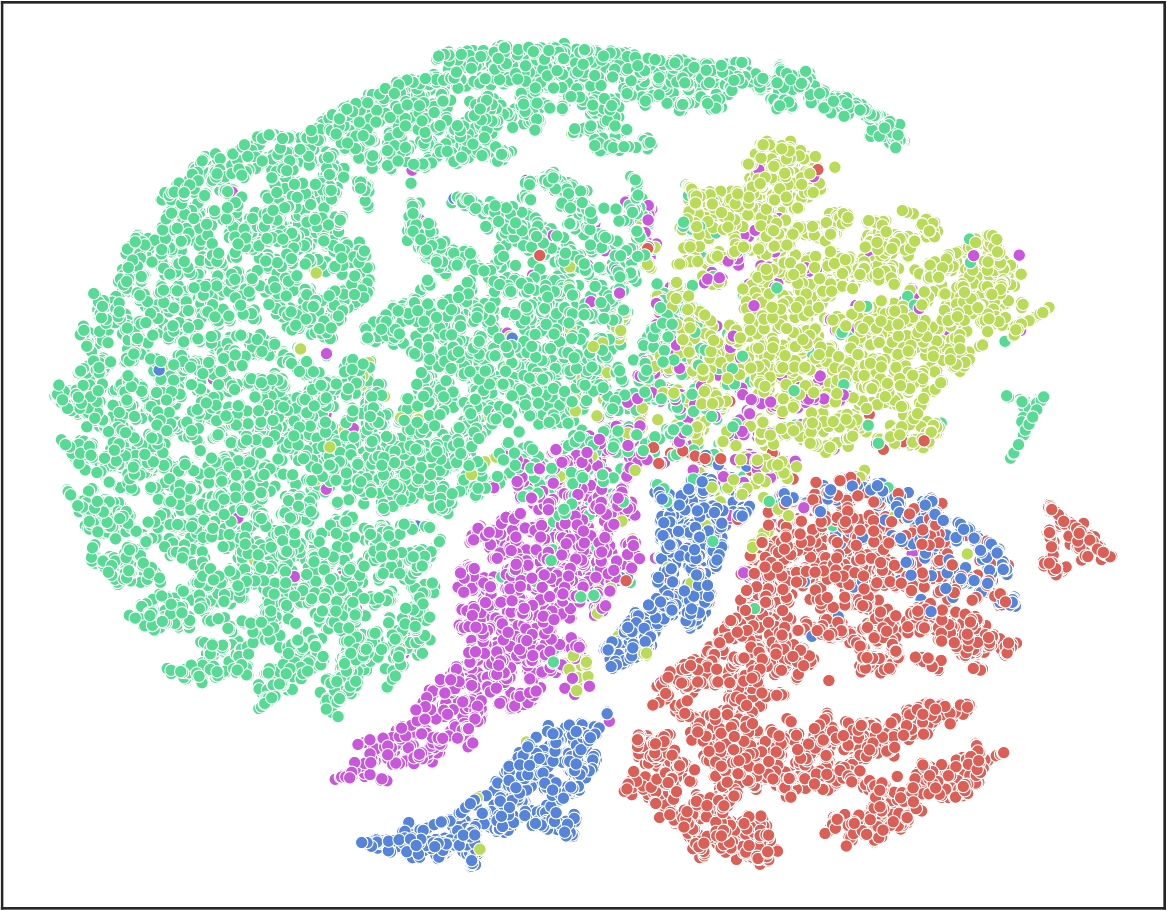}
\end{minipage}
}
\subfigure[CoauthorPhysics(Ours)]{
\begin{minipage}[b]{0.3\linewidth}
\centering
\includegraphics[width=4.3cm]{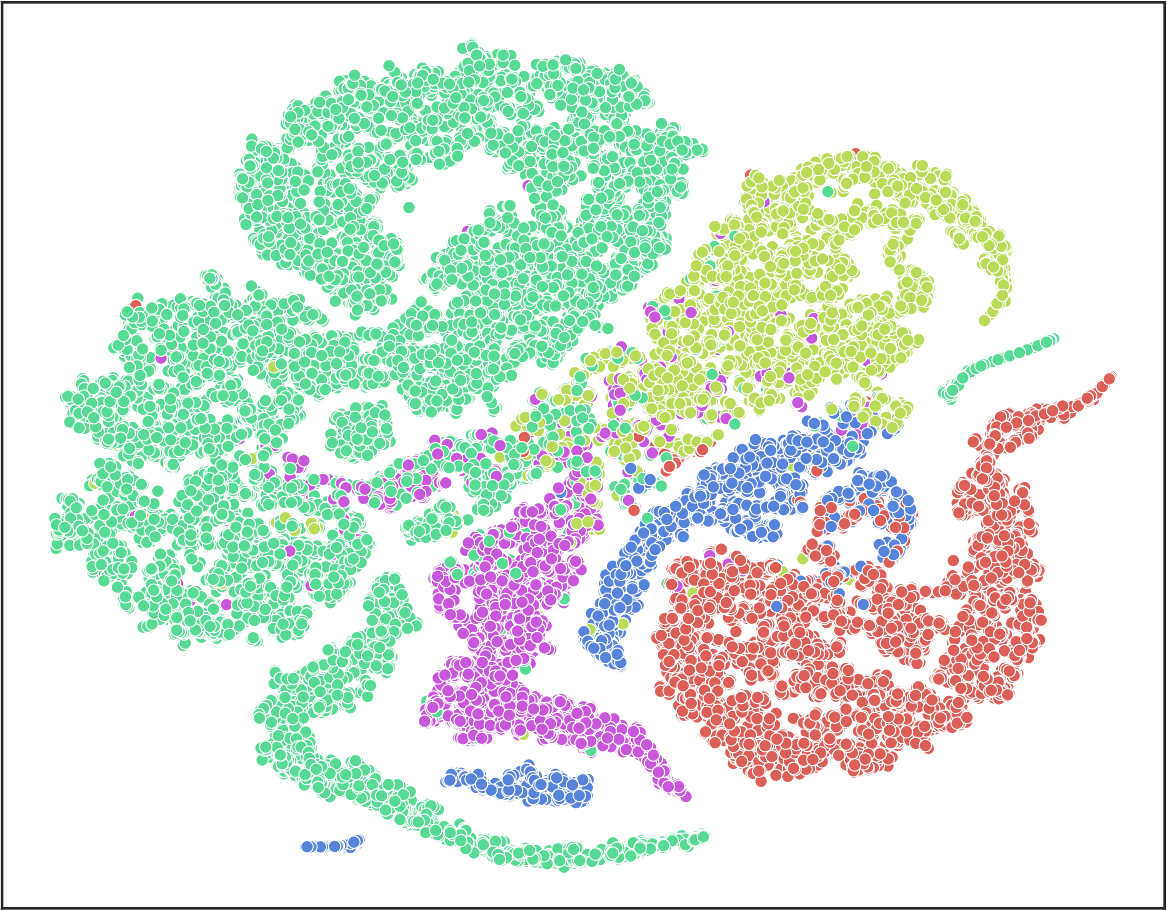}
\end{minipage}
}
 \subfigure[AmazonComputers(Original Feature)]{
\begin{minipage}[b]{0.3\linewidth}
\centering
\includegraphics[width=4.3cm]{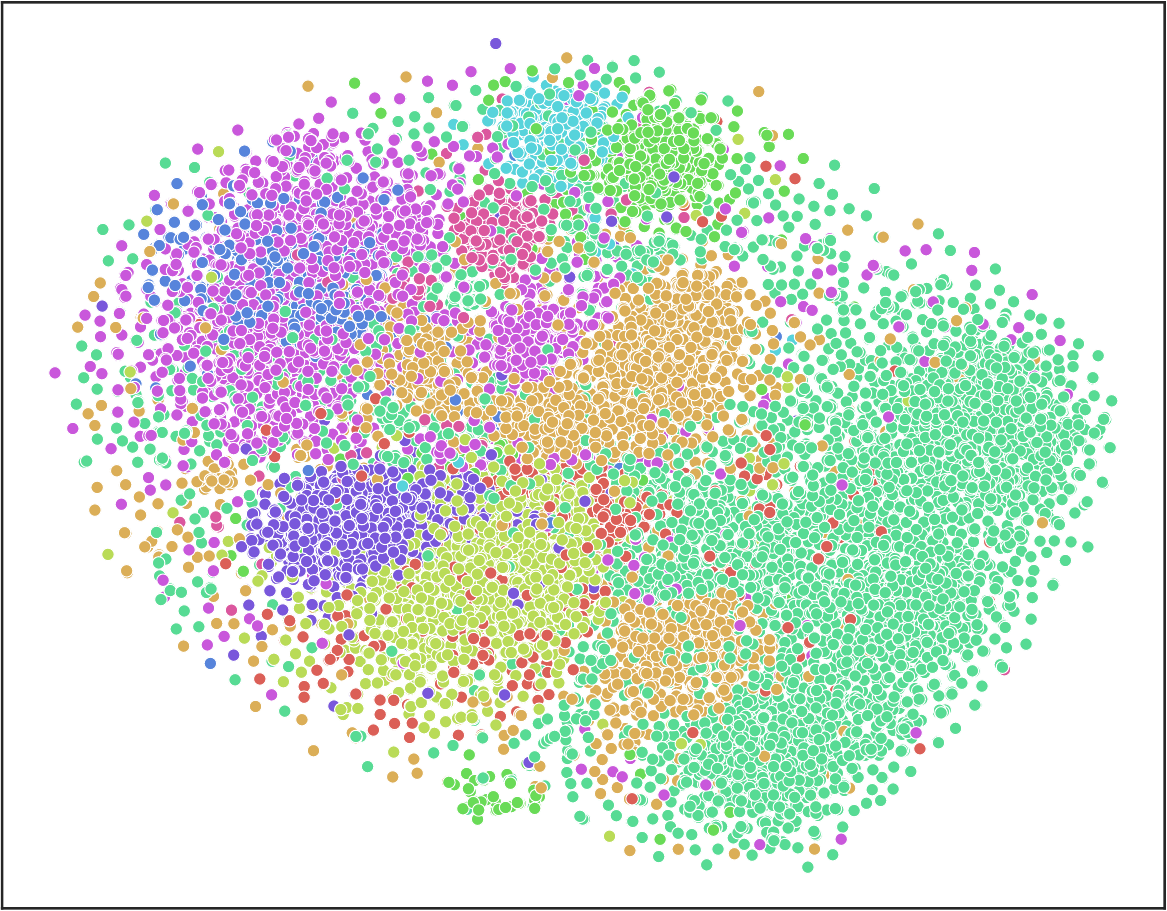}
\end{minipage}
}
\subfigure[AmazonComputers(SGC)]{
\begin{minipage}[b]{0.3\linewidth}
\centering
\includegraphics[width=4.3cm]{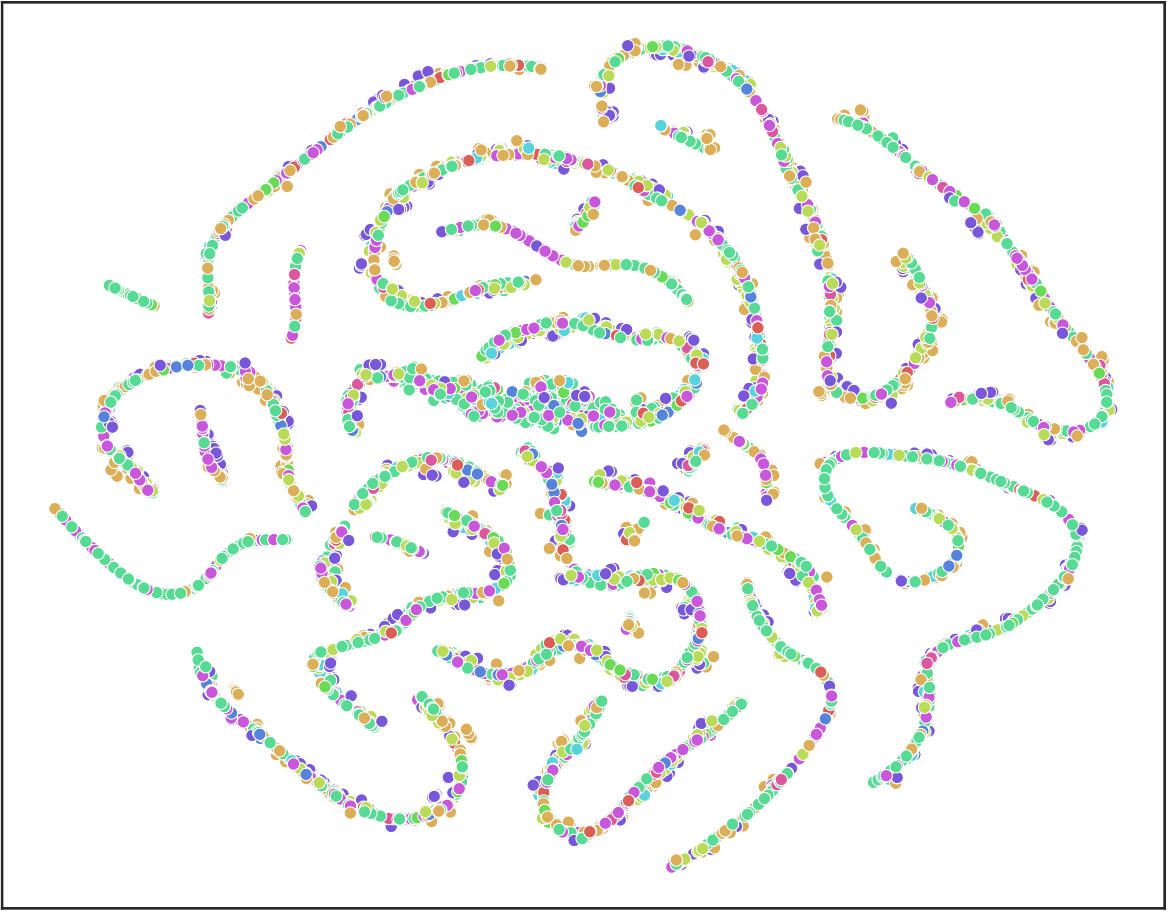}
\end{minipage}
}
\subfigure[AmazonComputers(Ours)]{
\begin{minipage}[b]{0.3\linewidth}
\centering
\includegraphics[width=4.3cm]{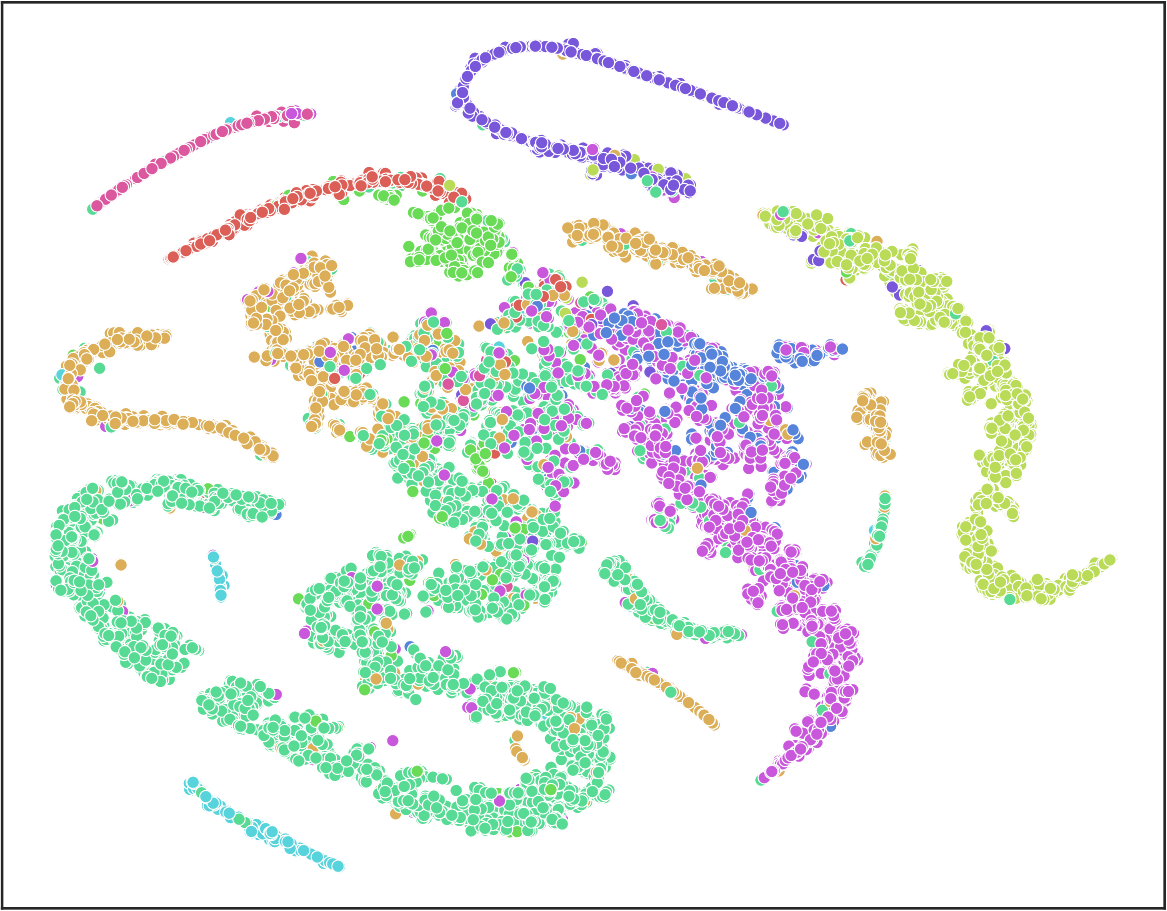}
\end{minipage}
}
 \subfigure[AmazonPhoto(Original Feature)]{
\begin{minipage}[b]{0.3\linewidth}
\centering
\includegraphics[width=4.3cm]{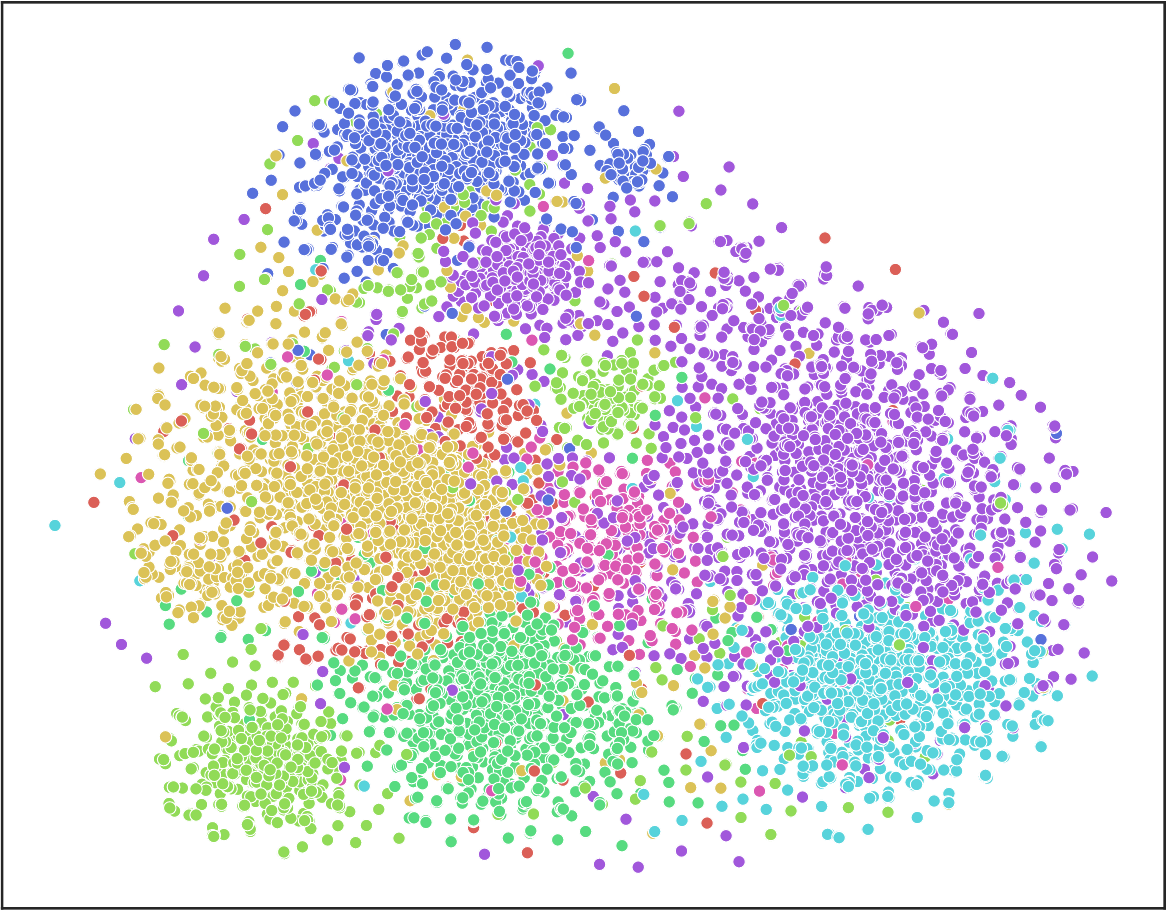}
\end{minipage}
}
\subfigure[AmazonPhoto(SGC)]{
\begin{minipage}[b]{0.3\linewidth}
\centering
\includegraphics[width=4.3cm]{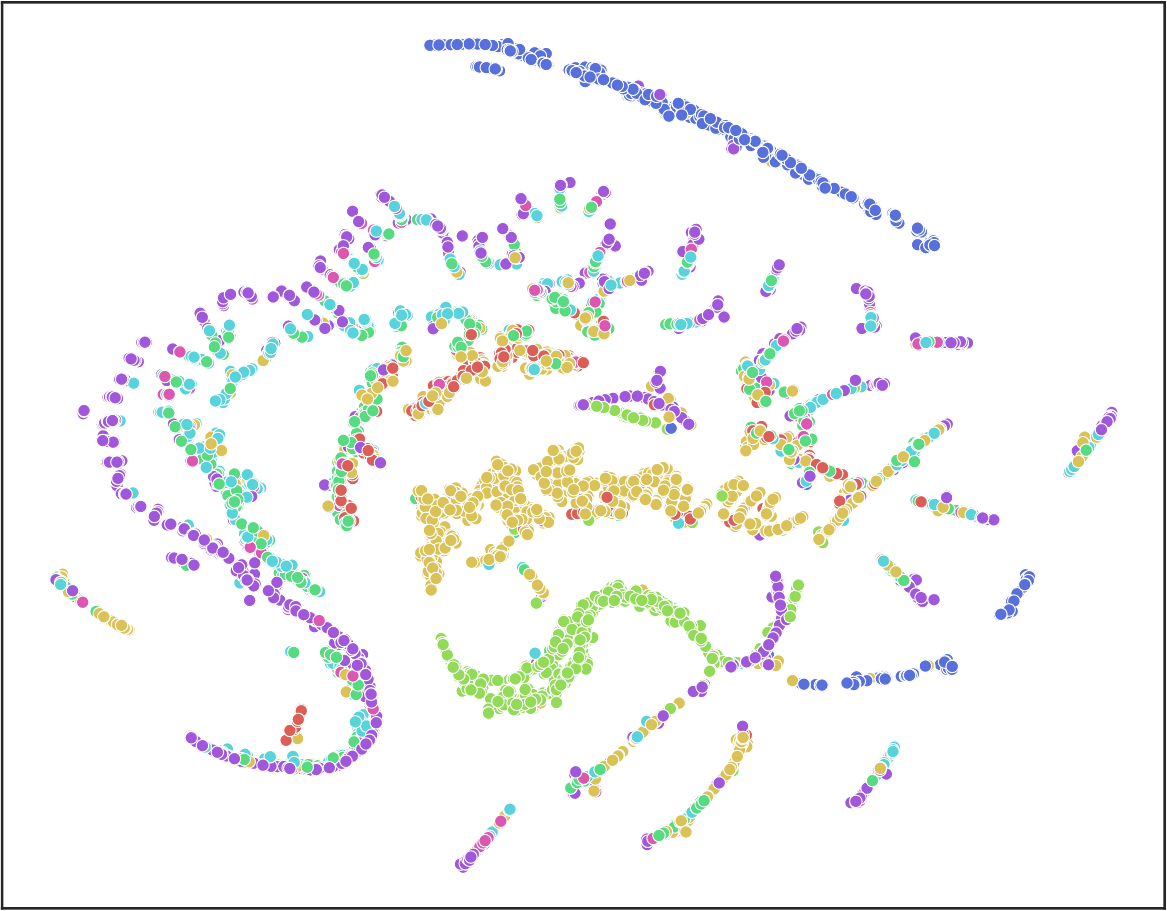}
\end{minipage}
}
\subfigure[AmazonPhoto(Ours)]{
\begin{minipage}[b]{0.3\linewidth}
\centering
\includegraphics[width=4.3cm]{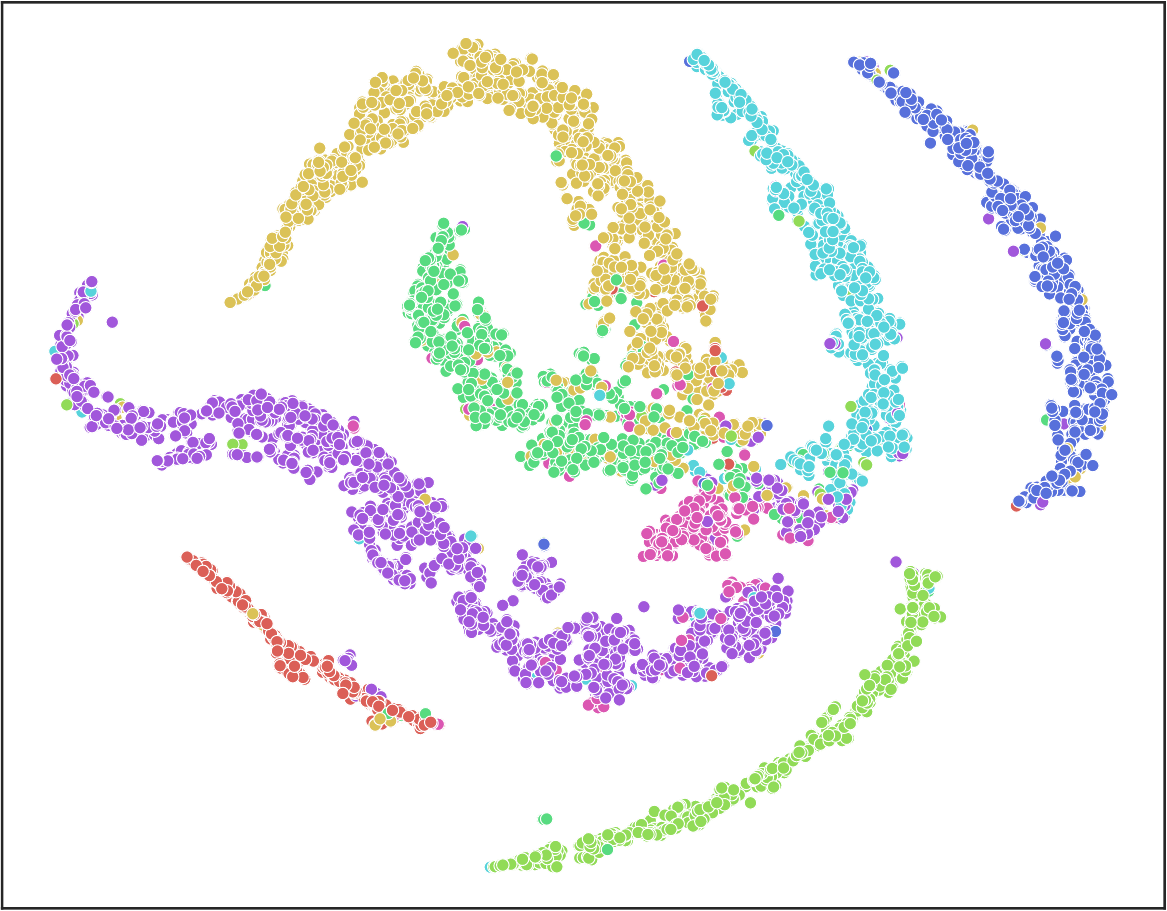}
\end{minipage}
}   
\caption{Scatter plots of original features and embeddings output via 32-layer SGC and our model on real-world benchmarks CS, Physics, Computers, and Photo}
\label{fig_appendix:embeddings_CS_Physics_Computers_Photo}
\end{figure*}

\begin{figure*}[htbp]
\centering
    
\subfigure[Texas(Original Feature)]{
\begin{minipage}[b]{0.3\linewidth}
\centering
\includegraphics[width=4.3cm]{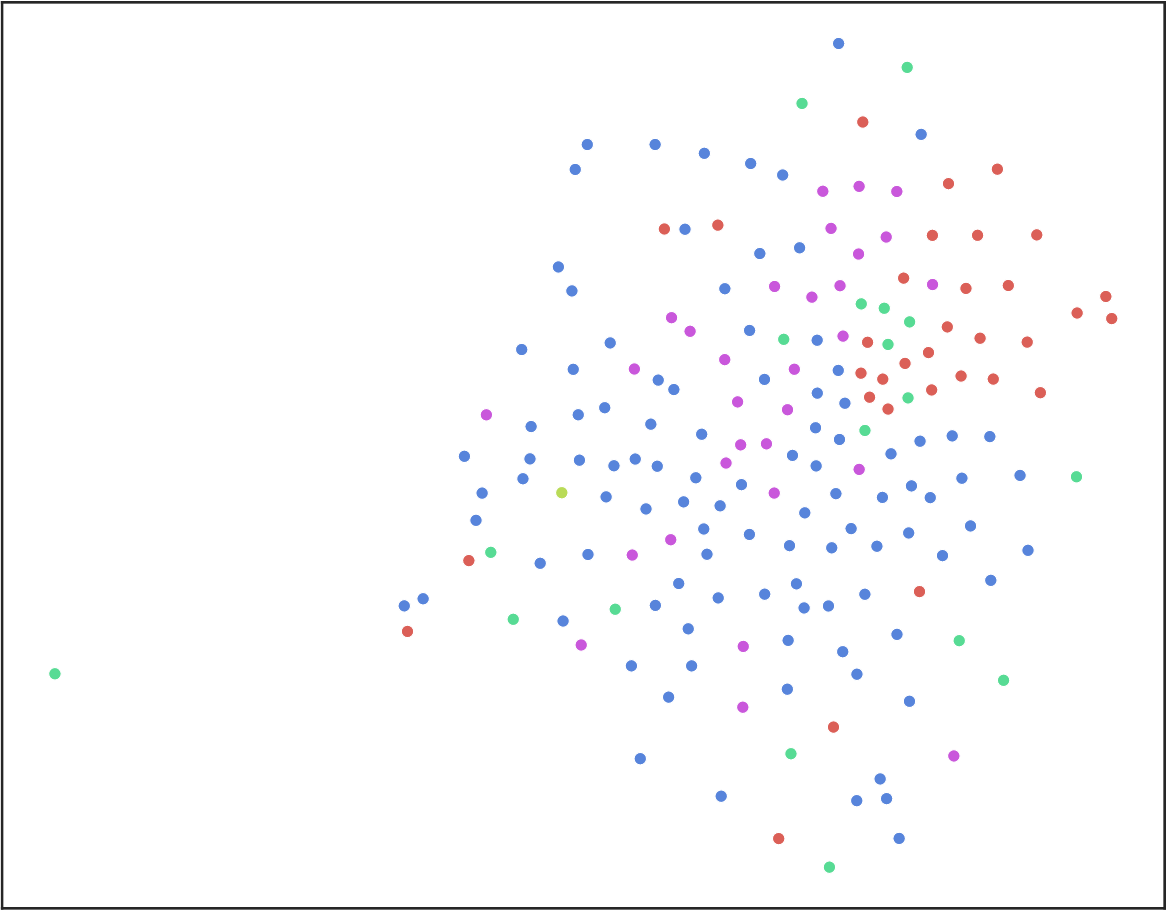}
\end{minipage}
}
\subfigure[Texas(SGC)]{
\begin{minipage}[b]{0.3\linewidth}
\centering
\includegraphics[width=4.3cm]{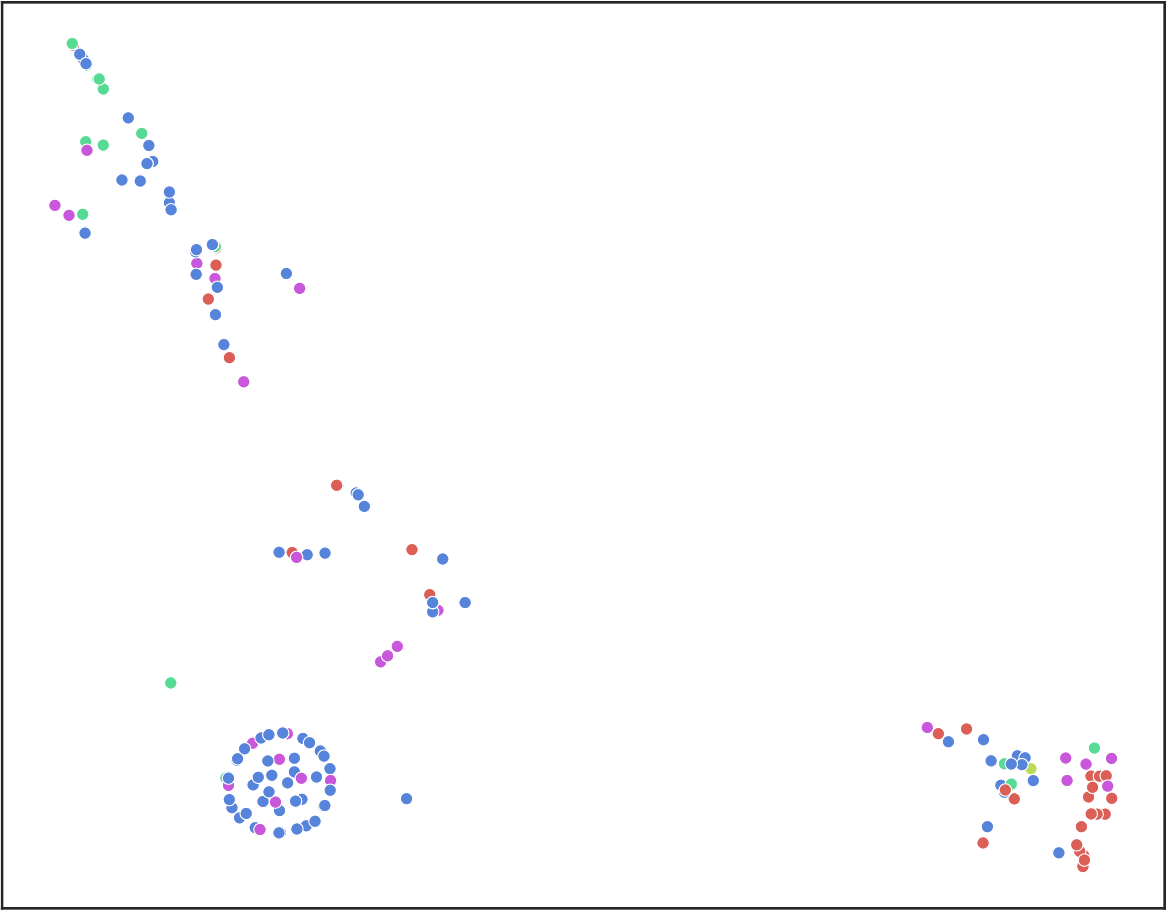}
\end{minipage}
}
\subfigure[Texas(Ours)]{
\begin{minipage}[b]{0.3\linewidth}
\centering
\includegraphics[width=4.3cm]{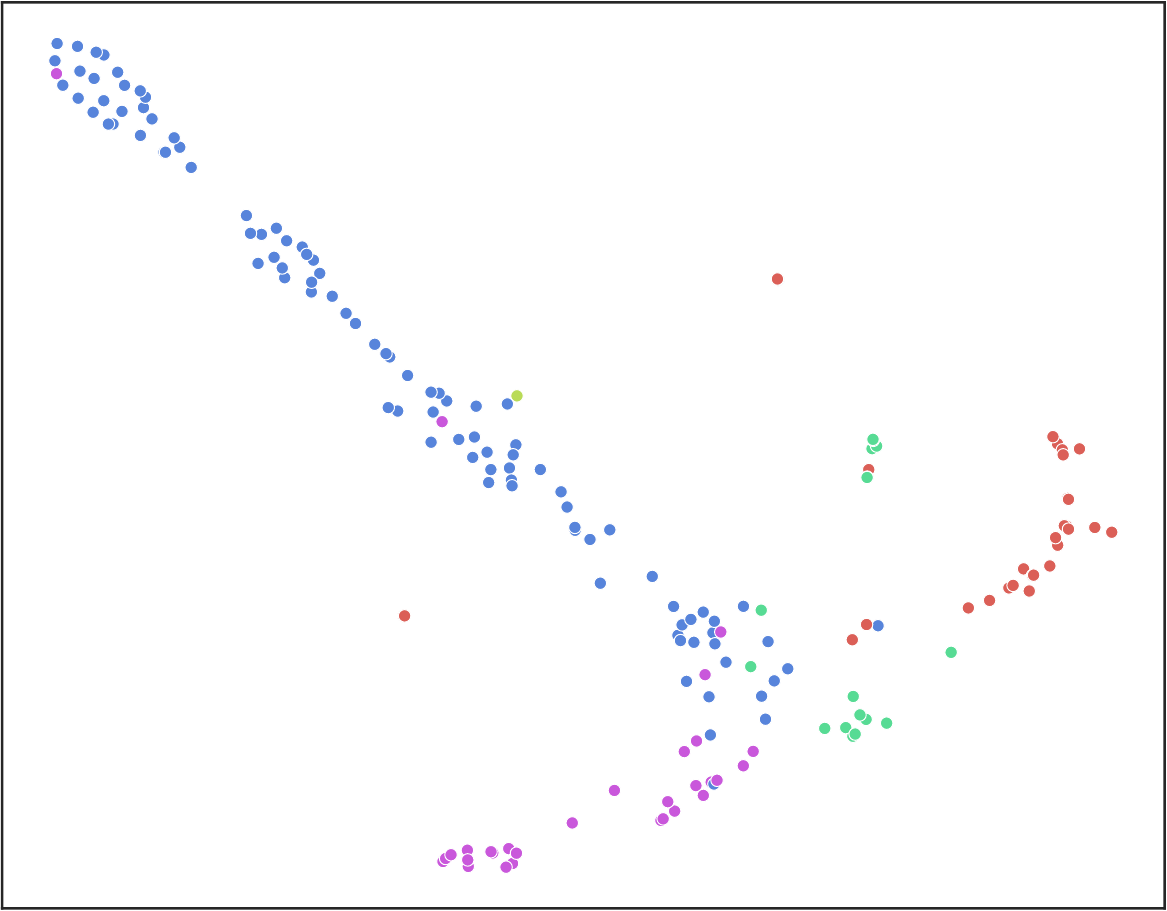}
\end{minipage}
}
\subfigure[Wisconsin(Original Feature)]{
\begin{minipage}[b]{0.3\linewidth}
\centering
\includegraphics[width=4.3cm]{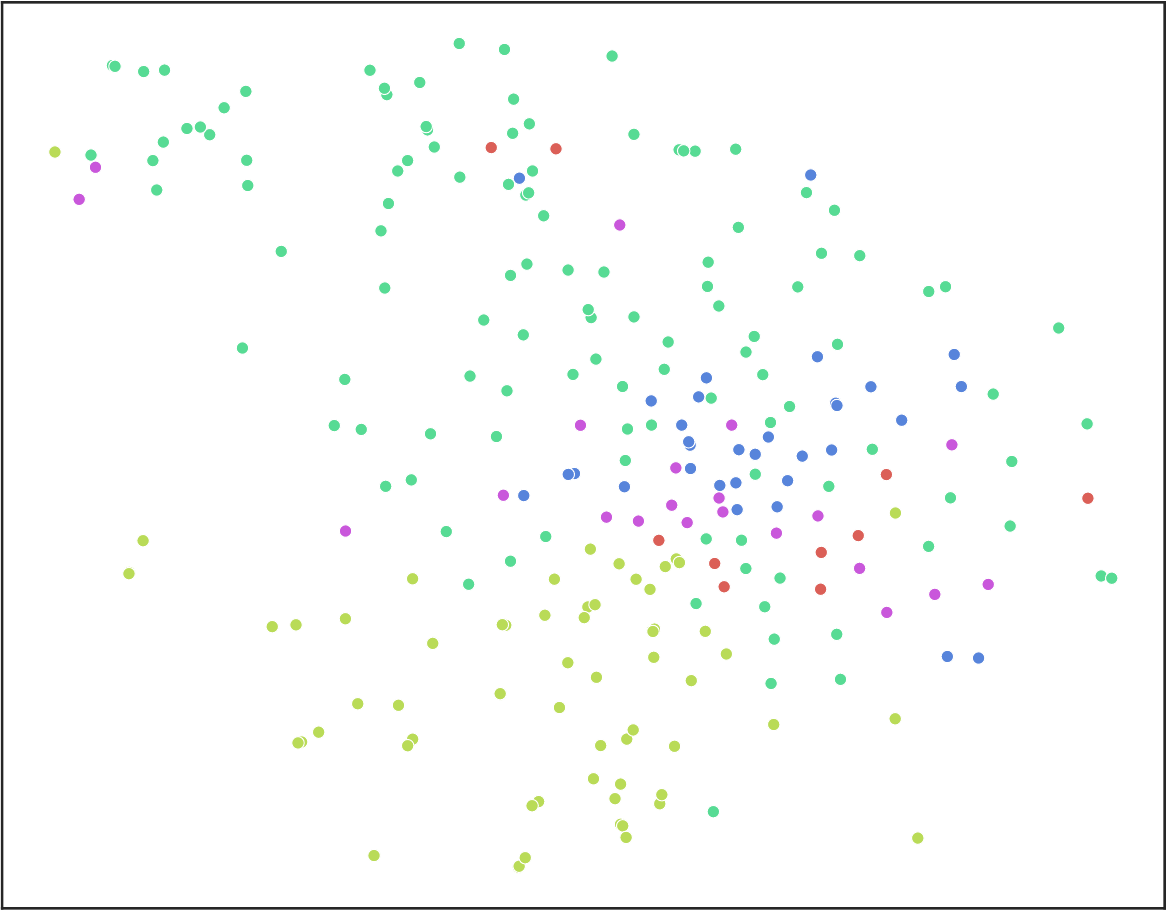}
\end{minipage}
}
\subfigure[Wisconsin(SGC)]{
\begin{minipage}[b]{0.3\linewidth}
\centering
\includegraphics[width=4.3cm]{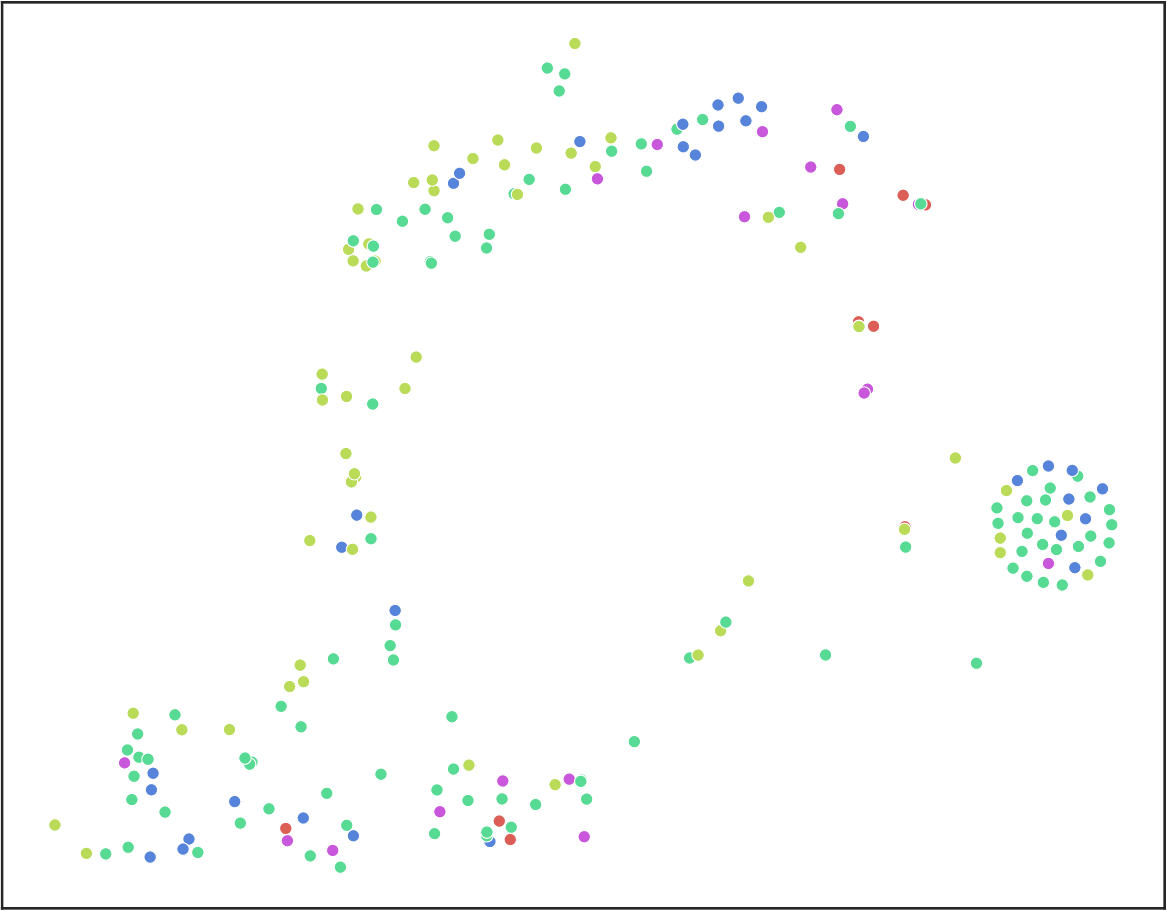}
\end{minipage}
}
\subfigure[Wisconsin(Ours)]{
\begin{minipage}[b]{0.3\linewidth}
\centering
\includegraphics[width=4.3cm]{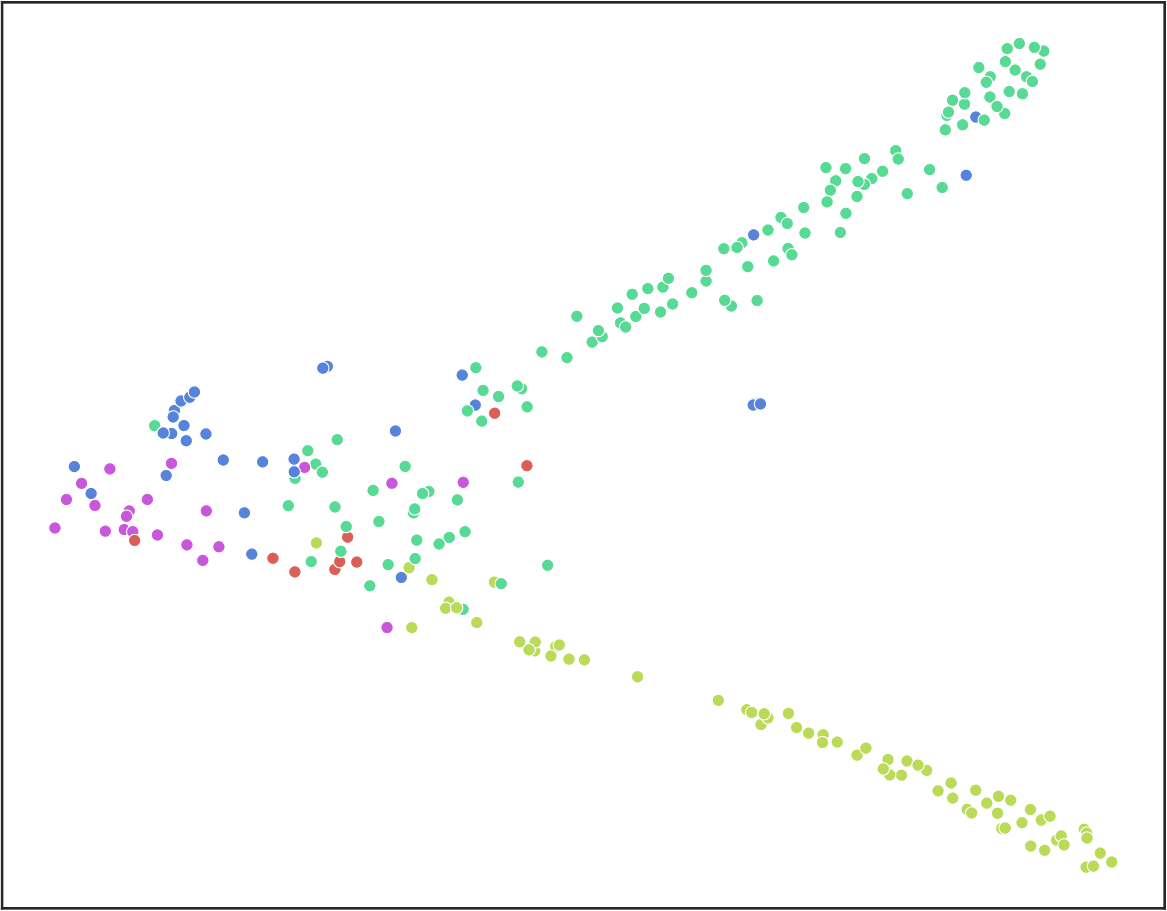}
\end{minipage}
}
\subfigure[Cornell(Original Feature)]{
\begin{minipage}[b]{0.3\linewidth}
\centering
\includegraphics[width=4.3cm]{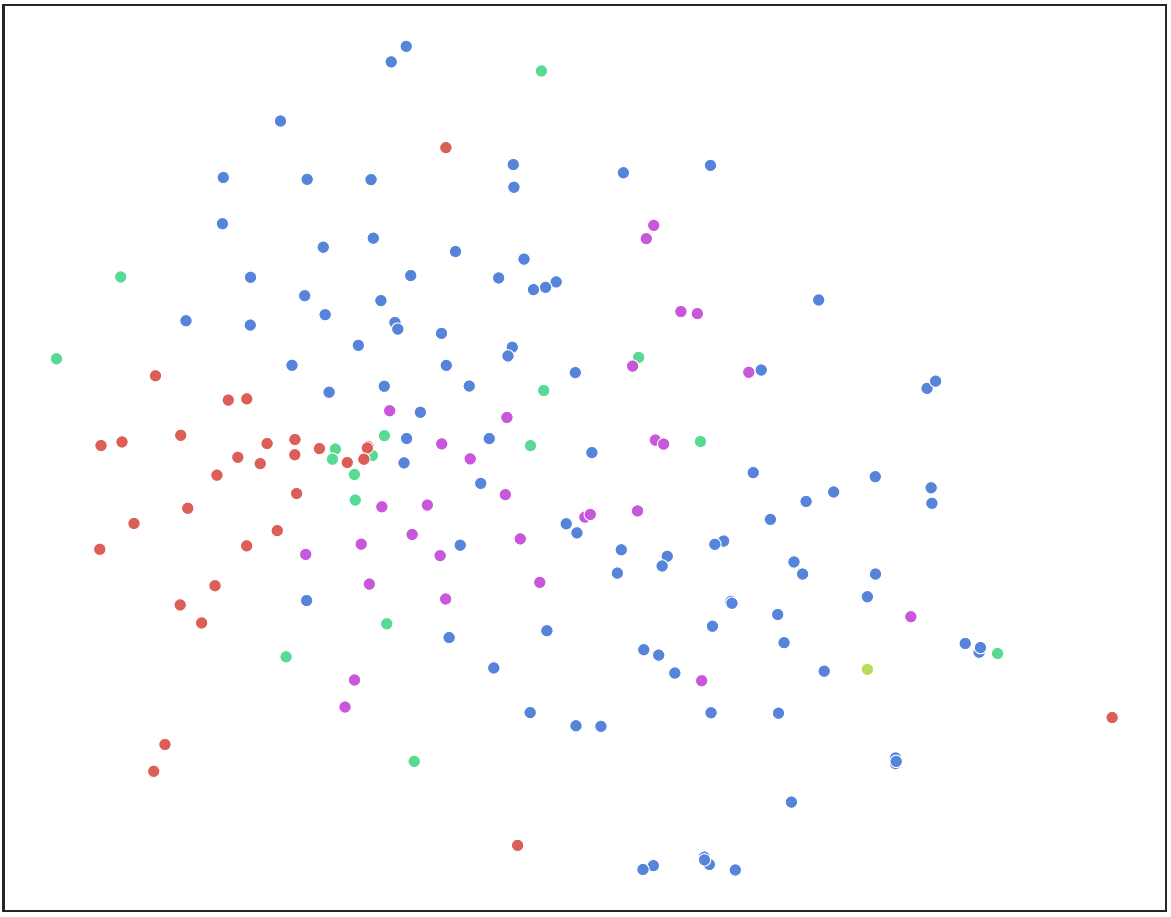}
\end{minipage}
}
\subfigure[Cornell(SGC)]{
\begin{minipage}[b]{0.3\linewidth}
\centering
\includegraphics[width=4.3cm]{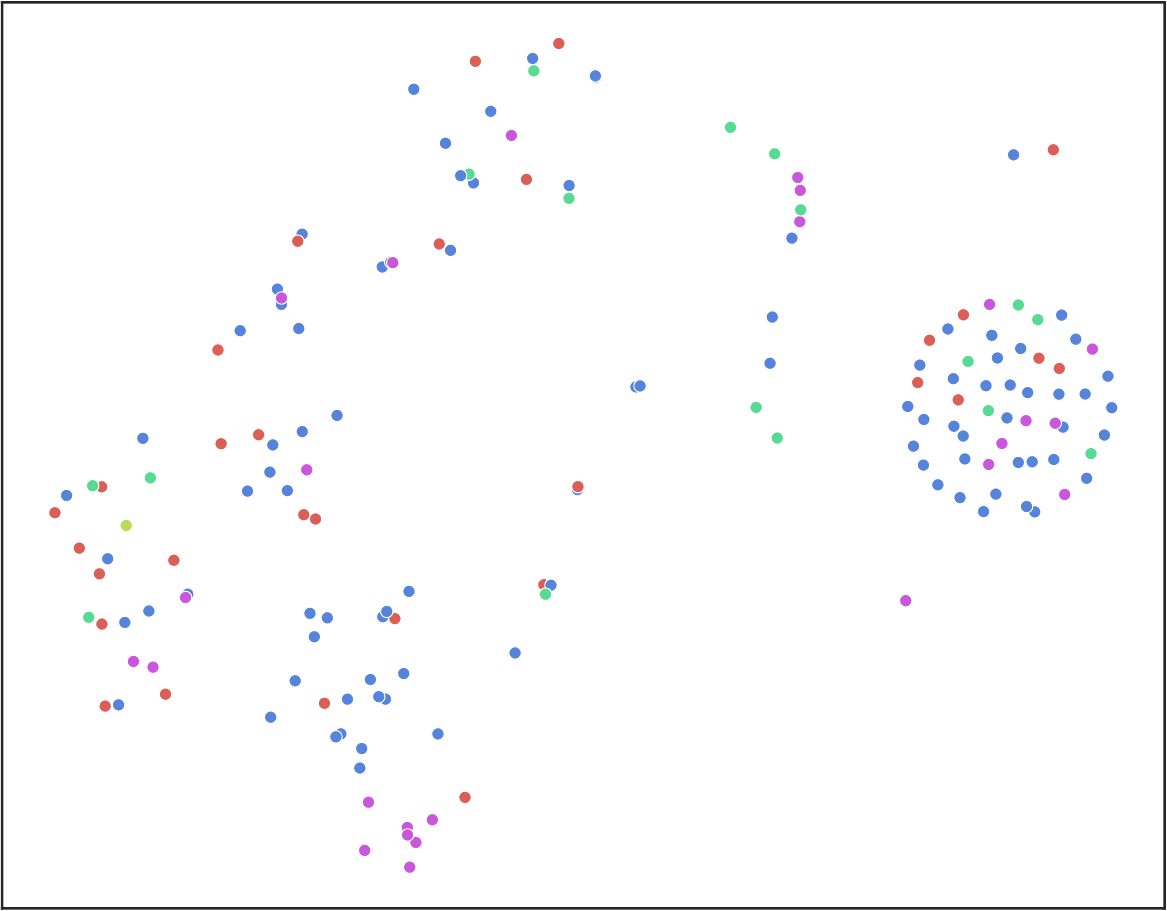}
\end{minipage}
}\subfigure[Cornell(Ours)]{
\begin{minipage}[b]{0.3\linewidth}
\centering
\includegraphics[width=4.3cm]{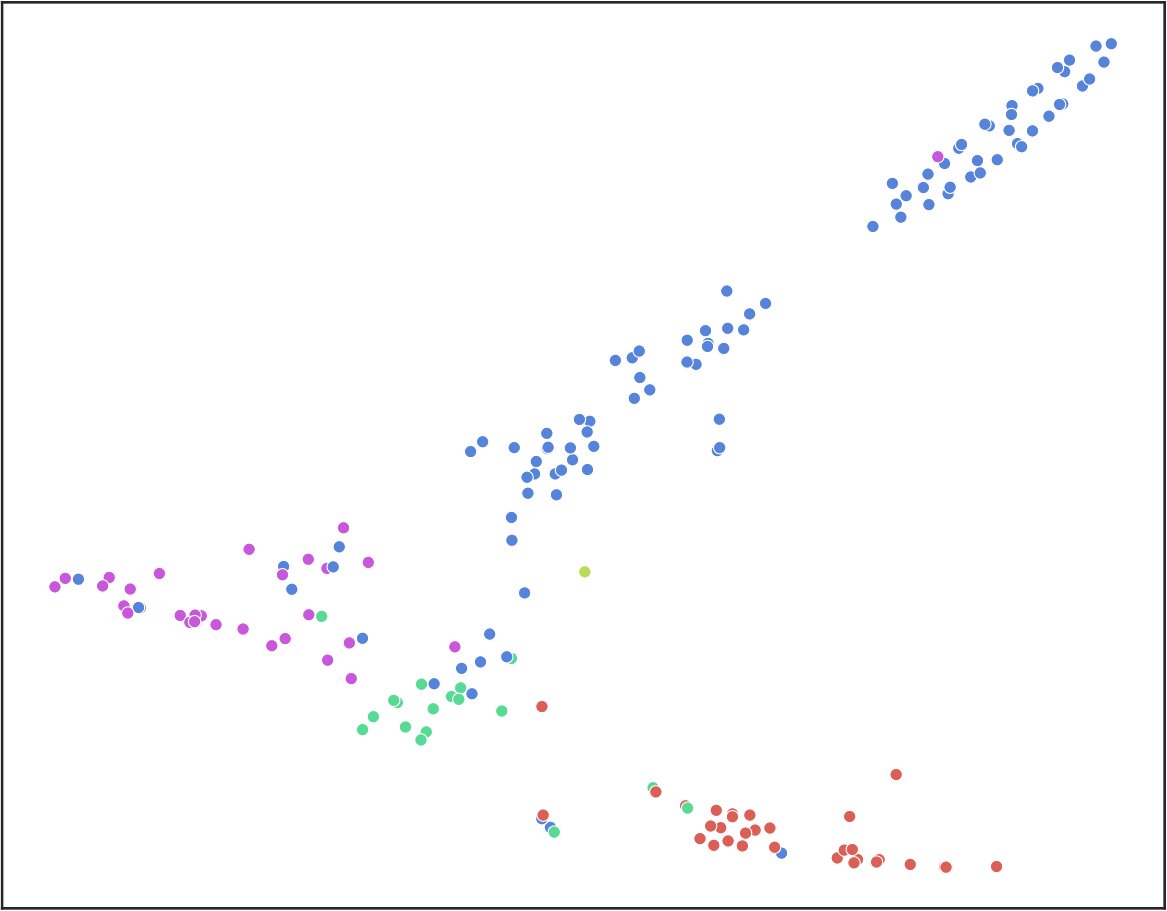}
\end{minipage}
} 
\subfigure[Actor(Original Feature)]{
\begin{minipage}[b]{0.3\linewidth}
\centering
\includegraphics[width=4.3cm]{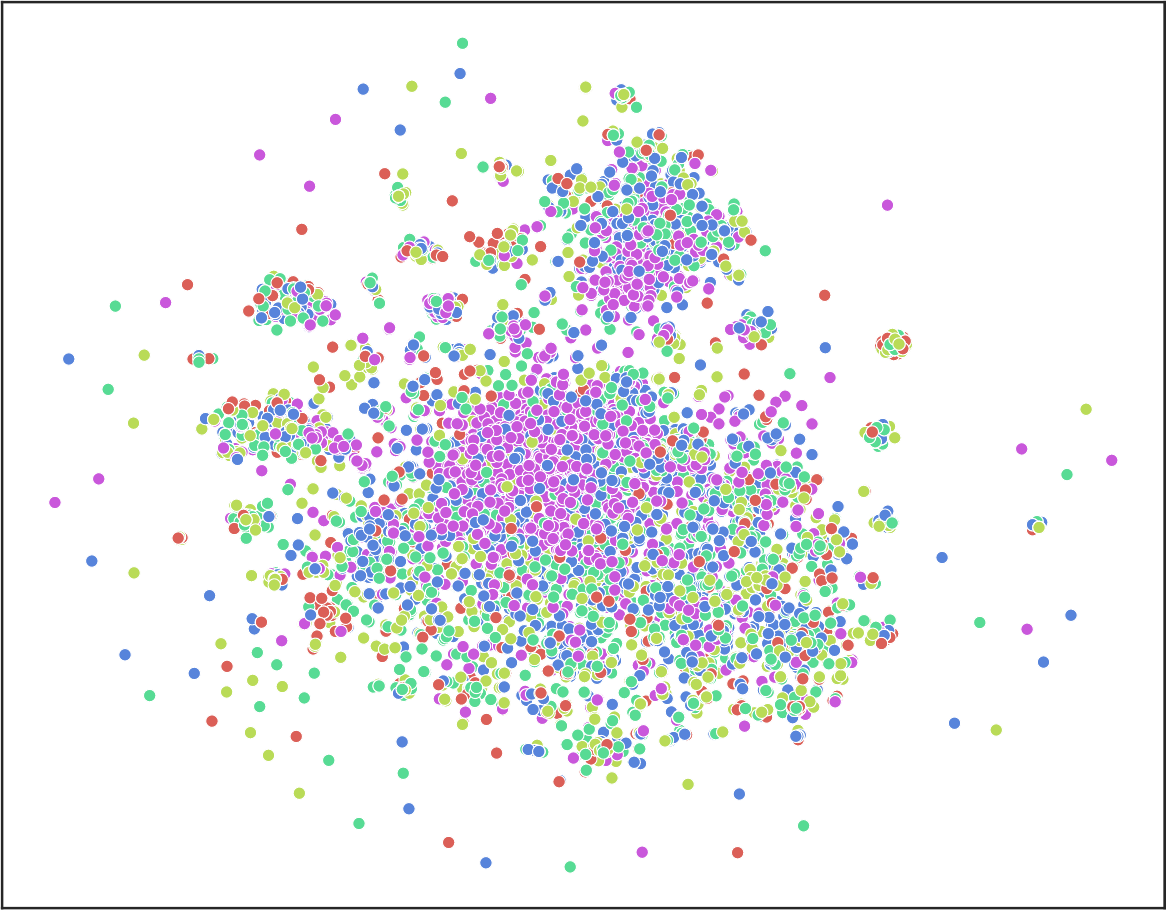}
\end{minipage}
}
\subfigure[Actor(SGC)]{
\begin{minipage}[b]{0.3\linewidth}
\centering
\includegraphics[width=4.3cm]{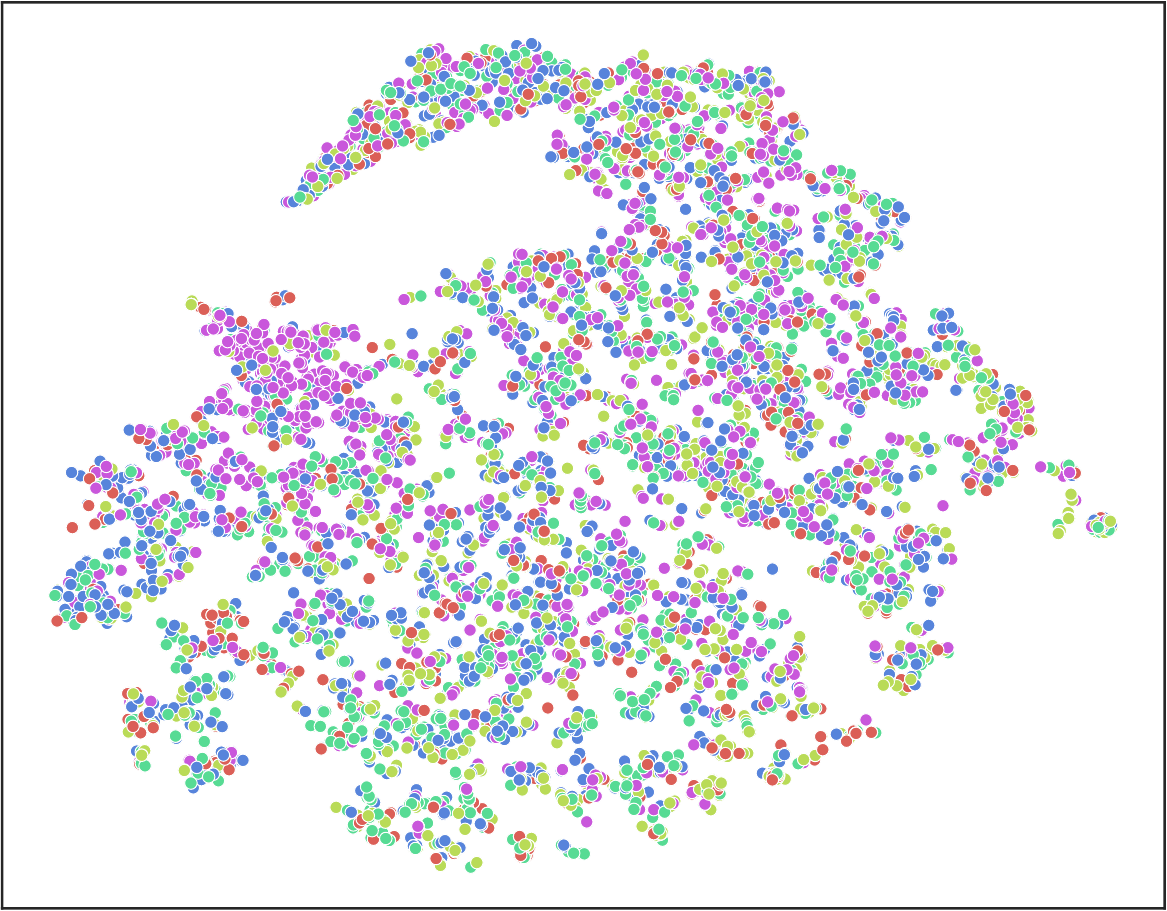}
\end{minipage}
}
\subfigure[Actor(Ours)]{
\begin{minipage}[b]{0.3\linewidth}
\centering
\includegraphics[width=4.3cm]{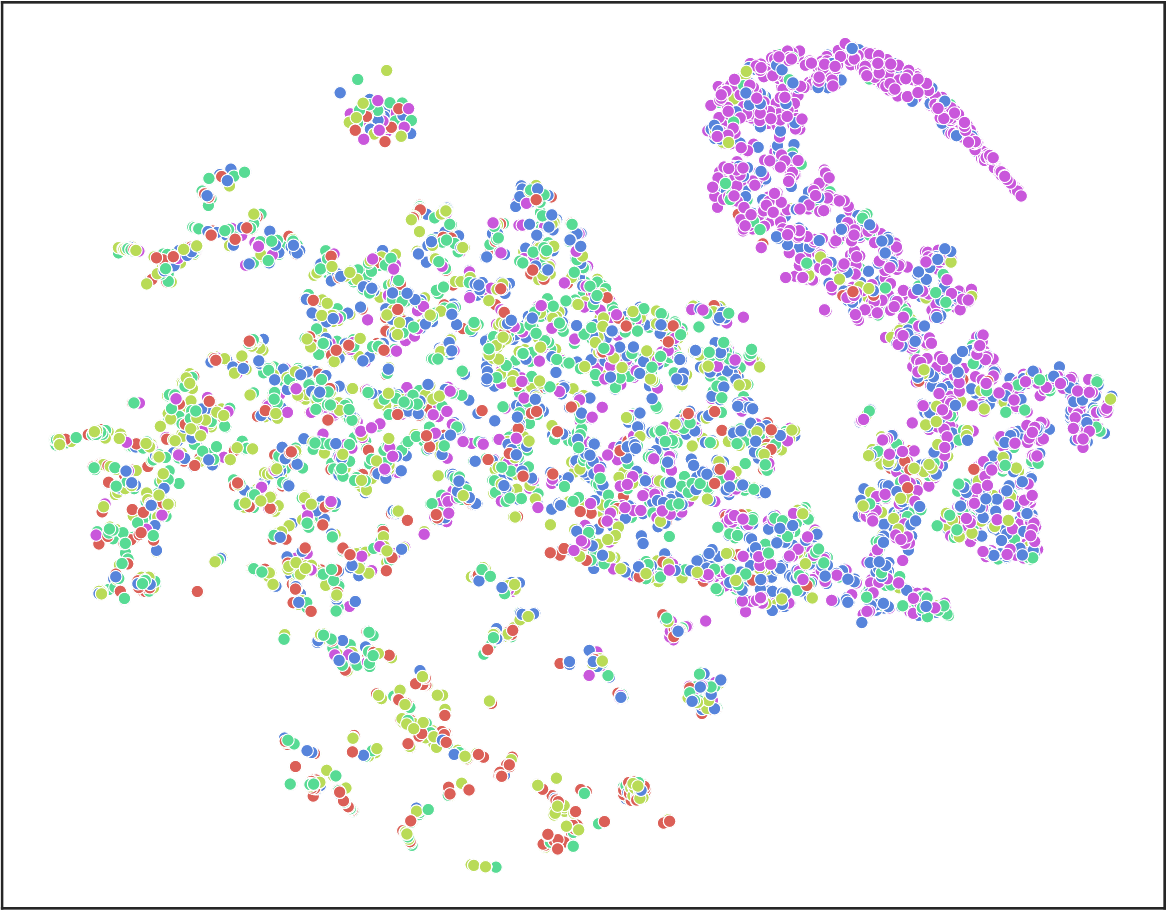}
\end{minipage}
}   
\caption{Scatter plots of original features and embeddings output via 32-layer SGC and our model on real-world benchmarks Texas, Wisconsin, Cornell, and Actor}
\label{fig_appendix:embeddings_texas_wisconsin_cornell_actor}
\end{figure*}

\end{document}